
\documentclass{article}

\usepackage[preprint]{icml2024}
\usepackage{caption}
\usepackage{subcaption}
\usepackage{microtype}
\usepackage{graphicx}
\usepackage{booktabs} %
\usepackage{empheq}
\usepackage{url}
\usepackage{hyperref}

\usepackage[normalem]{ulem}
\usepackage{amsmath,amsfonts,bm}
\usepackage{mathtools}
\usepackage{amsthm}
\usepackage[nameinlink,capitalise]{cleveref}
\usepackage[acronym,nowarn,section,nogroupskip,nonumberlist]{glossaries}
\usepackage{cancel}
\usepackage{tikz}
\usepackage{nccmath}
\usepackage{minitoc}
\usepackage[breakable]{tcolorbox}
\usepackage{centernot}
\usepackage{pifont}
\usepackage{thm-restate}
\usepackage{multirow}
\usepackage{multicol}
\usepackage{adjustbox}
\usepackage{minitoc}

\usepackage{amsmath,amsfonts,bm}
\usepackage{booktabs}
\usepackage{inconsolata}
\global\long\def\proptoo#1{\stackrel{#1}{\propto}}
\newcommand{\colonpropto}{\mathrel{:\propto}}

\global\long\def\ss{\mathfrak{s}}%
\global\long\def\ssb{\boldsymbol{\mathfrak{s}}}%

\makeatletter
\def\th@remark{%
  \thm@headfont{\bfseries}%
  \normalfont %
  \thm@preskip\topsep \divide\thm@preskip\tw@
  \thm@postskip\thm@preskip
}
\makeatother

\crefname{section}{Sec.}{Sec.}
\crefname{appendix}{App.}{App.}
\crefname{algorithm}{Alg.}{Alg.}
\crefname{figure}{Fig.}{Fig.}
\crefname{definition}{Def.}{Def.}
\crefname{proposition}{Prop.}{Prop.}
\crefname{theorem}{Thm.}{Thm.}

\newcommand{\vheader}{\vspace*{-.125cm}}

\newcommand{\sigmacond}[1]{\sigma}%
\newcommand{\ocond}{\bo_{1:T}}

\newcommand{\phishort}{\text{potential }}
\newcommand{\phishortnospace}{\text{potential}}
\newcommand{\phishorts}{\text{potentials }}
\newcommand{\phishortsnospace}{\text{potentials}}
\newcommand{\bigphishort}{\text{Potential }}
\newcommand{\bigphishorts}{\text{Potentials }}

\newcommand{\optornot}{{*}}
\newcommand{\smcargs}{\bS}%
\newcommand{\zsigma}{\mathcal{Z}_{\sigma}}

\newcommand{\smcprop}{q_{\textsc{smc}}}
\newcommand{\smctgt}{{\si}_{\textsc{smc}}}

\newcommand{\tsmctgt}{\tilde{\si}_{\textsc{smc}}}

\newcommand{\sigmaot}{\sigma_{o_T}}

\hypersetup{
    colorlinks,
    linkcolor={blue!50!black},
    citecolor={blue!50!black},
    urlcolor={blue!80!black},
}
\let\originalleft\left
\let\originalright\right
\renewcommand{\left}{\mathopen{}\mathclose\bgroup\originalleft}
\renewcommand{\right}{\aftergroup\egroup\originalright}
\global\long\def\pv#1#2{\left(#1\,\middle\rvert\,#2\right)}%
\global\long\def\om{\omega}%
\global\long\def\fr#1#2{\frac{#1}{#2}}%
\global\long\def\bh#1{\left[\vphantom{\left(#1\right)^{2}}#1\right]}%
\global\long\def\s{\bs}%
\global\long\def\EEE{\mathbb{E}}%
\global\long\def\DKL{D_{\textsc{kl}}}%
\global\long\def\Z{\mathcal{Z}}%
\global\long\def\pV#1#2{\left(#1\,\middle\|\,#2\right)}%
\global\long\def\si{\sigma}%
\global\long\def\thb{{\boldsymbol{\theta}}}%
\global\long\def\gr{\nabla}%

\newcommand{\ccancel}[1]{{\color{red}\cancel{\color{black}#1}}}
\newcommand{\Cancel}[1]{{\color{blue}\cancel{\color{black}#1}}}
\newcommand{\finaltwist}{\phi(\bs_{1:T})}
\newcommand{\finaltwistonly}{\phi}

\newcommand{\approxfinaltwist}{\psi_{T}}%
\newcommand{\reward}[1]{\finaltwistonly_{#1}(\bs_{1:#1})}

\newcommand{\optimaltwist}[1]{\psi^*_{#1}}
\newcommand{\vartwist}[1]{\psi_{#1}^{\thb}}
\newcommand{\vartwistq}[1]{\psi^{\qparam}_{#1}}

\newcommand{\approxtwist}[1]{{\psi_{#1}}}
\newcommand{\myo}{o}%
\newcommand{\bo}{\boldsymbol{\myo}}
\newcommand{\optimaltwistphi}[1]{\varPhi_{#1}^*}
\newcommand{\vartwistphi}[1]{\varPhi_{#1}}
\newcommand{\vartwistphitheta}[1]{\varPhi_{#1}^{\thb}}
\newcommand{\eqrl}{\overset{\text{(sRL)}}{=} }

\newcommand{\rewardone}[1]{\phi_{#1}(\bs_{1:#1}, o_{#1}=1)}
\newcommand{\bone}{\boldsymbol{1}}

\newcommand{\zfilter}[1]{\cZ_{{#1}}^{\approxtwist{}}}%
\newcommand{\zfilterstar}[1]{\cZ_{{#1}}^{\optimaltwist{}}}%
\newcommand{\qparam}
{{\boldsymbol{\xi}}}

\newcommand{\propheader}[1]{\textup{\textbf{#1}}}

\newcommand{\samplingdist}{\pi_{s}}
\newcommand{\zonestep}[2]{{Z}^{\pitwist{}}_{#1}(\bs_{1:{#2}})}

\newcommand{\pitwist}[1]{\pi_{#1}}
\newcommand{\tpitwist}[1]{\tilde{\pi}_{#1}}

\newcommand{\pitwisttheta}[1]
{\pi_{#1}^{\thb}}
\newcommand{\tpitwisttheta}[1]{\tpitwist{#1}^{\thb}}

\newcommand{\propopttwist}[1]{{\prop}^{\pi^*}_{#1}}

\newcommand{\zpi}[1]{\cZ_{#1}}
\newcommand{\zfinal}{\cZ_\sigma}

\newcommand{\condzero}{}%

\newcommand{\proptwist}[1]{\prop_{#1}^{\pi}}

\newcommand{\ssweight}[1]{w
(\bs_{1:T}^{#1})}

\newcommand{\smcincweight}[2]{w_{#1}(\bs_{1:#1}^{#2})}%

\newcommand{\sind}{\omega}
\newcommand{\sindat}[2]{\sind^{#1}_{#2}}

\newcommand{\klof}[2]{\DKL\pV{#1}{#2}}
\newcommand{\expof}[1]{e^{#1}}

\def\bs {%
\mathbf{s}
}

\tcbset{mybox/.style={colback=red!5, width =\columnwidth, boxrule=0.0pt, top=5pt,bottom=5pt,left=2pt, right=2pt},
  before upper=\setlength{\parindent}
  {0em}\everypar{{\setbox0\lastbox}\everypar{}}
}
\newtcolorbox{mybox}[1][]{breakable, mybox,#1}

\newacronym{SIS}{\textsc{SIS}}{simple importance sampling}
\newacronym{SMC}{\textsc{SMC}}{Sequential Monte Carlo}
\newacronym{LLM}{\textsc{LLM}}{Large language models}
\newacronym{IWAE}{\textsc{IWAE}}{Importance Weighted Autoencoder}
\newacronym{RLHF}{\textsc{RLHF}}{reinforcement learning from human feedback}
\newacronym{PPO}{\textsc{PPO}}{proximal policy optimization}
\newacronym{DPO}{\textsc{DPO}}{Direct Preference Optimization}
\newacronym{CTL}{\textsc{CTL}}{\textit{contrastive twist learning}}
\newacronym{ESS}{\textsc{ESS}}{Effective Sample Size}
\newacronym{DPG}{\textsc{DPG}}{distributional policy gradient}
\newacronym{NCE}{\textsc{NCE}}{noise contrastive estimation}
\newacronym{EBM}{\textsc{EBM}}{energy-based models}
\newacronym{RL}{\textsc{RL}}{reinforcement learning}
\newacronym{PCL}{\textsc{PCL}}{path consistency learning}
\newacronym{MDP}{\textsc{MDP}}{Markov Decision Process}
\newacronym{SNIS}{\textsc{SNIS}}{self-normalized importance sampling}
\newacronym{MCMC}{\textsc{MCMC}}{Markov Chain Monte Carlo}

\NewDocumentCommand{\composet}{e{_^}}{%
  \mathbin{\mathop{\circ}\displaylimits
    \IfValueT{#1}{_{#1}}
    \IfValueT{#2}{^{#2}}
  }%
}

\makeatletter
\newcommand{\AlignFootnote}[1]{%
  \ifmeasuring@
    \chardef\@tempfn=\value{footnote}%
    \footnotemark
    \setcounter{footnote}{\@tempfn}%
  \else
    \iffirstchoice@
      \footnote{#1}%
    \fi
  \fi}
\makeatother

\newcommand{\hl}[1]{\textcolor{blue}{#1}}

\newcommand{\prop}{q}
\newcommand{\target}{\sigma}
\newcommand{\ttarget}{\tilde{\sigma}}
\newcommand{\optdist}{q}%
\newcommand{\base}{p_0}

\newcommand{\vopt}{V^*}

\newcommand{\tpi}{\tilde{\pi}}

\newcommand{\cC}{{\mathcal{C}}}

\newcommand{\cL}{{\mathcal{L}}}

\newcommand{\cV}{{\mathcal{V}}}

\newcommand{\cZ}{{\mathcal{Z}}}

\newcommand{\bS}{{\bm{S}}}

\newcommand{\bbI}{{\mathbb{I}}}

\newcommand{\bbR}{{\mathbb{R}}}

\def\eqref#1{equation~\ref{#1}}

\def\1{\bm{1}}

\DeclareMathAlphabet{\mathsfit}{\encodingdefault}{\sfdefault}{m}{sl}
\SetMathAlphabet{\mathsfit}{bold}{\encodingdefault}{\sfdefault}{bx}{n}

\newcommand{\E}{\mathbb{E}}

\DeclareMathOperator*{\argmax}{arg\,max}
\DeclareMathOperator*{\argmin}{arg\,min}

\global\long\def\si{\sigma}%

\newcommand{\args}{s_{t}|\bs_{1:t-1}}
\newcommand{\argst}{\bs_{1:t-1}}
\newcommand{\argstone}{\bs_{1:t}}

\glsdisablehyper

\usepackage{amsmath}
\usepackage{amssymb}
\usepackage{mathtools}

\usepackage[normalem]{ulem}
\theoremstyle{plain}
\newtheorem{theorem}{Theorem}[section]
\newtheorem{proposition}[theorem]{Proposition}
\newtheorem{lemma}[theorem]{Lemma}

\newtheorem{definition}[theorem]{Definition}
\theoremstyle{definition}

\theoremstyle{remark}
\newtheorem{remark}[theorem]{Remark}
\usepackage[textsize=tiny]{todonotes}
\interfootnotelinepenalty=10000

\icmltitlerunning{
Probabilistic Inference in Language Models 
via Twisted Sequential Monte Carlo }

\begin{document}
\doparttoc %
\faketableofcontents

\twocolumn[
\icmltitle{ 
Probabilistic Inference in Language Models \\ via Twisted Sequential Monte Carlo}

\icmlsetsymbol{first}{*}
\icmlsetsymbol{senior}{**}

\begin{icmlauthorlist}
\icmlauthor{Stephen Zhao}{ut,vect,first}
\icmlauthor{Rob Brekelmans}{vect,first}
\icmlauthor{Alireza Makhzani}{ut,vect,senior}
\icmlauthor{Roger Grosse}{ut,vect,senior}
\end{icmlauthorlist}

\icmlaffiliation{ut}{University of Toronto}
\icmlaffiliation{vect}{Vector Institute}

\icmlcorrespondingauthor{
\{stephenzhao, makhzani, rgrosse\} @cs.toronto.edu}{}
\icmlcorrespondingauthor{rob.brekelmans@vectorinstitute.ai}
{}
\icmlkeywords{Machine Learning, ICML}

\vskip 0.3in
]

\printAffiliationsAndNotice{
\icmlEqualContribution} %

\begin{abstract}
Numerous capability and safety techniques of Large Language Models (LLMs), including RLHF, automated red-teaming, prompt engineering, and infilling, can be cast as sampling from an unnormalized target distribution defined by a given reward or potential function over the full sequence. In this work, we leverage the rich toolkit of Sequential Monte Carlo (SMC) for these probabilistic inference problems. In particular, we use learned \textit{twist functions} to estimate the expected future value of the potential at each timestep, which enables us to focus inference-time computation on promising partial sequences. We propose a novel contrastive method for learning the twist functions, and establish connections with the rich literature of soft reinforcement learning. As a complementary application of our twisted SMC framework, we present methods for evaluating the accuracy of language model inference techniques using novel \textit{bidirectional} SMC bounds on the log partition function. These bounds can be used to estimate the KL divergence between the inference and target distributions in both directions. We apply our inference evaluation techniques to show that twisted SMC is effective for sampling undesirable outputs from a pretrained model (a useful component of harmlessness training and automated red-teaming), generating reviews with varied sentiment, and performing infilling tasks.
\end{abstract}

\vheader
\vheader
\vspace*{-.2cm}
\section{Introduction}
{A wide range of language model learning and inference tasks can be viewed as steering a model's generations to satisfy a specified property.}
In particular, traditional \gls{RLHF} pipelines \citep{ziegler2019fine, stiennon2020learning, ouyang2022training, bai2022training, rafailov2023direct} may be viewed as targeting an unnormalized target modulated by a terminal reward function which reflects human feedback
\citep{korbak2022rl}. 
Red-teaming techniques such as prompt-engineering and infilling may seek target outputs with low reward or (high probability of) undesirable responses
\citep{zou2023universal, perez2022red}. 
In reasoning tasks, we may seek to target outputs which are likely to be deemed valid by a `verifier' \citep{cobbe2021training, anil2021learning, dohan2022language, hu2023amortizing}.   Specific properties of the generated responses might also be enforced \citep{khalifa2020distributional, yang2021fudge, lew2023sequential}.
We view the above tasks as instances of \emph{probabilistic inference}: 
sampling from a target unnormalized density and estimating its intractable (log) normalization constant.  Consider a pretrained base model $\base(\bs_{1:T}|\bs_0)$ which generates responses $\bs_{1:T}$ of maximum length $T$ based on a variable-length prompt $\bs_0$.   
We consider 
defining the 
target distribution of interest 
using 
the base model modulated by 
a \phishort function
$\finaltwist$
which evaluates full sequences, 
\vspace{-10pt} 

\small
\begin{align}
\target(\bs_{1:T}|\bs_0) &\coloneqq \frac{1}{\cZ_\sigma(\bs_0)} \base(\bs_{1:T}|\bs_0) \finaltwist,  \label{eq:posterior} \\
\text{where} \,\, \cZ_\sigma(\bs_0) &\coloneqq \sum_{\bs_{1:T}}\ttarget(\bs_{1:T}|\bs_0) = \sum_{\bs_{1:T}} \base(\bs_{1:T}|\bs_0) \finaltwist, \nonumber
\end{align}
\normalsize
where $\ttarget(\bs_{1:T}|\bs_0)$ denotes the unnormalized density.
We refer to $\cZ_\sigma(\bs_0)$ as the normalization constant or partition function, which is intractable due to the summation over $\bs_{1:T}$.  
We drop dependence on $\bs_0$ to avoid clutter, but note that each prompt induces a different partition function. 
In the context of the aforementioned applications, $\finaltwist$ may be derived from a human preference model (for RLHF), an indication of bad behavior (for automated red-teaming), or a verifier's prediction of correctness (for reasoning tasks).
We refer to \cref{table:posteriors} or \citet{korbak2022rl, dohan2022language, phan2023training, hu2023amortizing} for further examples and discussion of probabilistic inference in language models.

\vheader
\paragraph{Twisted Sequential Monte Carlo in Language Models}
In this work, we leverage tools from (twisted) \gls{SMC} \citep{doucet2001sequential, del2006sequential, briers2010smoothing, chopin2020introduction} to perform and evaluate inference in the language modeling setting (\cref{sec:twist_smc}).   
{A particular challenge in sampling from \cref{eq:posterior} is that the target distribution $\target(\bs_{1:T})$ is non-causal.   In order to sample tokens sequentially, one needs to infer the marginal distribution $\sigma(\bs_{1:t})
= \sum_{\bs_{t+1:T}} \sigma(\bs_{1:T})
\propto \sum_{\bs_{t+1:T}} \base(\bs_{t+1:T}|\bs_{1:t})\finaltwist$,
which 
involves an intractable marginalization.
}
To address this 
problem, we propose to learn \textit{twist functions} $\psi_t(\bs_{1:t})$ which modulate the base model such that $\base(\bs_{1:t}) \psi_t(\bs_{1:t})$ matches the target marginals $\target(\bs_{1:t})$, up to normalization. 
The twist functions can 
be used to focus each step of language model 
generation
on promising partial sequences. 
\vheader
\paragraph{Evaluating Inference in Language Modeling}
Sampling from the target distribution  
is closely intertwined with bounding the log partition function.
Similarly to variational inference or traditional RLHF objectives \citep{korbak2022rl}, \gls{SMC} algorithms yield lower bounds on $\log \cZ_\sigma$, where tighter bounds typically coincide with more accurate target sampling.
However, \textit{upper} bounds 
may often be obtained
when an exact target sample is available \citep{grosse2015sandwiching, grosse2016measuring, brekelmans2022improving}.  
The difference between upper and lower bounds on $\log \cZ_\sigma$ in fact yields an upper bound on the symmetrized KL divergence between inference samples and the target distribution \citep{grosse2016measuring}.
For these reasons, 
we argue in \cref{sec:eval} that
log partition function 
estimates are a powerful tool for evaluating language model inference techniques.

\vheader
\paragraph{Contributions} Our probabilistic inference perspective leads to the following contributions:
\begin{itemize}
\vheader
\item \textit{Twisted Sequential Monte Carlo for Language Modeling}:
We view
\textit{twisted} \textsc{SMC} as a general framework for sampling and evaluation of language models.   
While twisted \gls{SMC} is well-known and \citet{lew2023sequential} consider \gls{SMC} with fixed, few-step-ahead target information in the language modeling setting, we propose to \textit{learn} intermediate twist functions 
for target distributions defined by terminal 
\phishort only.
\vheader
\item {\textit{Contrastive Twist Learning}}: 
We develop probabilistic methods for learning intermediate twist functions, presenting 
a novel \textit{contrastive twist learning} (\textsc{CTL}) method inspired by energy-based modeling 
{and density ratio estimation }
in \cref{sec:ctl}.  Further, we adapt existing twisted SMC methods \citep{lawson2018twisted, lawson2022sixo, lioutas2022critic} to the language modeling setting, and highlight connections with inference techniques inspired by (soft) \gls{RL}.

\vheader
\item \textit{Evaluating Inference in Language Models}:  Finally, we demonstrate that twisted \textsc{SMC} provides a rich set of tools for evaluating language model fine-tuning or 
controlled generation techniques.
We propose a novel \textsc{SMC} upper bound on $\log \cZ_\sigma$ 
which is applicable when an exact target sample is available and may be of independent interest.   We leverage these bounds to evaluate the quality of inference by measuring the KL divergence to the target $\sigma(\bs_{1:T})$ in \textit{both} directions, which can be used to diagnose mode-dropping behavior of 
methods such as \gls{PPO} \citep{schulman2017proximal} which optimize a mode-seeking divergence.

\end{itemize}

We proceed to describe background on importance sampling and \textsc{SMC} in \cref{sec:background}, before presenting our framework for 
twisted \textsc{SMC} 
in the language modeling setting in \cref{sec:twist_smc}.  We propose methods to learn the twist functions in \cref{sec:learning} and methods to evaluate inference in \cref{sec:eval}.  Our experimental results in \cref{sec:experiments} showcase the ability of twisted \textsc{SMC} to improve controlled generation and lend insights into inference quality in existing methods.

\vheader
\section{Background}\label{sec:background}
\vheader

\begin{figure*}[t]
\begin{minipage}{.57\textwidth}
    \centering
    \vspace{-.085cm}
    \begin{subfigure}[b]{0.47\textwidth}
        \centering
        \includegraphics[width=\textwidth]{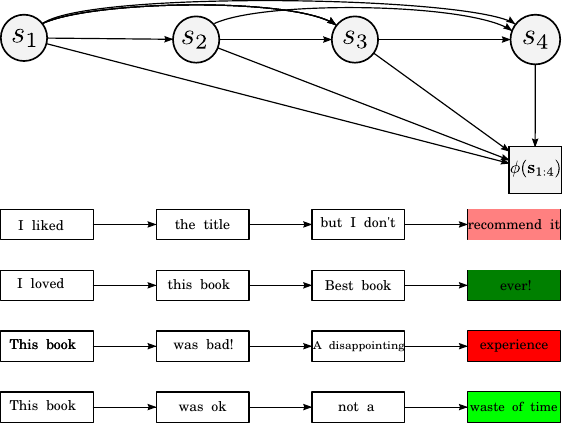}
        \caption{Simple Importance Sampling}
        \label{fig:sub1}
    \end{subfigure}%
    \hspace{.5cm} %
    \begin{subfigure}[b]{0.47\textwidth}
        \centering
        \includegraphics[width=\textwidth]{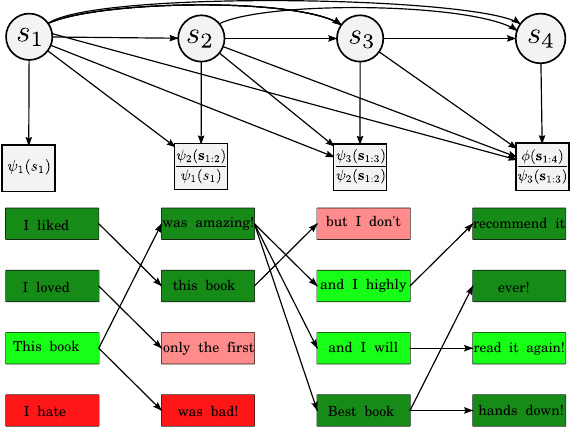}
        \caption{Twisted SMC}
        \label{fig:sub2}
    \end{subfigure}
    \captionof{figure}{Illustrative example of \textsc{SIS} and (Twisted) \textsc{SMC} for sampling book reviews conditioned on positive sentiment $\finaltwist$.  \textsc{SIS} only performs resampling after observing the entire sequence, while \textsc{SMC} can kill or clone partial sequences $\s_{1:t}$ based on incremental importance weights 
    induced by twist functions $\approxtwist{t}(\s_{1:t})$.  
    Green/red indicate high/low
    importance weights at each incremental step of SMC, or at the final step of SIS.   
    For SMC with the base model proposal $\base$ and the optimal twists, the incremental weights $\psi^*_{t}/\psi^*_{t-1}$ (\cref{alg:smc} or \cref{eq:incremental_bg}) are directly correlated with sentiment.
}\label{fig:is_smc}
\end{minipage}
\newcommand{\myspace}{.05cm}
\hfill
 \begin{minipage}{.41\textwidth}
 \vspace*{-.6cm}
\begin{algorithm}[H]
    \caption{(Twisted) SMC Sampling ($\smcprop$)}
  \begin{algorithmic}%
   \vspace*{-.1cm}
    \STATE \textbf{SMC-PROPOSAL}$\big(\base,q,\left\{ \psi_{t}\right\} _{t=1}^{T-1},\phi,K \big)$:  
    \FOR{$t = 1,...,T$} 
        \FOR{$k = 1,...,K$} \vspace{\myspace}
            \STATE Sample $s_{t}^{k}\sim q\pv{s_{t}}{\s_{1:t-1}^{k}}$ \vspace{\myspace}
            \STATE $\s_{1:t}^{k}\leftarrow\mathtt{concat}\left(\s_{1:t-1}^{k},s_{t}^{k}\right)$
            \IF{$t < T$}
                \STATE $w_t^k \leftarrow \fr{\base\pv{s_{t}^{k}}{\bs_{1:t-1}^{k}}}{q\pv{s_{t}^{k}}{\bs_{1:t-1}^{k}}}\fr{\psi_{t}\left(\bs_{1:t}^{k}\right)}{\psi_{t-1}\left(\bs_{1:t-1}^{k}\right)}$ 
            \ELSE
                \STATE $w_t^k \leftarrow \fr{\base\pv{s_{t}^{k}}{\bs_{1:t-1}^{k}}}{q\pv{s_{t}^{k}}{\bs_{1:t-1}^{k}}}\fr{\phi\left(\bs_{1:t}^{k}\right)}{\psi_{t-1}\left(\bs_{1:t-1}^{k}\right)}$             
            \ENDIF
        \ENDFOR
        \IF{$t < T$}
        \FOR{$k = 1,...,K$} \vspace{\myspace}
            \STATE $\om_{t}^{k}\sim\mathtt{cat}\Big({\Big\{ {\normalcolor \fr{w_{t}^{i}}{\sum_{j=1}^{K}w_{t}^{j}}}\Big\} _{i=1}^{K}}\Big)$
            \STATE $\s_{1:t}^{k}\leftarrow\s_{1:t}^{\om_{t}^{k}}$
        \ENDFOR     
        \ENDIF
    \ENDFOR
    \STATE \textbf{return} {$\left\{ {\normalcolor \s_{1:T}^{k},w_{T}^{k}}\right\} _{k=1}^{K}$}
    \\
    \STATE \phantom{\textbf{return}} $\hat{\cZ}_{\sigma}^{\textsc{smc}} = \prod_{t=1}^T \frac{1}{K} \sum_{k=1}^K w_t^k$
\label{alg:smc}
  \end{algorithmic}
  \end{algorithm}%
\end{minipage} 
\vheader
\vheader
\end{figure*}

 Suppose we are given access to an unnormalized density $\tilde{\si}\left(\bs_{1:T}\right)$ {which can be efficiently evaluated}. %
We focus on estimation of the partition function or normalization constant $\Z_{\si}:= \sum_{\bs_{1:T}} \tilde{\si}\left(\bs_{1:T}\right)$, since unbiased estimators with low variance yield approximate 
sampling techniques which closely approximate the target distribution \citep{finke2015extended, maddison2017filtering}.
We review \gls{SIS} and \gls{SMC} techniques in this section.

\vheader
\subsection{Simple Importance Sampling}\label{sec:simple_is}
Simple importance sampling (\gls{SIS}) provides an unbiased estimator of $\Z_{\si}$ 
by calculating importance weights for
any normalized proposal distribution $q(\bs_{1:T})$,
\small 
\begin{align}
\ssweight{i} \coloneqq \fr{\tilde{\si}\left(\bs_{1:T}^{i}\right)}{q\left(\bs_{1:T}^{i}\right)}\,, \label{eq:single_sample_w}
\end{align}
\normalsize
which is unbiased since $\Z_{\si}  =\EEE_{q\left(\bs_{1:T}\right)}\left[
\ssweight{}
\right]$. 
The importance weights also yield an an unbiased $K$-sample estimator of the partition function,
\small 
\begin{align}
\hat{\cZ}_{\si}^{\textsc{sis}}  \coloneqq \fr 1K ~ \sum_{i=1}^K
\ssweight{i}\,,\qquad \bs_{1:T}^{i}\sim {q\left(\bs_{1:T}\right)}\,.
\label{eq:single_sample_w_ali}
\end{align}
\normalsize

By normalizing the weights in \cref{eq:single_sample_w} 
over $K$ samples 
from $q(\bs_{1:T})$, 
we can obtain (biased) estimators of expectations under $\target(\bs_{1:T})$, 
\small
\begin{align}
\mathbb{E}_{\target(\bs_{1:T})}\big[ f(\bs_{1:T}) \big] 
&\approx 
\sum\limits_{k=1}^K \frac{ 
\ssweight{k}
}{\sum_{j=1}^K  
\ssweight{j}
} f(\bs_{1:T}^{k}) \label{eq:sis_expectation} 
\end{align}
\normalsize
or select an approximate target sample $\bs_{1:T}^{\target}$ 
from a categorical distribution with the self-normalized importance weights
\small 
\begin{align}
\bs_{1:T}^{\target} \gets \bs_{1:T}^{\sind}, & \qquad \quad
\sind 
\sim \texttt{cat}\left(\left\{ \fr{w\left(\s_{1:T}^{i}\right)}{\sum_{j=1}^{K}w\left(\s_{1:T}^{j}\right)}\right\} _{i=1}^{K}\right)\,. \label{eq:snis}
\end{align}
\normalsize
The quality of the approximations in \cref{eq:single_sample_w_ali}-(\ref{eq:snis}) depends crucially on how well the proposal 
$q(\bs_{1:T})$ (which may be learned, \cref{sec:proposals}) matches the target $\target(\bs_{1:T})$.  
While we discuss 
evaluation methods
in \cref{sec:eval}, note that if inference is exact (i.e., $q(\bs_{1:T})=\target(\bs_{1:T})$), then the variance of the importance weights is zero, as $\ssweight{} = \cZ_{\si} ~ $ for all $ \bs_{1:T}$. %

\vheader
\subsection{Sequential Monte Carlo}\label{sec:smc_bg}
\vheader

\gls{SMC} improves inference by decomposing it into easier subproblems involving a set of unnormalized intermediate target distributions $\{ \tpitwist{t}(\bs_{1:t}) \}_{t=1}^{T}$.
A key observation is that as long as 
$\pitwist{T}
(\bs_{1:T}) 
= \target
(\bs_{1:T})
$, we obtain an unbiased estimate of the partition function $\cZ_T=\Z_\si$, regardless of the intermediate $\pitwist{t}$
and proposal $q$.   

We begin by defining the \textit{incremental} importance weights
\begin{align}
\smcincweight{t}{}
 \coloneqq \frac{\tpitwist{t}(\bs_{1:t})}{ \tpitwist{t-1}(\bs_{1:t-1}) \prop(s_{t}|\bs_{1:t-1})}\,. \label{eq:incremental_bg}
\end{align}
\normalsize
where $\tpitwist{t}$ is the unnormalized density of $\pitwist{t} = \tpitwist{t}/\zpi{t}$.

\gls{SMC} maintains a set of $K$ partial sequences, by first sampling from the proposal $q(s_t^{k}|\bs_{1:t-1}^{k})$ in each index $k
$.
Optional 
resampling steps may be performed to 
clone sequences 
with high
incremental importance weights using
\begin{align}
\hspace*{-.1cm} \bs_{1:t}^{k} \gets \bs_{1:t}^{\sindat{k}{t}},  \,\,\,\quad \sindat{k}{t} \sim 
\mathtt{cat}\left({\bigg\{ {\normalcolor \fr{\smcincweight{t}{i}}{\sum_{j=1}^{K}\smcincweight{t}{j}}}\bigg\} _{i=1}^{K}}\right),
\label{eq:incremental_resample}
\end{align}
\normalsize
similarly to \cref{eq:snis}. 
Since resampling is performed \textit{with} replacement, sequences with high weights may be cloned multiple times.   
The 
resulting $\bs_{1:t}^{\sindat{k}{t}}$ are used as prefixes for the next step of proposal sampling 
in index $k$
(see \cref{alg:smc}).

We can show that \gls{SMC} yields an unbiased estimator $\hat{\cZ}_\sigma^{\textsc{smc}}$  of the normalization constant $\cZ_\sigma$, by considering the extended state space $\bS \coloneqq \{ s_{t}^k, \omega_t^k \}_{k,t=1}^{K,T}$ of token and index random variables from the sampling procedure $\bS \sim \smcprop(\bS)$ in \cref{alg:smc}.
Assuming resampling at every step,\footnote{
The decision to resample may be based on an adaptive condition such as \gls{ESS} \citep{chopin2020introduction}.   For $R \leq T$, let $\left\{ t_{r}\right\} _{r=1}^{R}$ index times where resampling occurs and fix $t_{0}=0$ and $t_{R}=T$. The estimator becomes $
\hat{\cZ}_\sigma^{\textsc{smc}} = \prod_{r=1}^{R}\fr 1K\sum_{i=1}^{K}\left(\prod_{t=t_{r-1}+1}^{t_{r}}w_{t}\left(\s_{1:t}^{i}\right)\right) $,
and the final-step weights for expectations in \cref{eq:sis_expectation} or sampling in \cref{eq:snis} are 
given by 
$\prod_{t=t_{R-1}+1}^{T}w_{t}\left(\s_{1:t}^{i}\right)$.\label{footnote:smc_bound}} 
\begin{align}
\hspace*{-.22cm}
 \cZ_{\sigma} = 
 \mathbb{E}_{}\left[ \hat{\cZ}^{\textsc{smc}}_\sigma \right] 
 &= \EEE_{\smcprop(\bS)}\left[\prod_{t=1}^{T}\fr 1K\sum_{k=1}^{K}w_t\left(\bs_{1:t}^{k}\right)\right] .
\label{eq:smc_ess} 
\end{align}
\normalsize
To see that 
$\hat{\cZ}^{\textsc{smc}}_\sigma$
is unbiased, 
we can view
\cref{eq:smc_ess} as performing simple importance sampling $\cZ_\sigma = \mathbb{E}_{\smcprop( \bS) }\left[ \frac{\tsmctgt(\bS)
}{\smcprop(\bS)
} \right]$ in the extended state space, for appropriate definitions of $\smctgt(\bS)$ and $\smcprop(\bS)$ detailed in \cref{app:smc} or \citep{andrieu2010particle, maddison2017filtering}.   Intuitively, we may view the average incremental importance weights at each step as estimating the partition function ratio $
\zpi{t}/\zpi{t-1}
\approx \frac{1}{K} \sum_{k=1}^K \smcincweight{t}{k}$.  \cref{eq:smc_ess}
 composes intermediate partition function ratio estimators to obtain an estimate of the final $\cZ_T = \cZ_\sigma = \prod_{t=1}^T 
 \zpi{t}/\zpi{t-1}
 $, with $\zpi{0}=1$.

With no resampling, \gls{SMC} reduces to \gls{SIS} with target $\sigma(\bs_{1:T}) = \pi_T(\bs_{1:T})$ and proposal $q(\bs_{1:T})$.  {Using the final-step \gls{SMC} weights, we may estimate expectations or draw approximate samples $\bs_{1:T}^\sigma$ as in \cref{eq:sis_expectation}-(\ref{eq:snis}).}

\cref{fig:is_smc} illustrates the key advantage of \gls{SMC} resampling over \gls{SIS}.   
While a suboptimal $q(\bs_{1:T})$ may produce sequences with low probability under the target $\sigma(\bs_{1:T})$, 
\gls{SMC} resampling with well-chosen intermediate targets $\pitwist{t}$ 
clones 
the most promising partial sequences $\bs_{1:t}$ at step $t$.  Since later sampling proceeds from these prefixes,  we 
expect to 
obtain final sequences which better cover the high-probability regions of the target distribution.   {We discuss techniques to evaluate the quality of \gls{SMC} or \gls{SIS} sampling in \cref{sec:eval}.}

\vheader
\section{Twisted Sequential Monte Carlo for Language Modeling}\label{sec:twist_smc}
\vheader

A key design choice in the \gls{SMC} procedure 
above is the 
intermediate targets 
$\{ \pitwist{t} \}_{t=1}^{T-1}$, where we assume $\pitwist{T}(\s_{1:T})=\si(\s_{1:T})$ is always the target distribution.
In state-space models with observation likelihoods or environments with intermediate rewards,
\textit{filtering} \gls{SMC} 
considers
target information collected from times 
$\tau \leq t$
to define $\pi_t$.
\citep{chopin2020introduction}.  %
Previous work on \gls{SMC} for language models \citep{lew2023sequential} has considered per-token or few-step-ahead statistics to define tractable 
intermediate
$\pitwist{t}$. 
However, we are often interested in target distributions which are determined by a {terminal} 
\phishort
$\finaltwist$ only, as in \cref{eq:posterior}.

In such settings, \textit{twisted} \gls{SMC} methods \citep{briers2010smoothing, whiteley2014twisted, lawson2022sixo} consider the \textit{full} 
target information 
(until time $T$) to define $\{\pi_t\}_{t=1}^{T-1}$.
In other words, our desired intermediate targets 
are
the true marginals $\sigma(\bs_{1:t})$ of the target distribution.
Intuitively, note that in order to exactly sample $\bs_{1:T} \sim \sigma(\bs_{1:T})$, we need to ensure partial sequences are distributed according to the intermediate marginals $\bs_{1:t} \sim \sigma(\bs_{1:t})$.
In 
\cref{sec:twist_fns},
we will 
represent the intermediate targets $\{\pi_t\}_{t=1}^{T-1}$
using
\textit{twist} functions $\approxtwist{t}\colon 
\bs_{1:t} 
\rightarrow \bbR$ which modulate the base model to (approximately) match the target marginals, thereby summarizing future 
information
relevant to sampling at time~$t$.

\vheader
\subsection{Twist Functions}\label{sec:twist_fns}
\vheader

We represent the intermediate target distributions $\{\pi_t\}_{t=1}^{T-1}$ for \gls{SMC} sampling using the following general form.

\begin{definition}[\propheader{ Twisted (Intermediate) Targets }]\label{def:twist_targets}
Using approximate twist functions $\{ \approxtwist{t}\}_{t=1}^{T-1}$ and the final target 
$\finaltwistonly$, we define the twisted intermediate target distributions
\begin{align}
\hspace*{-.25cm}
\begin{split}
\pitwist{t}(\bs_{1:t}\condzero) = \begin{cases} 
\frac{1}{\zfilter{t}}~  p_0(\bs_{1:t}\condzero) ~
\approxtwist{t}(\bs_{1:t}) \qquad  \hfill t \neq T\\
\frac{1}{\cZ_\sigma} ~ p_0(\bs_{1:T}\condzero)~ 
\finaltwist
\hfill t= T
\end{cases}
\end{split} \label{eq:filtering} \raisetag{20pt}
\end{align}
\normalsize
\end{definition}
For an arbitrary proposal $q$ and the unnormalized targets in \cref{eq:filtering}, 
the incremental importance weights 
are given by
\begin{align}
 \smcincweight{t}{}
 &= \frac{\base(s_{t}|\bs_{1:t-1})}{\prop(s_{t}|\bs_{1:t-1})}\frac{\approxtwist{t}(\bs_{1:t})}{ \approxtwist{t-1}(\bs_{1:t-1})} . \label{eq:incremental_twisted}
\end{align}
\normalsize

While uninformed twist functions $\approxtwist{t}$ may result in $\pitwist{t}(\bs_{1:t}\condzero)$ which are no closer to the target marginal $\target(\bs_{1:t}\condzero)$ than the base model $\base(\bs_{1:t}\condzero)$  (for example, in early stages of learning), the crucial fact is that our final target distribution in \cref{eq:filtering} reflects the 
target 
\phishort
$\finaltwist$.   As in \cref{sec:smc_bg}, this ensures that, regardless of the intermediate twists, our resulting importance sampling estimators 
will be unbiased.

Finally, the optimal twists $\optimaltwist{t}(\bs_{1:t})$ recover
the intermediate marginals $\pi_t^*(\bs_{1:t}) = \sigma(\bs_{1:t})$ of the target distribution. 
We state the sense in which $\pi_t^*$ and $\psi_t^*$ are optimal in \cref{app:optimality}, and prove 
the 
following proposition in \cref{app:framework} \cref{prop:opt_twists_int_reward}.

\begin{proposition}[\propheader{Optimal Twists}]\label{prop:optimal_twists}
For a given target distribution  $\target(\bs_{1:T}\condzero)$ in \cref{eq:posterior}, 
the optimal twist functions $\optimaltwist{t}(\bs_{1:t})$
(in regions where $\base(\bs_{1:t})>0$)
correspond to
\begin{align}
\pitwist{t}^*(\bs_{1:t}) = 
\target(\bs_{1:t}) 
&= \frac{1}{\zfilterstar{t}}~  p_0(\bs_{1:t}\condzero) ~
\optimaltwist{t}(\bs_{1:t}).
 \label{eq:optimal_filtering} 
\end{align}
Up to a constant independent of $\bs_{1:t}$, the optimal twists are
\begin{align}
 \optimaltwist{t}(\bs_{1:t}) &
 \propto 
 \sum_{\bs_{t+1:T}} \base(\bs_{t+1:T}|\bs_{1:t}) \finaltwist .
 \label{eq:optimal_twists} 
\end{align}
and 
satisfy the recursion
\begin{align}
\psi_{t}^{*}\left(\bs_{1:t}\right) \propto \sum \limits_{s_{t+1}}\base\pv{s_{t+1}}{\s_{1:t}}\psi_{t}^{*}\left(\bs_{1:t+1}\right).
\label{eq:psi_recursion}
\end{align}
\end{proposition}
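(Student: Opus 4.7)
The plan is to characterize the optimal twist $\optimaltwist{t}$ as the function that makes $\pitwist{t}^*(\bs_{1:t})$ in \cref{eq:filtering} equal to the true marginal $\target(\bs_{1:t})$, and then read off both the closed form and the recursion directly from this identification. The key observation is that the twists enter the definition of $\pi_t^*$ only through the product $p_0(\bs_{1:t})\optimaltwist{t}(\bs_{1:t})$ normalized by $\zfilterstar{t}$, so $\optimaltwist{t}$ is determined only up to a positive multiplicative constant independent of $\bs_{1:t}$ — this explains the ``$\propto$'' in the statement.

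First, I would expand the marginal of the target:
\begin{align*}
\target(\bs_{1:t}) = \sum_{\bs_{t+1:T}} \target(\bs_{1:T}) = \frac{1}{\cZ_\sigma} \, p_0(\bs_{1:t}) \sum_{\bs_{t+1:T}} p_0(\bs_{t+1:T}\mid \bs_{1:t}) \, \finaltwist,
\end{align*}
where I used the causal factorization $p_0(\bs_{1:T}) = p_0(\bs_{1:t}) \, p_0(\bs_{t+1:T}\mid \bs_{1:t})$. Comparing this with $\pitwist{t}^*(\bs_{1:t}) = \tfrac{1}{\zfilterstar{t}} p_0(\bs_{1:t}) \optimaltwist{t}(\bs_{1:t})$ and matching terms gives $\optimaltwist{t}(\bs_{1:t}) \propto \sum_{\bs_{t+1:T}} p_0(\bs_{t+1:T}\mid \bs_{1:t}) \finaltwist$ with normalizer $\zfilterstar{t} = \cZ_\sigma$, establishing \cref{eq:optimal_twists}. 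I would note that this identification is only valid where $p_0(\bs_{1:t}) > 0$, which matches the stated caveat.

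For the recursion in \cref{eq:psi_recursion}, I would split off the $s_{t+1}$ variable from the sum:
\begin{align*}
\optimaltwist{t}(\bs_{1:t}) \;\propto\; \sum_{s_{t+1}} p_0(s_{t+1}\mid \bs_{1:t}) \sum_{\bs_{t+2:T}} p_0(\bs_{t+2:T}\mid \bs_{1:t+1}) \finaltwist \;=\; \sum_{s_{t+1}} p_0(s_{t+1}\mid \bs_{1:t}) \, \optimaltwist{t+1}(\bs_{1:t+1}),
\end{align*}
using the closed form just derived to recognize the inner sum as $\optimaltwist{t+1}(\bs_{1:t+1})$. The boundary case $t=T-1$ is immediate since $\optimaltwist{T} = \finaltwistonly$.

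I do not expect a real obstacle: the proof is essentially bookkeeping of conditional factorizations and normalization constants. The only subtle point is being careful with the ``up to constant'' convention — the equalities should be read modulo multiplicative constants independent of $\bs_{1:t}$, since rescaling $\optimaltwist{t}$ globally is absorbed by $\zfilterstar{t}$ without affecting $\pitwist{t}^*$. I would briefly remark on this at the start of the proof to avoid confusion, and then defer the precise optimality statement (in what variational sense $\psi_t^*$ is ``optimal'') to the discussion in \cref{app:optimality} as signaled by the paper.
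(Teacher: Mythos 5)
Your proposal is correct and follows essentially the same route as the paper's proof (given in \cref{app:framework} for the slightly more general case with intermediate potentials): marginalize $\target$ over $\bs_{t+1:T}$, factor $\base$ causally to identify $\optimaltwist{t}$ up to a constant absorbed into $\zfilterstar{t}$, and obtain the recursion by peeling off the sum over $s_{t+1}$. Your handling of the scale ambiguity and the $\base(\bs_{1:t})>0$ caveat also matches the paper's treatment.
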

Since the optimal twist functions are unavailable due to the need to marginalize over future timesteps, we consider learning approximate twist functions using methods in \cref{sec:learning}.

\vheader
\subsection{Proposal Distribution}\label{sec:proposals}
\vheader
For 
a 
given 
set of 
targets $\{\pi_t \}_{t=1}^T$,
the importance weights in \cref{eq:incremental_twisted} depend crucially on the choice of proposal.

\vheader
\paragraph{Base Model as Proposal}
The most straightforward choice of proposal is the base pre-trained model, $q = \base$.     While we demonstrate in \cref{sec:experiments} that \gls{SMC} resampling with learned twists and the base model proposal can %
closely approximate the target distribution, this may require large $K$.
We can achieve greater efficiency using better choices of proposal.

\paragraph{Twist-Induced Proposal}
For given targets $\{\pi_t \}_{t=1}^T$, the optimal proposal minimizes the variance of the importance weights (\cref{app:optimality}). %
In the language model setting with a terminal \phishort only, 
we will 
in fact 
be able to sample from the optimal proposal for the one-step importance weights.

\vspace*{.05cm}
\begin{restatable}{proposition}{twistinduced}
\propheader{(Twist-Induced Proposal).}\label{prop:transition}    
    For a given set of intermediate 
    twisted targets $\pitwist{t}(\bs_{1:t})$ in \cref{eq:filtering}, the proposal which minimizes the variance of the one-step incremental importance weights 
    $w_t$
    is given by
\begin{align}
\proptwist{t}{(s_{t}|\bs_{1:t-1})} &\propto \frac{\pitwist{t}(\bs_{1:t})}{\pitwist{t-1}(\bs_{1:t-1})} \label{eq:transition} 
\\&
= \frac{1}{\zonestep{t}{t-1}} \base(s_{t}|\bs_{1:t-1}) \approxtwist{t}(\bs_{1:t}). \nonumber
\end{align}
\normalsize
\end{restatable}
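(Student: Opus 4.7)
The plan is a standard variance-minimization argument in the extended-state perspective of \gls{SMC}, restricted to the one-step conditional law given the prefix $\bs_{1:t-1}$. Concretely, I would fix $\bs_{1:t-1}$ and view the incremental weight
\begin{align*}
w_t(s_t) \;=\; \frac{\tilde{\pi}_t(\bs_{1:t})}{\tilde{\pi}_{t-1}(\bs_{1:t-1})\, q(s_t\mid \bs_{1:t-1})}
\end{align*}
as a random variable depending on $s_t \sim q(\cdot\mid \bs_{1:t-1})$. The first step is to observe that its conditional mean is independent of $q$:
\begin{align*}
\mathbb{E}_{q(s_t\mid \bs_{1:t-1})}[w_t] \;=\; \sum_{s_t}\frac{\tilde{\pi}_t(\bs_{1:t})}{\tilde{\pi}_{t-1}(\bs_{1:t-1})},
\end{align*}
so minimizing $\mathrm{Var}_q[w_t]$ is equivalent to minimizing the conditional second moment
\begin{align*}
\mathbb{E}_{q}[w_t^2] \;=\; \sum_{s_t} \frac{\tilde{\pi}_t(\bs_{1:t})^2}{\tilde{\pi}_{t-1}(\bs_{1:t-1})^2}\cdot \frac{1}{q(s_t\mid \bs_{1:t-1})}.
\end{align*}

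Next, I would minimize $\mathbb{E}_q[w_t^2]$ over proposals $q(\cdot\mid\bs_{1:t-1})$ subject to $\sum_{s_t} q(s_t\mid\bs_{1:t-1}) = 1$. This is a one-line Cauchy--Schwarz / Lagrange-multiplier exercise: writing $a_{s_t} = \tilde{\pi}_t(\bs_{1:t})/\tilde{\pi}_{t-1}(\bs_{1:t-1})$ and $b_{s_t} = q(s_t\mid\bs_{1:t-1})$, we have $\sum_{s_t} a_{s_t}^2/b_{s_t} \geq (\sum_{s_t} a_{s_t})^2$ with equality iff $b_{s_t} \propto a_{s_t}$. This identifies the minimizer as
\begin{align*}
q_t^{\pi*}(s_t\mid \bs_{1:t-1}) \;\propto\; \frac{\tilde{\pi}_t(\bs_{1:t})}{\tilde{\pi}_{t-1}(\bs_{1:t-1})},
\end{align*}
establishing \cref{eq:transition}.

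Finally, to obtain the second equality I would substitute the twisted form of $\pi_t$ from \cref{def:twist_targets}, canceling $\psi_{t-1}(\bs_{1:t-1})$ and $p_0(\bs_{1:t-1})$ with the corresponding factors in $\tilde{\pi}_t(\bs_{1:t}) = p_0(\bs_{1:t})\,\psi_t(\bs_{1:t})$, which leaves $p_0(s_t\mid\bs_{1:t-1})\,\psi_t(\bs_{1:t})$ after absorbing the $\bs_{1:t-1}$-dependent factors into the normalizer $Z_{t-1}^{\pi}(\bs_{1:t-1})$. There is no real obstacle here; the only thing to be mildly careful about is that the $\tilde\pi_{t-1}(\bs_{1:t-1})$ and $\zfilter{t-1}$ factors are constants with respect to $s_t$ and can be safely absorbed into the proportionality constant, and that the argument is purely conditional on $\bs_{1:t-1}$ so there is no averaging over prior resampling randomness to worry about.
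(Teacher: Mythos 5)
Your argument is correct and is essentially the paper's own proof: both reduce the problem to minimizing the conditional second moment (since the mean of $w_t$ is independent of $q$) subject to normalization, the paper via an explicit Lagrange-multiplier functional derivative and you via the equivalent Cauchy--Schwarz inequality, arriving at $q \propto \tilde{\pi}_t/\tilde{\pi}_{t-1}$. The final substitution of the twisted form and absorption of $\bs_{1:t-1}$-dependent factors into the normalizer also matches the paper.
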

{See \cref{app:proof-twist-ind-prop} for proof.}
For $t < T$, we can construct a parameterization of $\approxtwist{t}(\bs_{1:t})$ such that the proposal is tractable to sample in transformer architectures, where 
the normalization $\zonestep{t}{t-1} 
= \sum_{s_{t}}\base(s_{t}|\bs_{1:t-1}) \approxtwist{t}(\bs_{1:t})
$ sums over the discrete vocabulary of next tokens $s_t \in \cV$.   
However, for the final timestep, 
 note that $\finaltwist$ may require calls to a different neural network such as a reward model or classifier.  We thus consider an approximate $\approxfinaltwist(\bs_{1:T}) \approx \finaltwist$ for the proposal
 ${q}_T(s_T|\bs_{1:T-1}) \propto \base(s_{T}|\bs_{1:T-1}) \approxfinaltwist(\bs_{1:T})$
 in the final step.  
 With slight abuse of notation, we let $\proptwist{}
 (\bs_{1:T})
 $ denote this tractable proposal over full sequences,
 \begin{equation}
 \hspace*{-.22cm}
 \proptwist{}(\bs_{1:T}) \coloneqq \Big( \prod_{t=1}^{T-1} \proptwist{t}{(s_{t}|\bs_{1:t-1})} \Big) 
 ~ {q}_T(s_T|\bs_{1:T-1})\,.
 \label{eq:twist_prop}
 \end{equation}
Using this proposal, the incremental weights become
\begin{equation}
\hspace*{-.22cm}
\resizebox{.915\columnwidth}{!}{\ensuremath{
\smcincweight{t}{} = \begin{cases} 
\dfrac{\sum_{s_{t}}\base(s_{t}|\bs_{1:t-1}) \approxtwist{t}(\bs_{1:t}) }{ \approxtwist{t-1}(\bs_{1:t-1}) } 
\,\,\, \hfill t < T \\[3.5ex]
\dfrac{
\sum_{s_{T}}\base(s_{T}|\bs_{1:T-1}) \approxtwist{T}(\bs_{1:T})
}{\approxtwist{T-1}(\bs_{1:T-1})} \dfrac{\finaltwist}{\approxfinaltwist(\bs_{1:T})} \quad \hfill t = T 
\end{cases}\,,
}}\label{eq:locally_normalized_is_weights} \raisetag{.5cm}
\end{equation}
\normalsize
which are independent of $s_{t}$ for $t < T$.

\vheader
\paragraph{Variational Proposal}
As noted in \cref{sec:simple_is}, \gls{SMC} with no %
resampling steps reduces to \gls{SIS} with the full target distribution $\target(\bs_{1:T})$.
Policy gradient methods \citep{schulman2017proximal, parshakova2019distributional, korbak2022controlling, go2023aligning} which 
directly learn a tractable approximation $q(\bs_{1:T})$ to the target distribution may thus be viewed 
as a particularly simple instance of \gls{SMC}, or inference more generally (see \citet{korbak2022rl}).   
We may also evaluate these inference methods using %
our 
proposed tools in \cref{sec:eval}.    
See \cref{tab:losses} and \cref{app:prop} for detailed losses and discussion.

Finally, note that a separate proposal $q$ might also be learned alongside the twisting targets $\{\pi_t \}_{t=1}^{T-1}$.   
This may be useful to approximate the variance-minimizing proposal for multi-step or adaptive resampling (\cref{prop:multi-step-optimal-proposal}) beyond the tractable optimal one-step proposal in \cref{prop:transition}.  We discuss training losses based on multi-step importance weights in \cref{sec:consistency_twist}.

\vheader
\subsection{{Conditional Target Distributions}}\label{sec:conditional_twist}
\vheader
More generally, we may consider \textit{conditional} target distributions, obtained by conditioning on an observation random variable $o_T$.   This mirrors 
the standard setting of \gls{SMC} in state-space models \citep{doucet2001sequential, briers2010smoothing, gu2015neural, maddison2017filtering, lawson2022sixo}, with further discussion in \cref{app:conditional_twist_expo}.  

Defining $\phi(\bs_{1:T},o_T) =\sigmacond{T}(\myo_T|\bs_{1:T}) $ as a probabilistic model of $o_T$, our target distribution is the posterior $\sigma(\bs_{1:T}|o_T)$,
\begin{align}
\begin{split}
\sigma(\bs_{1:T}|o_{T}) 
&= \frac{1}{\cZ_\sigma(\myo_{T})} \base(\bs_{1:T})\sigmacond{T}(\myo_T |\bs_{1:T})~,
\end{split}
\label{eq:conditional_twist_def}
\end{align}
where the partition function $\cZ_\sigma(\myo_{T}) = \sigma(\myo_T) = \sum_{\bs_{1:T}} \base(\bs_{1:T})\sigmacond{T}(\myo_T |\bs_{1:T})$
is the marginal of the given $o_T$.

In this setting,  \cref{prop:optimal_twists} suggests that the optimal twists, which match the marginals $\sigmacond{t}(\bs_{1:t}|o_T)$, 
correspond to the conditional likelihood of 
$o_T$ given $\bs_{1:t}$,%
\begin{align}
\begin{split}
\optimaltwist{t}(\bs_{1:t}, o_T) &\propto \sum\limits_{\bs_{t+1:T}} \base(\bs_{t+1:T}|\bs_{1:t}) \phi(\bs_{1:T}, o_T) \\
&= \sigma(o_T|\bs_{1:t}) ~,
\end{split}
\label{eq:cond_lkd_main}
\end{align}
since $\sigma(o_T|\bs_{1:t}) = \sum_{\bs_{t+1:T}} \sigma(o_T, \bs_{t+1:T}|\bs_{1:t})$.
We can proceed to construct intermediate target distributions and proposals as in the previous sections, where $\psi_t(\bs_{1:t}, o_T)$ 
and even $q_t(s_t|\bs_{1:t-1}, o_T)$ 
may be conditioned on a particular value of $o_T$. 

To recover the unconditional setting, we can fix a binary observational variable $\sigma(o_T=1|\bs_{1:T}) \coloneqq \finaltwist$ \citep{levine2018reinforcement} and omit explicit conditioning, showing that conditional twist learning 
generalizes our previous exposition.\footnote{{To obtain a probabilistic interpretation for $ \sigma(o_T=1|\bs_{1:T}) = \finaltwist$, note we need to ensure $\finaltwist \in [0,1]$.
As a result, 
sampling from the target $\sigma(\bs_{1:T}| o_T=1)$ or joint $\sigma(\bs_{1:T}, o_T=1)$ is no easier 
in this interpretation 
than in \cref{eq:posterior}, which is intractable in general.   For example, finding $\phi_{\text{max}} = \max_{\bs_{1:T}} \finaltwist$ 
and dividing $\finaltwist \leftarrow \finaltwist/\phi_{\text{max}}$ 
to rescale $\sigma(o_T=1|\bs_{1:T})$
is equivalent to being able to
perform rejection sampling 
with the base model proposal $\base(\bs_{1:T})$ (see \cref{sec:positive_ebm})}.}

\vheader
\paragraph{Exact Target Sampling on Simulated Data} 
Assuming $\sigma(o_T|\bs_{1:T})$ is tractable to sample, we may obtain an exact sample from the target posterior for simulated $o_T$ using ancestral sampling.  In particular, by sampling $\bs_{1:T}, 
o_T 
\sim \base(\bs_{1:T})\sigma(o_T|\bs_{1:T})$,
we obtain a sample from the joint distribution, which also factorizes as $\sigma(o_T, \bs_{1:T}) = \sigma(o_T) \sigma( \bs_{1:T}|o_T)$.    Using the latter factorization, we may interpret $\bs_{1:T}$ as an exact sample from the target posterior for the given $o_T$.

We refer to this
as the Bidirectional Monte Carlo (BDMC) trick \citep{grosse2015sandwiching, grosse2016measuring}, and will use it to draw exact samples for training in \cref{sec:positive_ebm} or evaluation in \cref{sec:eval}.

\vheader
\subsection{Connections with Reinforcement Learning}\label{sec:soft_rl_connection}
\vheader
Twisted \gls{SMC} shares close connections with (soft) reinforcement learning \citep{levine2018reinforcement, piche2018probabilistic, lawson2018twisted, heng2020controlled}, which we develop with detailed discussion in \cref{app:soft_rl} and \cref{app:mudgal_decoding_all}.
In particular, we consider language modeling as a \gls{MDP} with states $x_t \coloneqq \bs_{1:t-1}$, actions $a_t \coloneqq s_t$, and deterministic transitions $p(x_{t+1}|x_t, a_t)= 
\delta(\bs_{1:t} = \mathtt{concat}({s_{t}, \bs_{1:t-1})})$.
We describe two different definitions of the reward function in relation to the 
\phishort
function $\finaltwist$ below.   In \cref{app:int_reward}, we further extend our \gls{SMC} framework to capture settings with intermediate 
\phishorts
$\phi_t(\bs_{1:t})$ or rewards over partial sequences.
\vheader
\paragraph{Base Model Policy Evaluation} 
Viewing the final 
\phishort
$\finaltwist$ as the reward function, 
the optimality condition $\optimaltwist{t}(\bs_{1:t}) = \sum_{\bs_{t+1:T}} \base(\bs_{t+1:T}|\bs_{1:t}) \finaltwist$ in \cref{eq:optimal_twists}
corresponds to exact \textit{policy evaluation} of the future reward under the \textit{fixed} base model policy $\base(\bs_{t+1:T}|\bs_{1:t})$. 
\citet{mudgal2023controlled} adopt this perspective for 
controlled
decoding, and refer to the twist functions as `prefix scorers'.

\paragraph{Soft RL with KL Regularization}
Alternatively, we may consider the 
 soft or KL-regularized \gls{RL} target distributions commonly used in language modeling \citep{levine2018reinforcement, korbak2022rl} as a special case of our twisted \gls{SMC} framework.
For a regularization strength $\beta$, define the terminal 
\phishort
as 
\begin{align}
 \finaltwist = e^{\beta r(\bs_{1:T})} .\label{eq:rl_phi}
\end{align}
In this case, the intermediate twist functions in \cref{def:twist_targets} correspond to state-action $Q$-values, $\psi_t(\bs_{1:t}) = e^{\beta Q(s_t, \bs_{1:t-1})}$ (\cref{app:soft_rl}).   
In particular, consider the recursion for the optimal twists in \cref{eq:psi_recursion}.  
Taking the logarithm of both sides and recalling the definition of the soft value function $V^*(\bs_{1:t})$ \citep{levine2018reinforcement}, we obtain
\begin{equation}
\hspace*{-.2cm}
\resizebox{.9\columnwidth}{!}{\ensuremath{
Q^*(s_t, \bs_{1:t-1}) =  \underbrace{ \frac{1}{\beta} \log \sum_{s_{t+1}} \base(s_{t+1}|\bs_{1:t} )e^{\beta Q^*(s_{t+1}, \bs_{1:t})}}_{V^*(\bs_{1:t})}, 
}}
\label{eq:q_bellman_main}
\end{equation}
\normalsize
which is a soft Bellman recursion with no intermediate reward.
From the soft \gls{RL} perspective, the twist functions are analogous to a critic, while the proposal plays the role of an actor \citep{levine2018reinforcement, haarnoja2018soft}.  We provide detailed discussion of the soft RL case in \cref{app:soft_rl}, and review RL-inspired losses for twist learning in \cref{sec:consistency_twist}.

\vheader
\paragraph{Benefits of the Probabilistic Perspective}   
While soft RL is a natural special case of our framework which gives intuition for the role of the twist functions,
our approach allows for general target distributions without reference to \gls{RL} objectives and suggests principled probabilistic resampling using \gls{SMC}.    
Further, 
we develop
twist learning techniques 
inspired by 
density ratio estimation, 
including our novel CTL method or the \textsc{SIXO} objective from \citep{lawson2022sixo}, 
which are more naturally motivated from a probabilistic perspective.
Finally, we 
leverage
our probabilistic perspective to propose novel language model evaluation techniques inspired by Bidirectional Monte Carlo (\citet{grosse2015sandwiching, grosse2016measuring}) in \cref{sec:eval}.

\vheader
\section{Learning the Twist Functions}\label{sec:learning}
\vheader
We next consider methods to learn twist functions $\vartwist{t}$ parameterized by neural networks, presenting a novel 
\gls{CTL}
approach 
in \cref{sec:ebm_roger}.  
We summarize twist learning methods from related work in \cref{sec:related_twists}.

\vheader
\subsection{Contrastive Twist Learning} 
\label{sec:ebm_roger}\label{sec:ctl}
\vheader
To match our approximate %
$\pitwisttheta{t}$ to the target marginals, we propose to 
minimize $T$ separate KL divergences,
\begin{align}
 \min\limits_{\thb} \cL_{\text{CTL}}(\thb) \coloneqq \min\limits_{\thb}\sum_{t=1}^{T}\DKL\pV{\target(\bs_{1:t})}{\pitwisttheta{t}\left(\bs_{1:t}\right)}\,.
 \label{eq:separate_ebms}
\end{align}
\normalsize
While other divergences could be used to learn $\pitwisttheta{t}(\bs_{1:t})$, we argue that the mass-covering behavior of \cref{eq:separate_ebms} is a desirable property for twist learning.  Since we 
separately match
each 
$\sigma(\bs_{1:t})$, 
our hope is that
suboptimal learning in early timesteps does not lead to aggressive pruning of partial sequences that would achieve high final target likelihood.

Using \cref{eq:filtering}, the gradient of \cref{eq:separate_ebms} at each $t$ becomes %
{%
\begin{equation}
\hspace*{-.2cm}
\resizebox{.90\columnwidth}{!}{\ensuremath{ 
\EEE_{\target(\bs_{1:t})}\left[\gr_{\thb}\log\vartwist{t}\left(\bs_{1:t}\right)\right]-\EEE_{\pitwisttheta{t}\left(\bs_{1:t}\right)}\left[\gr_{\thb}\log\vartwist{t}\left(\bs_{1:t}\right)\right]
}},
\label{eq:grad_separate_ebm}
\end{equation}
}%
which allows us to learn from exact target samples of $\si(\s_{1:t})$ in the first term when they are available.

We note the similarity of the objective in \cref{eq:separate_ebms} and gradient in \cref{eq:grad_separate_ebm} to maximum likelihood training of \gls{EBM}s.  Due to the form of the gradient update, we refer to this method as \textit{contrastive twist learning} (\gls{CTL}).   
We proceed to describe approximate techniques for positive sampling  (first term) and negative sampling 
 (second term) in the next subsections.

\vheader
\subsubsection{Approximate Negative Sampling}\label{sec:negative_ebm}  
\vheader
A 
common challenge in energy-based modeling is that the second term in %
\cref{eq:grad_separate_ebm} 
involves sampling from the target $\pitwist{t}$ with intractable normalization constant $\zfilter{t}$. %
We proceed to estimate the expectation
using 
\gls{SIS}
as in \cref{eq:sis_expectation}, using a proposal $\prop(\bs_{1:t})$ such as the base model or the twist-induced proposal from \cref{sec:proposals}. 
{Note that \gls{SMC} resampling with learned intermediate twist functions could also be used.}

\begin{table*}[t]
\newcommand{\samplingdistfig}{\samplingdist}
\newcommand{\linesep}{\\[1.5ex]}
\vheader
\label{table:losses}
\centering
\resizebox{\textwidth}{!}{
\footnotesize
\begin{tabular}{clll}
\toprule
  \textbf{\textit{Name}} %
  & \textbf{\textit{Loss}} &  
  & \textbf{\textit{Learning Principle}} %
  \\ \toprule
  CTL & 
  \multicolumn{2}{l}{$\sum_{t=1}^T ~ \mathbb{E}_{\samplingdist(o_T)} \Big[ \DKL\pV{\sigma(\bs_{1:t}|o_T)}{\pitwisttheta{t}(\bs_{1:t}|o_T)} \Big] 
  $  \qquad %
  \textit{({Gradient:})}~ $\EEE_{\samplingdist(o_T)}\left[\EEE_{\target(\bs_{1:t}|o_T)}\left[\gr_{\thb}\log\vartwist{t}\left(\bs_{1:t}, o_T\right)\right]-\EEE_{\pitwisttheta{t}\left(\bs_{1:t}|o_T\right)}\left[\gr_{\thb}\log\vartwist{t}\left(\bs_{1:t}, o_T\right)\right]\right]$
  }
  &  
  Marginal Matching with MLE \linesep
  RL & 
 \multicolumn{2}{l}{$
\sum_{t=1}^{T-1} \mathbb{E}_{
\samplingdistfig(\bs_{1:t}, o_T)
}\Big[ \Big( \log \sum_{s_{t+1}} \base(s_{t+1}|\bs_{1:t}) \text{sg}\big(\vartwist{t+1}(\bs_{1:t+1},o_T) \big) -  \log \vartwist{t}(\bs_{1:t}, o_T)    \Big)^2 \Big] + \mathbb{E}_{
\samplingdistfig(\bs_{1:T}, o_T)
} \left[ \left( \log \phi(\bs_{1:T},o_T) - \log \vartwist{T}(\bs_{1:T},o_T)  \right)^2 \right]$  }
& Twist Consistency / Soft Q-Learning
\linesep
  SIXO & 
  \multicolumn{2}{l}{$\sum_{t=1}^T ~ \mathbb{E}_{\samplingdist(o_T)\target(\bs_{1:t}\condzero| o_T)}\left[ \log \texttt{sigmoid}\big( \log \vartwist{t}(\bs_{1:t}, o_T) \big)\right] + \mathbb{E}_{\base(\bs_{1:t}\condzero)\samplingdist(o_T)}\left[ \log \big(1- \texttt{sigmoid}\big(\log \vartwist{t}(\bs_{1:t}, o_T)\big)\big)\right]$}
  & Noise Contrastive Estimation
  \linesep
  FUDGE & 
  $\sum \limits_{t=1}^T 
 - 
 \mathbb{E}_{\samplingdistfig(\bs_{1:t}, o_T)}\mathbb{E}_{\base(\bs_{t+1:T}|\bs_{1:t})}
 \Big[ \sigmacond{T}(o_T |\bs_{1:T}) \log 
 \vartwist{t}(\bs_{1:t},o_T)
+
\Big(1-\sigmacond{T}(o_T|\bs_{1:T}) \Big)
\log \Big(1- \vartwist{t}(\bs_{1:t}, o_T)\Big)
\Big)
\Big]$ & 
  & Binary Classification 
  \\  \midrule
  DPG &  $ \mathbb{E}_{\samplingdistfig(o_T)}\Big[\DKL\pV{\sigma(\bs_{1:T}|o_T)}{\prop^{\qparam}(\bs_{1:T}|o_T)} \Big]$ &
  & Maximum Likelihood (MLE) %
  \linesep 
  PPO &  $ \mathbb{E}_{\samplingdistfig(o_T)}\Big[\DKL\pV{\prop^{\qparam}(\bs_{1:T}|o_T)}{\sigma(\bs_{1:T}|o_T)}\Big] $ &
  & Variational Inference %
  \\
 \bottomrule
\end{tabular}
}\vspace{-.2cm}\caption{Losses for twist (top) and proposal (bottom) learning, 
where $\samplingdist(\cdot)$ indicates an arbitrary sampling distribution.}   
\label{tab:losses}
\vspace{-.25cm}
\normalsize
\end{table*}

\vheader
\subsubsection{(Approximate) Positive Sampling}\label{sec:positive_ebm}
\vheader
In contrast to traditional \gls{EBM} settings, we do not necessarily have exact samples available from a `data' distribution.
We describe several settings related to availability of positive samples, which are explored in our experiments in \cref{sec:experiments}.

\vheader
\paragraph{Exact Target Samples}
If exact posterior samples are available, for example using the BDMC trick in \cref{sec:conditional_twist}, we may use them directly in the gradient update in 
\cref{eq:grad_separate_ebm}.

\vheader
\paragraph{Rejection Sampling}
Rejection sampling can yield exact target samples $\bs_{1:T}^{\target}$ when an upper bound on the likelihood ratio $\frac{\ttarget(\bs_{1:T})}{q(\bs_{1:T})} \leq M$ is known.   
In cases where the target $\ttarget(\bs_{1:T})$ is defined by thresholding or an indicator function $\base(\bs_{1:T}) \bbI(\bs_{1:t} \in \cC)$ or joint distribution $\base(\bs_{1:T}) \sigma(o_T|\bs_{1:T})$, 
we can clearly take $M=1$
for 
the base model proposal $\base(\bs_{1:T})$.  If the base model yields posterior samples in reasonable time, we can obtain exact samples for training using rejection sampling, and use our twist learning procedures to greatly improve sampling efficiency at generation time. 

While an improved proposal $q$ should more efficiently draw samples meeting the target conditions, exact rejection sampling 
would require
estimation of $M$.   Approximate or quasi rejection sampling might be used in this case, as analysed in \citet{eikema2022approximate}.

\vheader
\paragraph{Approximate Positive Sampling using SIS or SMC}
In cases where exact samples are unavailable and rejection sampling is inefficient or inexact, we leverage \gls{SMC} sampling with twist targets $\{\pitwisttheta{t}\}_{t=1}^T$ and any proposal $q(\bs_{1:T})$ to first draw a set of $K$ full sequences $\bs_{1:T}^{1:K}$.   
As in \cref{eq:sis_expectation}, we can use the normalized \gls{SMC} weights since the last resampling step to
estimate the expected gradient in the first term of \cref{eq:grad_separate_ebm}.
Without resampling, we recover \gls{SIS} estimation. 

While both our approximate positive and negative sampling for estimating the expectations in \cref{eq:grad_separate_ebm} rely on \gls{SMC} or \gls{SIS} weights (often with the same proposal), the crucial distinction is that weights for \textit{positive} sampling are based on the \textit{true target \phishort}$\finaltwist$ over \textit{full} sequences. %

\vheader
\paragraph{Truncation to Partial Sequences} 
For an exact positive sample,
we use its truncation to a partial sequence of length $t$ (which corresponds to a sample from the desired marginal $\sigma_t$) to perform the gradient update in \cref{eq:grad_separate_ebm}. 
For approximate positive sampling, we use the same set of $K$ final 
weights to estimate the expected
gradient 
at each timestep.

\vheader
\subsection{Twist Learning Methods from Related Work}\label{sec:related_twists}
\vheader
We briefly describe alternative approaches for twist learning, 
with detailed discussion in \cref{app:twist} and a summary of the loss functions for methods used in our experiments in \cref{tab:losses}. 

\vheader
\paragraph{Soft Q-Learning (RL)}
Enforcing the recursion in \cref{eq:psi_recursion} using a squared error loss is analogous to soft $Q$-learning in the \gls{RL} literature (see \cref{eq:q_bellman_main}), and has been used for twisted SMC in \citet{lioutas2022critic}.
\citet{mudgal2023controlled} derive a similar squared-error loss, viewing $\finaltwist$ as the reward.
Finally, we interpret path consistency losses \citep{nachum2017bridging}, which were derived in the soft \gls{RL} setting 
and have been used for language modeling in \citet{guo2021efficient, hu2023amortizing}, from an importance sampling 
perspective in \cref{sec:consistency_twist} and \ref{app:pcl}.

\vheader
\paragraph{SIXO} The \textsc{SIXO} loss proposed by \citet{lawson2022sixo} 
learns twist functions using a binary classification task to distinguish samples from the target marginal $\sigma(\bs_{1:t}|o_T)$ and base model $\base(\bs_{1:t})$ at each step, which {corresponds to noise contrastive estimation \citep{gutmann2010noise} for learning energy-based models}.   See \cref{app:sixo}.

\vheader
\paragraph{FUDGE }
\citet{yang2021fudge} learn twists by constructing a 
binary classification task 
to instead learn the conditional likelihood
$\sigma(o_T|\bs_{1:t})$ (\cref{eq:cond_lkd_main}).
{This may be viewed as enforcing the $T-t$ step optimality equation in \cref{eq:optimal_twists} or \cref{eq:cond_lkd_main}, where 
rollouts should be obtained using the base model $\base(\bs_{t+1:T}|\bs_{1:t})$ (see \cref{tab:losses} or \cref{app:fudge}).}  
{\citet{mudgal2023controlled, deng2023reward} similarly propose to enforce the $T-t$ step optimality condition 
using
a squared-error loss, 
$\sum_t \mathbb{E}_{\base(\bs_{t+1:T}|\bs_{1:t})}[(\finaltwist - \psi_t(\bs_{1:t}))^2]$.   }
\newcommand{\infevaldist}{q}
\vheader
\section{Evaluating Inference in Language Models}\label{sec:eval}
\vheader
 Our \gls{SMC} framework yields a rich set of tools for evaluating inference techniques in language models, using well-studied quantities such as the log partition function $\log \cZ_\sigma$ and \textsc{KL} divergence {to the target distribution.}
Remarkably, with access to a single exact sample from the target distribution, we show in \cref{prop:smc_proposition} that we can obtain \textit{upper} bounds on 
$\log \cZ_\sigma$ in addition to 
lower bounds.   
These bounds can tightly sandwich $\log \cZ_\sigma$ with increasing $K$, thereby 
ensuring 
reliable conclusions regarding inference quality.
\vheader
\subsection{Applications of $\log \cZ_\sigma$ Estimation}\label{sec:eval_applications}
\paragraph{Evaluating Fine-Tuned Models}
To motivate this section and present an important application of our \gls{SMC} methods, consider evaluating how well a given $\infevaldist(\bs_{1:T})$ matches a target distribution for controlled generation or fine-tuning.  
Assume that $\infevaldist$ is tractable to sample and evaluate.
To calculate the KL divergence to $\sigma$ in either direction, 
we also require an estimate of the $\log$ partition function $\log \cZ_\sigma$,

\small
\begin{align}
\DKL\pV{\infevaldist(\bs_{1:T})}{\sigma(\bs_{1:T})
}  =\EEE_{\infevaldist}\left[\log\fr{\infevaldist(\bs_{1:T})}{\base(\bs_{1:T})\phi(\bs_{1:T})}\right]+{\log\Z_{\si} }
\nonumber %
\\
\DKL\pV{\sigma(\bs_{1:T})
}{\infevaldist(\bs_{1:T})}  =\EEE_{\si}\left[\log\fr{\base(\bs_{1:T})\phi(\bs_{1:T})}{\infevaldist(\bs_{1:T})}\right]-{\log\Z_{\si}} 
\label{eq:fwd_rev_KL_est} \raisetag{3pt}
\end{align} 
\normalsize
For $\DKL\pV{\sigma}{\infevaldist} $, note that we also require samples from the target $\sigma$, {as may be readily available using the BDMC trick when $\sigma$ is defined as a Bayesian posterior} (\cref{sec:conditional_twist}). %
In such cases, we argue that \gls{SMC} can be used to accurately bound the value of $\log \cZ_\sigma$ and estimate
each KL divergence above. 
Estimation of $\DKL\pV{\sigma}{\infevaldist}$ may be particularly important to diagnose mode-dropping in
inference techniques such as \gls{PPO} which optimize the mode-seeking $\DKL\pV{\prop}{\sigma}$ during fine-tuning \citep{korbak2022rl}.

\vheader%
\paragraph{Evaluating Twisted SMC Sampling} 
After running \gls{SIS} or \gls{SMC} with $K$ samples, we can sample a single index as in \cref{eq:snis} to return
a single approximate target sample $\bs_{1:T}^\target$.
{However, the marginal distribution of this sample, which we denote as
$\bs_{1:T}^\target \sim \smcprop(\bs_{1:T})$,}
is not tractable due to the need to sum over all possible sets of $K$ samples.  
Nevertheless, we will show below that the tightness of our $\log \cZ_\sigma$ lower or upper bounds in \cref{prop:smc_proposition} provides upper bounds on the KL divergences $\DKL\pV{\smcprop(\bs_{1:T})}{\sigma(\bs_{1:T})}$ or $\DKL\pV{\sigma(\bs_{1:T})}{\smcprop(\bs_{1:T})}$, respectively.

Alternatively, we can also use the {single-sample} KL divergences in \cref{eq:fwd_rev_KL_est} for the twist-induced proposal $\proptwist{}$ in \cref{eq:twist_prop} to
evaluate a set 
of twist functions $\approxtwist{t}$ (\cref{sec:experiment_kls}).

\vheader
\subsection{Bidirectional SMC Bounds on $\log \cZ_\sigma$}
\label{sec:bi-SMC}

Given the importance of $\log \Z_\si$ estimation as motivated above, we propose a \emph{bidirectional SMC} {stochastic} upper bound which is novel (to the best of our knowledge), and may be of interest outside of the language modeling setting.

Recall from \cref{sec:smc_bg} that \gls{SMC} admits an interpretation as \gls{SIS} in an extended state space $\bS \coloneqq \{s_t^k, \omega_t^k \}_{k=1,t=1}^{K,T}$ which includes all tokens and resampling indices.  
We derive lower and upper bounds on $\log \cZ_\sigma$ in \cref{prop:smc_proposition} below, with proof and detailed description of the extended state space target $\smctgt(\bS)$ and proposal $\smcprop(\bS)$ distributions in  \cref{app:smc}.

\begin{restatable}{proposition}{bdsmc}
    \propheader{(Bidirectional SMC Bounds)} \label{prop:smc_proposition}
The log partition function $\log \cZ_\sigma$ of a target distribution $\sigma(\bs_{1:T})$ can be lower and upper bounded by
\begin{align}
\begin{split}
 &\EEE_{\smcprop\left(
 \smcargs
 \right)}\left[\log\prod_{t=1}^{T}\fr 1K\sum_{i=1}^{K}w_{t}\left(\s_{1:t}^{i}\right)\right]\leq\log\Z_{\si}\\
 &\qquad  \log\Z_{\si}\le\EEE_{\smctgt\left(
 \smcargs
 \right)}\left[\log\prod_{t=1}^{T}\fr 1K\sum_{i=1}^{K}w_{t}\left(\s_{1:t}^{i}\right)\right]\,.
 \end{split} \raisetag{10pt} \label{eq:smc_lb} %
\end{align}
\normalsize
The gap in the lower bound is $\DKL\pV{\smcprop(\bS)}{\smctgt(\bS)}$,
 and the gap in the upper bound is $\DKL\pV{\smctgt(\bS) }{\smcprop(\bS) }$.   
\end{restatable}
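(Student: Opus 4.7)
The plan is to exploit the identification of SMC with simple importance sampling in the extended state space $\bS = \{s_t^k, \omega_t^k\}_{k,t=1}^{K,T}$, which is already sketched in \cref{sec:smc_bg} (with full construction deferred to \cref{app:smc}). The single identity I will lean on is that $\smctgt(\bS)$ and $\smcprop(\bS)$ are constructed so that
\begin{align*}
\frac{\tsmctgt(\bS)}{\smcprop(\bS)} = \prod_{t=1}^{T}\frac{1}{K}\sum_{i=1}^{K}w_{t}(\s_{1:t}^{i}),
\end{align*}
with $\tsmctgt = \cZ_\sigma \cdot \smctgt$, and that the unbiasedness statement $\EEE_{\smcprop(\bS)}[\tsmctgt(\bS)/\smcprop(\bS)] = \cZ_\sigma$ from \cref{eq:smc_ess} holds. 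Once this identification is in hand, both inequalities reduce to a one-line argument.

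For the \textbf{lower bound}, I would apply Jensen's inequality to the concave $\log$:
\begin{align*}
\log \cZ_\sigma \;=\; \log \EEE_{\smcprop(\bS)}\!\left[\tfrac{\tsmctgt(\bS)}{\smcprop(\bS)}\right] \;\ge\; \EEE_{\smcprop(\bS)}\!\left[\log \tfrac{\tsmctgt(\bS)}{\smcprop(\bS)}\right],
\end{align*}
and substitute the product form above. To recover the gap, rewrite $\tsmctgt = \cZ_\sigma \smctgt$ so that $\log\tsmctgt/\smcprop = \log \cZ_\sigma - \log(\smcprop/\smctgt)$; taking expectations under $\smcprop$ and recognizing the second term as $\DKL\pV{\smcprop(\bS)}{\smctgt(\bS)}$ yields the claimed lower-bound gap.

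For the \textbf{upper bound}, I would use the complementary ``EUBO'' identity obtained by taking expectations under $\smctgt$ rather than $\smcprop$:
\begin{align*}
\EEE_{\smctgt(\bS)}\!\left[\log \tfrac{\tsmctgt(\bS)}{\smcprop(\bS)}\right] \;=\; \log \cZ_\sigma + \EEE_{\smctgt(\bS)}\!\left[\log \tfrac{\smctgt(\bS)}{\smcprop(\bS)}\right] \;=\; \log \cZ_\sigma + \DKL\pV{\smctgt(\bS)}{\smcprop(\bS)},
\end{align*}
and conclude by nonnegativity of the KL divergence. Substituting the product form gives the second inequality in \cref{eq:smc_lb} with the stated gap.

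The main obstacle is not either inequality (each is one line of Jensen / KL nonnegativity once the extended-space IS identity is set up), but rather specifying $\smctgt(\bS)$ concretely enough to make the upper bound operational, i.e., so that one can actually draw $\bS \sim \smctgt$. The standard construction I would use (à la \citet{andrieu2010particle, maddison2017filtering}) ``plants'' a single exact sample $\bs_{1:T} \sim \sigma$ into one particle index, then fills in the remaining $K-1$ trajectories and all ancestor indices according to a conditional distribution compatible with the resampling schedule of $\smcprop$. I would defer the explicit form of $\smctgt$ and the verification of $\tsmctgt/\smcprop = \prod_t \tfrac{1}{K}\sum_i w_t$ to \cref{app:smc}, and treat it as a black box in the body of the proof.
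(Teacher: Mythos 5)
Your proposal is correct and follows essentially the same route as the paper: the paper's proof also reduces both bounds to the extended-state-space importance-sampling identity $\tsmctgt(\bS)/\smcprop(\bS) = \prod_{t}\frac{1}{K}\sum_{i}w_t(\bs_{1:t}^i)$ (established as a separate lemma via a telescoping cancellation over the planted-sample index history), and then invokes exactly the ELBO/EUBO decompositions you write out, citing \citet{brekelmans2022improving} rather than re-deriving Jensen and the KL gap inline. The only substantive content you defer --- the explicit construction of $\smctgt(\bS)$ with the exact sample planted along indices $\{j_t\}$ and the verification of the weight identity --- is precisely where the paper's appendix spends its effort, so your black-boxing of that step is an accurate assessment of where the real work lies.
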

See \cref{app:bounds_smc} for a detailed discussion and derivations.
The proof proceeds 
by adapting a general approach for extended state space 
log partition function bounds
from \citet{brekelmans2022improving} using the probabilistic interpretation of \gls{SMC} from \citet{andrieu2010particle, maddison2017filtering}.  %
With no resampling, the \gls{SIS} case recovers the \gls{IWAE} lower \citep{burda2015importance} and upper \citep{sobolev2019importance, brekelmans2022improving} bounds.  %

\vspace{-.3cm}
\paragraph{Sampling from $\smctgt$ for SMC Upper Bounds}
We now discuss sampling from $\smctgt(\bS)$ for the expectation in the upper bound, which requires a single, \textit{exact} sample from the
target distribution $\sigma(\bs_{1:T})$.   This sample may be obtained, for example, using the BDMC trick in \cref{sec:conditional_twist}.
Note that \cref{sec:smc_bg} and \cref{alg:smc} describe sampling from $\smcprop(\bS)$, which is used for 
the expectation in the lower bound.

Sampling from  $\smctgt(\bS)$ differs from
sampling from $\smcprop(\bS)$ by its treatment of the exact target sample.
In particular,
the partial sequence corresponding to the exact target sample is {guaranteed} to be cloned once %
at each resampling step. 
In other indices, resampling proceeds as in \cref{sec:smc_bg}, where the 
exact sample may be cloned additional times based on its incremental importance weights.  Finally, we sample $K-1$ next tokens from the proposal, while the value of the remaining chain is fixed by the exact target sample.
See \cref{app:smc} and \cref{alg:smc_target} for detailed discussion.

\vheader
\paragraph{Tightness of the Bidirectional Bounds}
Since the bounds in \cref{prop:smc_proposition} become exact as $K \rightarrow \infty$ for any proposal \citep{burda2015importance, maddison2017filtering}, we can use \gls{SMC} or \gls{IWAE} with large $K$ to sandwich the $\log$ partition function when $\sigma$ samples are available.   

For a given $K$, the gap in the extended state space $\log \cZ_\sigma$ bounds in \cref{prop:smc_proposition} provides further insight into the quality of twisted \gls{SMC} sampling via the distribution of the marginal sample $\bs_{1:T}^\sigma$ (\cref{sec:eval_applications}).
In particular, the data processing inequality suggests that 
${\DKL\pV{\smcprop(\bs_{1:T})}{\sigma(\bs_{1:T})} \leq \DKL\pV{\smcprop(\bS)}{\smctgt(\bS)}}$ and ${\DKL\pV{\sigma(\bs_{1:T})}{\smcprop(\bs_{1:T})} \leq \DKL\pV{\smctgt(\bS)}{\smcprop(\bS)}}$ \citep{grosse2015sandwiching, grosse2016measuring}.
Thus, if the difference between upper and lower bounds on $\log \cZ_\sigma$ is small, then we can conclude that the $K$-sample \gls{SMC} or \gls{SIS} procedures in \cref{sec:smc_bg} yield a single approximate sample $\bs_{1:T}^\sigma$ whose distribution $\smcprop(\bs_{1:T})$ is close to the target $\sigma(\bs_{1:T})$ in symmetrized KL divergence.\footnote{Note that the difference between upper and lower bound yields $\DKL\pV{\smctgt(\bS)}{\smcprop(\bS)}+\DKL\pV{\smcprop(\bS)}{\smctgt(\bS)}$.}

\vheader
\section{Related Work} \label{sec:related}
\vheader
In the previous sections, we have discussed related work as it fit within our \gls{SMC} framework for language modeling.   Note that \citet{lew2023sequential} consider \gls{SMC} sampling for language models, but do not learn twist functions or proposals. %

Decoding from language models 
to obtain diverse \citep{holtzman2019curious, vilnis2023arithmetic} or controlled generation \citep{zhang2023survey,  dathathri2019plug, krause2020gedi, yang2021fudge, guo2021efficient, qin2022cold, snell2022offline, hu2023amortizing}
is an active area of research.
Our \gls{SMC} resampling approach may be viewed as a principled \textit{probabilistic} extension of best-of-$K$ decoding methods.
\citet{mudgal2023controlled} propose a $K$-way $\argmax$ decoding scheme based on `prefix scorers' $\approxtwist{t}$ learned using \cref{eq:psi_recursion}, but also consider using these twists as logits for softmax sampling in the proposal. 
However, neither of these decoding schemes are 
aligned with our proposed \gls{SMC} framework, as we discuss in detail in \cref{app:mudgal_decoding_all}.  
For example, greedy $\argmax$ decoding with respect to the optimal twists in \cref{prop:optimal_twists} does not yield samples from the target distribution $\sigma(\bs_{1:T})$.

Finally, \gls{RL}-based 
methods such as \gls{PPO} maintain both a policy or proposal network and value network or advantage estimator during training.   From the soft \gls{RL} perspective 
in 
\cref{sec:soft_rl_connection} and
\cref{app:soft_rl}, the soft values play a similar role as
our twist functions for \gls{SMC} resampling.  
\citet{liu2023don} consider using Monte Carlo Tree Search (MCTS) based on \gls{PPO} value estimates to improve decoding, while \citet{chaffin2022ppl} consider discriminator-driven MCTS.

\newcommand{\experimentsubsection}[1]
{\subsubsection{#1}}

\vheader
\section{Experiments} \label{sec:experiments}
\vheader

We now 
illustrate empirically how our framework can be used to evaluate inference through $\log \zsigma$ bounds and KL divergences between the sampling and target distributions, providing meaningful quantitative comparison between various learning methods.
We consider a range of tasks throughout this section, including toxic story generation (as an example of uncovering rare undesirable behavior), generating reviews with varied sentiment, and infilling.
For the toxicity and infilling tasks, we consider the TinyStories model \cite{eldan2023tinystories}\footnote{\scriptsize\url{https://huggingface.co/roneneldan/TinyStories-33M}} as a small-scale model where the generation is coherent, and use the prompt of `Once upon a time, there was a'. For the toxicity task, we elicit responses judged to be toxic by the classifier from \citet{nicholas22aira}\footnote{\scriptsize\url{https://huggingface.co/nicholasKluge/ToxicityModel}}. For the sentiment task, we consider the GPT2-Medium\footnote{\scriptsize\url{https://huggingface.co/gpt2-medium}} model and a classifier trained on Amazon reviews.\footnote{\scriptsize\url{https://huggingface.co/LiYuan/amazon-review-sentiment-analysis}} Our code is available at \url{https://github.com/Silent-Zebra/twisted-smc-lm}~.  

\begin{figure}[t]
\vskip -0.1in
\begin{center}
\centerline{\includegraphics[width=1\columnwidth, trim = 0mm 2mm 0mm 7mm, clip]{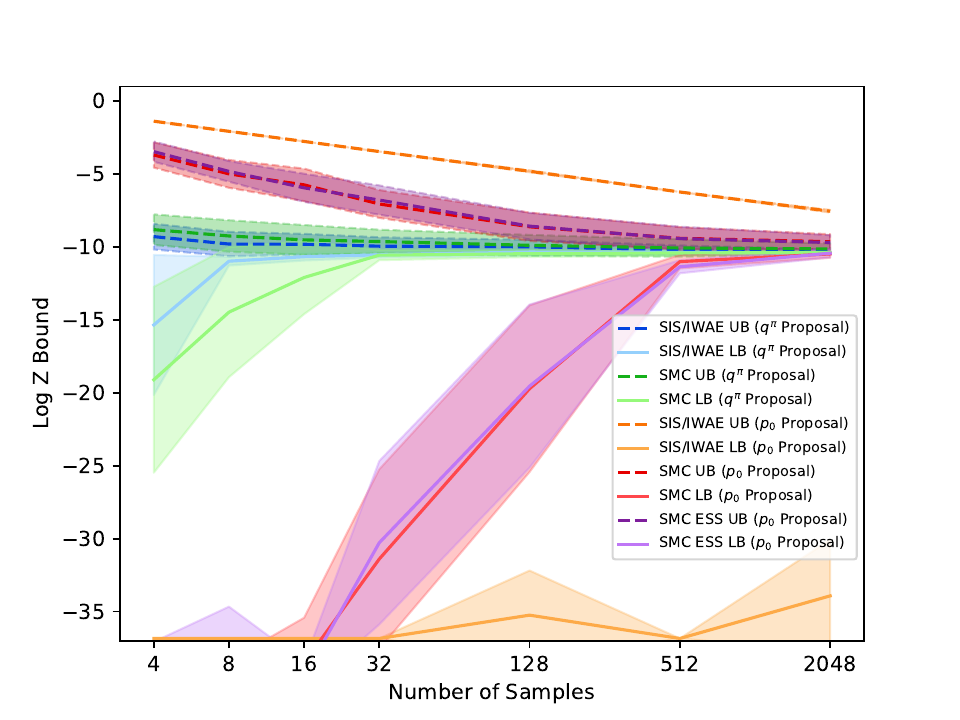}}
\vspace{-.1cm}
\caption{\small{Comparison of SIS (IWAE) and SMC bounds on $\log \zsigma$ for base proposal $\base$ and twist-induced proposal $\proptwist{}$, with twists learned with CTL.  With the twist-induced proposal, both SIS and SMC bounds are tight; with the base proposal, resampling with learned twists is needed. Resampling based on ESS instead of every-step resampling yields similar results.
}}
\vheader
\vheader
\label{fig:toxthresh}
\end{center}
\vheader
\end{figure}

\subsection{Comparing SIS and SMC for $\log \cZ_\sigma$ Estimation}\label{sec:toxthresh}
We first use our $\log \zsigma$ bounds to test how 
twisted \gls{SMC} can improve upon \gls{SIS} and efficiently sample rare events.
We consider the task of toxic story generation.
The target is defined as
$\sigma(\bs_{1:T}) \propto \base(\bs_{1:T})  \mathbb{I}[\bs_{1:T} \in \cC]  $ where $\cC \coloneqq \{\bs_{1:T} ~ | r(\bs_{1:T}) \leq {\eta} \}$, $r(\bs_{1:T})$ is the non-toxic logit, and the
threshold ${\eta} = -5$ corresponds to a greater than 99\% chance of being toxic. 
Rejection sampling under $\base$ yields exact samples for $\log \cZ_\sigma$ UB estimation, but can require hundreds of thousands of samples.  
Thus, this setting also allows us to test the effectiveness of approximate positive sampling for twist training when target samples are rare.

\cref{fig:toxthresh} demonstrates that training twists with \gls{CTL} and approximate positive sampling can significantly improve log partition function estimation and sampling efficiency.
We first note that both upper and lower bounds tighten as $K$ increases, as expected, for both \gls{SIS} and \gls{SMC}.
Using $\base$ as proposal, the \gls{SIS} LB (orange) generally fails to draw any samples meeting the threshold.  By contrast, \gls{SMC} resampling (red) with $\base$ proposal
eventually achieves \textit{tight} $\log \cZ_{\sigma}$ upper and lower bounds, yielding near-exact target samples (small KL divergence between the distribution over samples and the target distribution) by the reasoning in \cref{sec:eval}.
\begin{figure*}
\vheader
\begin{minipage}{.3\textwidth}
\centering
\resizebox{\columnwidth}{!}{%
\begin{tabular}{cccc}
\toprule
Proposal $q$ & Twist Learning & $\DKL\pV q{\si}$ & $\DKL\pV{\si}q$\tabularnewline
\midrule\midrule
Twisted & Contrastive & $1.11 \pm 0.05$ & \boldmath{$1.07 \pm 0.02$} \tabularnewline  \midrule
Twisted & RL & $1.52 \pm 0.09$ & $1.42 \pm 0.03$ \tabularnewline  \midrule
Twisted & SIXO & $1.71 \pm 0.06$ & $1.98 \pm 0.04$ \tabularnewline  \midrule
Twisted & FUDGE & $3.24 \pm 0.26$ & $2.00 \pm 0.13$ \tabularnewline  \midrule \midrule
DPG & -- & $1.09 \pm 0.05$ & $1.12 \pm 0.03$ \tabularnewline  \midrule
PPO & -- & \boldmath{$0.98 \pm 0.01$} & $1.32 \pm 0.04$ \tabularnewline
\bottomrule
\end{tabular}%
}
\vspace{-0.3cm}
\captionof{table}{\label{table:res_toxc} Toxicity (\cref{sec:toxclass})}
\end{minipage}%
\hfill %
\begin{minipage}{.3\textwidth}
\centering
\resizebox{\columnwidth}{!}{%
\begin{tabular}{cccc}
\toprule
Proposal $q$ & Twist Learning & $\DKL\pV q{\si}$ & $\DKL\pV{\si}q$\tabularnewline
\midrule\midrule
Twisted & Contrastive & \boldmath{$0.55 \pm 0.03$} & \boldmath{$0.47 \pm 0.01$} \tabularnewline  \midrule
Twisted & RL & $0.94 \pm 0.04$ & $0.81 \pm 0.02$ \tabularnewline  \midrule
Twisted & SIXO & $0.73 \pm 0.03$ & $0.59 \pm 0.02$ \tabularnewline  \midrule
Twisted & FUDGE & $1.01 \pm 0.07$ & $0.77 \pm 0.07$ \tabularnewline  \midrule \midrule
DPG & -- & $0.72 \pm 0.04$ & $0.57 \pm 0.01$ \tabularnewline  \midrule
PPO & -- & $1.04 \pm 0.31$ & $0.87 \pm 0.20$ \tabularnewline
\bottomrule
\end{tabular}%
}
\vspace{-0.3cm}
\captionof{table}{\label{table:res_sent} Sentiment (\cref{sec:sent})}
\end{minipage}%
\hfill %
\begin{minipage}{.385\textwidth}
\centering
\resizebox{\columnwidth}{!}{%
\begin{tabular}{cccc}
\toprule
Proposal $q_{o_T}$ & Twist Learning & $\E_{o_T}[\DKL\pV {q_{o_T}}{\sigmaot}]$ & $\E_{o_T}[\DKL\pV{\sigmaot}{q_{o_T}}]$\tabularnewline
\midrule\midrule
Twisted & Contrastive & $23.93 \pm 0.34$ & $8.87 \pm 0.05$ \tabularnewline  \midrule
Twisted & RL & $31.35 \pm 2.33$ & $14.96 \pm 1.69$ \tabularnewline  \midrule
Twisted & SIXO & $20.34 \pm 0.36$ & $7.43 \pm 0.04$ \tabularnewline  \midrule
Twisted & FUDGE & $60.93 \pm 2.82$ & $19.85 \pm 0.51$ \tabularnewline  \midrule \midrule
DPG & -- & \boldmath{$13.27 \pm 0.44$} & \boldmath{$4.90 \pm 0.03$} \tabularnewline  \midrule
PPO & -- & $19.37 \pm 0.41$ & $14.07 \pm 0.50$ \tabularnewline
\bottomrule
\end{tabular}
}
\vspace{-0.3cm}
\captionof{table}{\label{table:res_plast15_10} Infilling (\cref{sec:infilling})}
\end{minipage}
\vspace{-0.2cm}
\caption*{Forward and reverse KL divergences between the SMC or variational proposal distributions and the true target $\sigma$.}
\vheader
\end{figure*}

However, both \gls{SMC} and \gls{SIS} with the twist-induced proposal 
achieve
tight estimation and near-exact sampling of the target toxic outputs with orders of magnitude lower $K$.
Resampling does not appear to help or hurt these bounds, as the effect of the twists has been incorporated in the proposal $\proptwist{}$ in \cref{eq:twist_prop}. 
Thus, we conclude that 
using
the twist-induced proposal can provide significant efficiency gains over base model sampling.
\vheader
\subsection{Evaluating Twist-Induced or Variational Proposals}
\label{sec:experiment_kls}
We next leverage our $\log \cZ_\sigma$ bounds to evaluate 
single-sample inference using $\DKL\pV q{\si}$ and $\DKL\pV{\si}q$, as in \cref{sec:eval_applications}.
Across settings, we consider two \gls{SIS} proposal-learning methods: 
\gls{PPO} \citep{schulman2017proximal} which minimizes
$\DKL\pV{q}{\sigma}$ during optimization, and \gls{DPG}, which minimizes $\DKL\pV{\sigma}{q}$ \citep{parshakova2019distributional} (see \cref{app:prop}).  %
We consider four twist learning methods, including \gls{CTL} and \gls{RL} from \cref{sec:learning}, \textsc{SIXO} \citep{lawson2022sixo}, and \textsc{FUDGE} \citep{yang2021fudge} (see \cref{app:twist}).   For each, we measure KL divergences involving the twist-induced proposal $\proptwist{}$. 
Thus, \textit{these experiments  showcase two complementary applications of SMC}:  as a novel inference method yielding a tractable $\proptwist{}$, and as an evaluation method for any other inference method (such as \gls{PPO}) using $K$-sample bounds on $\log \cZ_\sigma$ to estimate the KL divergence.  

\vheader
\experimentsubsection{Generating Toxic Stories}\label{sec:toxclass}
\vheader
We consider toxic story generation as in \cref{sec:toxthresh}, but using a target $\sigma(\bs_{1:T}) \propto p_0(\bs_{1:T}) p(a=1| \bs_{1:T})
$, where $p(a=1| \bs_{1:T})$ denotes the probability of the text being judged as toxic by a classifier.  Compared to the thresholding target, this task provides a smoother gradient signal for learning (see \cref{app:expmt_choices}) but still allows for exact sampling via rejection sampling.   We train using approximate positive sampling, but provide an ablation with exact positive sampling results in \cref{sec:exact_vs_appr}.

We report KL divergences in \cref{table:res_toxc}.
We observe that \gls{PPO} learns the best proposal with respect to $\DKL\pV{q}\si$ while our \gls{CTL} method performs best with respect to
$\DKL\pV{\si}q$, which is consistent with the divergences minimized during training.
Finally, in 
\cref{app:qualitative}
we provide a qualitative example of a toxic story generated with \gls{CTL} for $\sigma(\bs_{1:T}) \propto p_0(\bs_{1:T}) p(a=1| \bs_{1:T})^{\beta}$ with $\beta = 10$, a case where no exact samples are available. 
\vheader
\experimentsubsection
{Generation with Varied Sentiment}\label{sec:sent}
\vheader
For the sentiment setting described earlier, we consider a prompt `I bought this' and target $\sigma(\bs_{1:T}) \propto  p_0(\bs_{1:T}) p(a=1| \bs_{1:T}) 
$, where $a=1$ indicates a 1-star review and exact samples are available by rejection sampling.
We train using approximate positive sampling
(see \cref{sec:exact_vs_appr} for comparison with exact).
While all methods achieve low KL divergences in \cref{table:res_sent}, \gls{CTL} performs best for both directions.

\vheader
\subsubsection{Infilling}\label{sec:infilling}
\vheader  

In this section, we demonstrate a \textit{conditional twist function} parameterization, where $\vartwist{t}(\bs_{1:t}, o_T)$ takes input $o_T$ which identifies the target distribution $\sigma(\bs_{1:T}|o_T)$ 
as in \cref{sec:conditional_twist}. We consider an infilling task \citep{lew2023sequential, hu2023amortizing}, where the observation variables
$o_T \coloneqq \bs_{T+1:T+c}$ correspond to continuation tokens, and their likelihood $\sigma(o_T|\bs_{1:T}) \coloneqq \base(\bs_{T+1:T+c} | \bs_{1:T})$ is evaluated under the base model, given generated $\bs_{1:T}$.   The target distribution corresponds to the posterior $\sigma(\bs_{1:T}|o_T)$.  Instead of training separate $\{\vartwist{t}\}$ for each $o_T$, we amortize learning of a conditional twist network $\vartwist{t}(\bs_{1:t}, o_T)$.

A second distinctive feature of this setting is that we \textit{train} from exact posterior or target samples,  %
which are readily available using the BDMC trick in \cref{sec:conditional_twist}.   In particular, we
may sample sequences of length $T+c$ from the base model $\bs_{1:T+c} \sim \base(\bs_{1:T+c}) = \sigma(\bs_{1:T}, o_T = \bs_{T+1:T+c})$, and interpret the prefix $\bs_{1:T} \sim \sigma(\bs_{1:T}|o_T =\bs_{T+1:T+c} )$ as a target sample.   Note that we do not 
explicitly 
control the continuations
tokens 
$o_T$ defining the tasks.
We evaluate average KL divergences over 2000 different 
$o_T =\bs_{T+1:T+c}$,
with $T=15$ and $c=10$, and report results in \cref{table:res_plast15_10}.

We find that \gls{DPG} performs best for both directions of the KL divergence in this setting, likely due to its ability to 
leverage exact positive samples 
by minimizing $\DKL\pV{\sigmaot}{q_{o_T}}$.   While \gls{CTL} also learns from exact positive samples, it requires approximate negative sampling and only performs comparably to \textsc{SIXO}, which uses exact positive samples and performs exact negative sampling under $\base$. Finally, \gls{PPO} trains from $q_{o_T}$ samples only, and performs relatively poorly with respect to $\DKL\pV{\sigmaot}{q_{o_T}}$.  We show qualitative results in 
\cref{app:qualitative}
to 
correlate
KL divergence results with sample 
quality.

{Using our KL divergence evaluation methods,
    we conclude DPG may be preferable when exact target samples are available (\cref{sec:infilling}, \cref{sec:exact_vs_appr}), while CTL may be preferable with approximate positive sampling (\cref{sec:toxclass}, \cref{sec:sent}).}

\vheader
\section{Conclusion}
\vheader
In this work, we have presented twisted \gls{SMC} as a principled probabilistic inference framework for solving numerous capability and safety tasks in LLMs. After discussing 
different design choices for twisted \gls{SMC} and their relation to related work, we proposed a novel contrastive method for twist learning. Furthermore, we have proposed novel bidirectional \gls{SMC} bounds for evaluating LLM inference methods. We demonstrated the effectiveness of our methods quantitatively and qualitatively in both sampling and evaluation across a variety of experimental settings.
\vheader
\section*{Acknowledgments}
\vheader
AM and RG acknowledge support from the Canada CIFAR AI Chairs program and from Open Philanthropy.
SZ thanks Juhan Bae for helping debug memory issues in the code.
Resources used in this research were provided, in part, by the Province of Ontario, the Government of
Canada, and companies sponsoring the Vector Institute. We thank the anonymous 
reviewers for helpful comments on earlier versions of this paper.

\vheader
\section*{Impact Statement}
\vheader
This paper is motivated by the social consequences of recent advances in the field of machine learning. Controlled generation from language models has the potential to improve safety 
through better steering of generation to human preferences, more efficient automated red-teaming, and the ability to estimate or bound probabilities of rare behaviors.
Any such work is inherently a double-edged sword; the same techniques used to generate samples from a harmless distribution of text could, with a single sign change, be repurposed for generating samples from a harmful distribution of text. Thus, better controlled generation (in our framework, better sampling from target distributions) can provide 
benefits in the hands of responsible users but can also magnify 
harms in the hands of malevolent users (who have access to model weights). 

Overall, we believe the potential positive social benefits of our work in evaluation and steering language model output towards desired target distributions outweigh the potential negatives stemming primarily from misuse.

\bibliography{refs}
\bibliographystyle{icml2024}

\appendix
\clearpage
\onecolumn
\part{Appendix} %
\parttoc
\newcommand{\finalpost}{\sigma(\bs_{1:T})}
\newcommand{\finalbase}{\base(\bs_{1:T})}
\vheader
\begin{table*}[t]
\caption{Examples of Target Posteriors in Language Model Finetuning and Controlled Generation
}
\label{table:posteriors}
\resizebox{\textwidth}{!}{
\small 
\begin{tabular}{lll}
\toprule
  \textbf{\textit{Type}} %
  & Target & References / Examples \\ \toprule
     Reward & $\finalpost \propto \finalbase e^{\pm \beta r(\bs_{1:T})}$ & RLHF  \citep{ziegler2019fine, ouyang2022training, korbak2022rl} \\ \midrule
     Continuation
     & $\finalpost \propto \finalbase \base(\bs_{T+1:T+c} | \bs_{1:T})^\beta$ &
     Generates tokens based on likelihood of future tokens $p(\bs_{T+1:T+c} | \bs_{1:T})$ \\ & &
     For $\beta = 1$, this is in-filling \citep{lew2023sequential}. \\
     & & 
     As $\beta \rightarrow \infty$, disregard $\finalbase$, focus on $\argmax$ of continuation prob.  \\ 
     & & \qquad - similar to adversarial prompt generation \citep{zou2023universal}
      \\ \midrule
     Indicator & $\finalpost \propto \finalbase \mathbb{I}[\bs_{1:T} \in \cC]  $& 
     Generations $\bs_{1:T}$ from this target must satisfy the properties of set $\cC$.  \\ 
        & ~~\text{where $\bbI$ is indicator of set $\cC$:} & - Meeting reward threshold $\cC_{r\leq{\eta}} \coloneqq \{\bs_{1:T} ~ | \pm r(\bs_{1:T}) \leq {\eta} \}$ \\
        & ~~ $\mathbb{I}[\bs_{1:T} \in \cC] = 1$ if $[\bs_{1:T} \in \cC]$ & 
     \text{\small - Containing topical or specific words in $\bs_{1:T}$}  \\
     & ~~ $\mathbb{I}[\bs_{1:T} \in \cC] = 0$ if $[\bs_{1:T} \notin \cC]$ &
     - Having certain structure or rhyme \citep{yang2021fudge}, \\ & & 
     - Valid output according to verifier \citep{cobbe2021training, dohan2022language}) 
      \\ \midrule
    Classifier &  $\finalpost \propto \finalbase p(y|\bs_{1:T})^\beta  $ & Class $y$ can be a binary (e.g. toxicity) or multinomial (e.g. 1-5 star reviews) \\ 
    & & Bayesian posterior for $\beta = 1$: $\finalpost = p(\bs_{1:T}|y) \propto \base(\bs_{1:T})p(y|\bs_{1:T})$ \\
    & & \citep{dathathri2019plug, krause2020gedi, liu2021dexperts}
    \\ \midrule
    Global  &  $\finalpost \propto \finalbase^{\beta}  $ &  Tempering on entire sequences (long-horizon) vs. per-token (myopic) \\ 
    Temperature & & \quad - yields higher quality generation in \citet{shih2023long}
    \\ \midrule
    \text{\small Distributional} & $\finalpost \propto  \finalbase e^{ \bm{\beta} \cdot \bm{T}(\bs_{1:T}) }$ &  KL minimization subj. expectation constraints on $\bm{T} = \{ T_i\}$   \\
    & & \text{\small $\argmin \DKL(\optdist(\bs_{1:T})\| p_0(\bs_{1:T}))$ s.t. $\mathbb{E}_{\optdist}[\bm{T}]=\bm{\eta}_{\bm{\beta}}$}\\
    & & ($\bm{\beta}$ = optimal Lagrange multipliers for constraints $\bm{\eta}$) \\
   & & \text{\small \qquad e.g. gender roles/references 
   \citep{khalifa2020distributional}} 
    \\ \bottomrule 
    \\
    \textbf{\textit{Intermediate}}   &  & References / Examples \\ \toprule
    Indicator & $\reward{t} =  \mathbb{I}[s_t \in \cC]$ \text{ or }  $\mathbb{I}[\bs_{1:t} \in \cC]$
    &  words of specific length, or specific sets of tokens 
    \\ 
    & & \text{\small ~~\citep{khalifa2020distributional,lew2023sequential}}\\
    Product of \\ Experts & 
    $\sigma(\bs_{1:T}) \propto \prod_{m=1}^M \prod_{t=1}^T \base(s_t | \bs_{1:t-1}, \bs_0^{(m)})
$ 
    & \text{\small prompt intersection \citep{lew2023sequential}} \\
     \bottomrule
\end{tabular}
}
\normalsize
\vheader
\end{table*}

\vheader
\section{Proofs}\label{app:proofs}
\vheader

In this section, we present the sense in which the target marginals correspond to the \textit{optimal} intermediate distributions in twisted SMC.   We defer proof of \cref{prop:optimal_twists} from the main text to slightly more general version in \cref{app:int_reward} \cref{prop:opt_twists_int_reward}, although \cref{prop:opt_intermediate} provides the analogous statement in terms of the intermediate target distributions $\pi_t^*(\bs_{1:t}) = \sigma(\bs_{1:t})$ instead of the optimal twists $\psi_t^*$.

We also prove \cref{prop:transition} from the main text in \cref{app:proof-twist-ind-prop} and derive the gradient of the \gls{CTL} loss (\cref{eq:grad_separate_ebm}) in \cref{app:ctl_gradient}. %

\subsection{Proof for Optimal Intermediate Target Distributions}
\label{app:optimality}
In order to achieve sampling from the full joint distribution $\sigma(\bs_{1:T})$, each intermediate target $\sigma(\bs_{1:t})$ must match the intermediate marginal $\sigma(\bs_{1:t})$.   
To formalize this notion, we provide the following definition of optimality, justified by the fact that it yields an exact partition function estimator.

To do so, we will consider the multi-step importance weights
\begin{align}
\hspace*{-.2cm} w_{t:t+c-1}(\bs_{1:t+c-1}) = \prod \limits_{\tau=t}^{t+c-1} w_{\tau}(\bs_{1:\tau}) 
= \prod \limits_{\tau=t}^{t+c-1} \frac{\tpitwist{\tau}(\bs_{1:\tau})}{\tpitwist{\tau-1}(\bs_{1:\tau-1})q(s_{\tau}|\bs_{1:\tau-1})} = \frac{\tpitwist{t+c-1}(\bs_{1:t+c-1})}{\tpitwist{t-1}(\bs_{1:t-1})q(\bs_{t:t+c-1}|\bs_{1:t-1})}
\label{eq:c-step-weights}   \raisetag{-5pt} \tag{$c$-Step SMC Weights}
\end{align}
using a telescoping cancellation in the final equality.  The one-step weights correspond to $c=1$, denoted simply as $w_t$.

\begin{definition}[\propheader{Optimal Twisted SMC Sampling}]\label{definition:optimality}
For a given target distribution $\target(\bs_{1:T}\condzero) \propto \base(\bs_{1:T}) \finaltwist$,  we refer to a twisted SMC procedure, $\textsc{SMC}(\{\pi_t\}_{t=1}^T, q, K)$ or $\textsc{SMC}(\base, \{\psi_t\}_{t=1}^T, q, K)$ (with $\pi_T = \sigma$ or $\psi_T = \phi$), as \textup{optimal} if 
$c$-step importance weights $w_{t:t+c-1}(\bs_{1:t+c-1}) = \zfilter{t+c-1} / \zfilter{t-1}$
for all $1\leq t \leq T$ and $0\leq c \leq T - t + 1$.   
\end{definition}
Note, that 
the role of $\psi_t$ and $\zfilter{t}$ is specified in \cref{def:twist_targets}.   We assume $\pi_T = \target$ for the goal of estimating $\cZ_\sigma$, and show below that an optimal twisted SMC procedure yields an exact partition function estimator.

\vheader
\begin{proposition}[\propheader{Optimal SMC yields Exact Partition Function Estimation}]\label{proposition:exact}
For any optimal twisted SMC procedure, the resulting estimator of the partition function $\cZ_\sigma$ has zero bias and zero variance.  
\end{proposition}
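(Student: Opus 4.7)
The plan is to observe that the optimality condition in \cref{definition:optimality} forces the relevant importance weights to be deterministic constants, after which the SMC estimator telescopes exactly to $\cZ_\sigma$. First I would examine the one-step case ($c=1$), where optimality gives $w_t(\bs_{1:t}) = \zfilter{t}/\zfilter{t-1}$ for every $\bs_{1:t}$ in the support. Because this is a constant (independent of the sample index $k$ and of the particular proposal draw), the Monte Carlo average $\frac{1}{K}\sum_{k=1}^K w_t(\bs_{1:t}^k)$ equals $\zfilter{t}/\zfilter{t-1}$ almost surely.

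Next I would plug this into the every-step-resampling SMC estimator from \cref{alg:smc}, obtaining
\begin{align*}
\hat{\cZ}_\sigma^{\textsc{smc}} \;=\; \prod_{t=1}^{T} \frac{1}{K}\sum_{k=1}^{K} w_t(\bs_{1:t}^k) \;=\; \prod_{t=1}^{T} \frac{\zfilter{t}}{\zfilter{t-1}} \;=\; \frac{\zfilter{T}}{\zfilter{0}} \;=\; \cZ_\sigma,
\end{align*}
using the convention $\zfilter{0}=1$ and the fact that $\pi_T = \sigma$, so $\zfilter{T} = \cZ_\sigma$. Since $\hat{\cZ}_\sigma^{\textsc{smc}}$ equals $\cZ_\sigma$ deterministically, both its bias and its variance vanish.

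For the adaptive-resampling variant in footnote~\ref{footnote:smc_bound} with resampling times $\{t_r\}_{r=1}^R$, I would invoke the $c$-step case of \cref{definition:optimality}: the product of one-step weights between consecutive resampling times is exactly the multi-step weight $w_{t_{r-1}+1:t_r}(\bs_{1:t_r}) = \zfilter{t_r}/\zfilter{t_{r-1}}$, again a constant. The same averaging-then-telescoping argument applies block by block, yielding $\prod_{r=1}^R \zfilter{t_r}/\zfilter{t_{r-1}} = \zfilter{T}/\zfilter{0} = \cZ_\sigma$ almost surely.

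There is not really a hard step here; the proposition is essentially a verification that \cref{definition:optimality} was set up to make the telescoping work. The only mild subtlety is ensuring that the argument is insensitive to the random resampling indices $\omega_t^k$: since each $w_t^k$ takes the same value regardless of which prefix $\bs_{1:t-1}^{\omega_{t-1}^k}$ is chosen, resampling permutes indices whose weights are already identical and cannot introduce variance. Unbiasedness would then be an immediate corollary of the general identity $\cZ_\sigma = \mathbb{E}[\hat{\cZ}_\sigma^{\textsc{smc}}]$ recalled in \cref{eq:smc_ess}, but in the optimal case we obtain the stronger almost-sure equality.
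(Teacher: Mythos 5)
Your proposal is correct and follows essentially the same route as the paper's proof: invoke \cref{definition:optimality} to make every (block of) incremental weight(s) equal to the constant $\zfilter{t_r}/\zfilter{t_{r-1}}$, so the $K$-sample averages are deterministic and the product telescopes to $\zfilter{T}/\zfilter{0}=\cZ_\sigma$ almost surely, giving zero bias and zero variance. Your added remark that resampling merely permutes indices with identical weights is a nice explicit justification of a step the paper leaves implicit, but it does not change the argument.
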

\vheader 
\begin{proof}
As in \cref{footnote:smc_bound} or \cref{app:smc} \cref{alg:smc_target}, consider $\left\{ t_{r}\right\} _{r=1}^{R}$ index timesteps where resampling occurs and fix $t_{0}=0$ and $t_{R}=T$.   The SMC estimator of $\cZ_\sigma = \zfilter{T}$ becomes 
$ \hat{\cZ}_\sigma^{\textsc{smc}} =  
\prod_{r=1}^{R}\fr 1K\sum_{i=1}^{K}\left(\prod_{t=t_{r-1}+1}^{t_{r}}w_{t}\left(\s_{1:t}^{i}\right)\right)
$ for $\bS \sim q_{\textsc{SMC}}(\bS)$.
Using the optimality definition in \cref{definition:optimality}, we have $w_t(\bs_{1:t}) = \zfilter{t}/\zfilter{t-1}$ for all partial sequences $\bs_{1:t}$.
Noting the telescoping multiplicative cancellation and the fact that $ w_t(\bs_{1:t}^i) = \zfilter{t}/\zfilter{t-1}$ is constant with respect to indices $i \in [1,K]$, we have the following estimator for a single run of an optimal SMC procedure,
\small 
\begin{align}
    \hat{\cZ}^{\textsc{smc}}_\sigma =\prod_{r=1}^{R}\fr 1K\sum_{i=1}^{K}\left(\prod_{t=t_{r-1}+1}^{t_{r}}w_{t}\left(\s_{1:t}^{i}\right)\right) =     \prod_{r=1}^{R} \frac{\zfilter{t_r}}{\zfilter{t_{r-1}}}  = \frac{\zfilter{t_R}}{\zfilter{t_0}} = \frac{\zfilter{T}}{\zfilter{0}} = \cZ_\sigma
\end{align}
\normalsize
 as desired, assuming $\zfilter{0} = 1$.   Since $\hat{\cZ}^{\textsc{smc}}_\sigma = \cZ_\sigma$ is independent of $\bS$, we conclude
$\hat{\cZ}^{\textsc{smc}}_\sigma$
 has zero bias and zero variance.

Note that we could also define optimality in \cref{definition:optimality} using the 
condition that $w_{t:t+c-1}(\bs_{1:t+c-1}) = \text{const}$ for all $1 \leq t \leq T$ and $0 \leq c \leq T-t+1$.   Following similar derivations as above would yield $\hat{\cZ}^{\textsc{smc}}_\sigma = \text{const}$.   As we will show in \cref{app:smc},  $\hat{\cZ}^{\textsc{smc}}_\sigma$ is unbiased with $\mathbb{E}[\hat{\cZ}^{\textsc{smc}}_\sigma]=\cZ_\sigma$.   We thus conclude that $\hat{\cZ}^{\textsc{smc}}_\sigma = \cZ_\sigma$ with zero variance, and thus \cref{proposition:exact} holds.
\end{proof}

With this notion of optimality in mind, we demonstrate the following necessary and sufficient conditions.

\begin{proposition}[\propheader{Optimality Conditions}]  \label{prop:optimality_conditions}
The following conditions are necessary and sufficient for twisted SMC optimality,
    \begin{align}
    \begin{split}
    (i): \phantom{q^\optornot_t(s_t|\bs_{1:t-1})} \pi^\optornot_t(\bs_{1:t}) &= \sigma(\bs_{1:t}) \qquad \qquad \qquad \qquad \, \hfill \forall \quad 1 \leq t \leq T \\
    (ii): \phantom{\pi^\optornot_t(\bs_{1:t})} q^\optornot_t(s_t|\bs_{1:t-1}) &= \sigma(s_t|\bs_{1:t-1}) \qquad \qquad \qquad  \hfill \forall \quad 1 \leq t \leq T \,.
    \end{split}
    \end{align}
\end{proposition}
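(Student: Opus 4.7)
The plan is to show sufficiency and necessity of the two conditions by exploiting the telescoping form of the $c$-step importance weights in \cref{eq:c-step-weights}. The substantive content is essentially algebraic once we note that the optimality condition $w_{t:t+c-1}(\bs_{1:t+c-1}) = \zfilter{t+c-1}/\zfilter{t-1}$, together with the definitions $\pi_\tau = \tpitwist{\tau}/\zfilter{\tau}$, can be rewritten as
\begin{align*}
\pi_{t+c-1}(\bs_{1:t+c-1}) = \pi_{t-1}(\bs_{1:t-1}) \, q(\bs_{t:t+c-1} \mid \bs_{1:t-1})
\end{align*}
for all partial sequences, which is the key identity linking $\pi_t$ and $q$.

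For sufficiency, I would assume (i) and (ii) and simply plug into \cref{eq:c-step-weights}. Using $\tpitwist{\tau} = \zfilter{\tau}\sigma(\bs_{1:\tau})$ and $q(\bs_{t:t+c-1}\mid\bs_{1:t-1}) = \prod_{\tau=t}^{t+c-1}\sigma(s_\tau\mid\bs_{1:\tau-1}) = \sigma(\bs_{t:t+c-1}\mid\bs_{1:t-1})$, the target factors $\sigma(\bs_{1:t+c-1})/\sigma(\bs_{1:t-1})$ cancel against the proposal, leaving the desired $\zfilter{t+c-1}/\zfilter{t-1}$. This part should be a short direct calculation.

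For necessity, I would specialize the optimality hypothesis to $c=1$, yielding
\begin{align*}
\pi_t(\bs_{1:t}) = \pi_{t-1}(\bs_{1:t-1})\, q_t(s_t\mid\bs_{1:t-1}) \qquad \text{for all } \bs_{1:t}, \, 1 \leq t \leq T.
\end{align*}
Summing over $s_t$ gives $\sum_{s_t}\pi_t(\bs_{1:t}) = \pi_{t-1}(\bs_{1:t-1})$, which iterated downward from $T$ combined with the boundary condition $\pi_T = \sigma$ forces $\pi_t(\bs_{1:t}) = \sigma(\bs_{1:t})$ for every $t$, giving (i). Dividing the one-step identity through by $\pi_{t-1}(\bs_{1:t-1}) = \sigma(\bs_{1:t-1})$ then isolates $q_t(s_t\mid\bs_{1:t-1}) = \sigma(s_t\mid\bs_{1:t-1})$, establishing (ii).

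The main obstacles are minor bookkeeping rather than conceptual: making sure the boundary cases are handled correctly (in particular $\zfilter{0} = 1$ so that $c = t$ starting at $t=1$ recovers $\cZ_\sigma = \zfilter{T}$), and being explicit about support assumptions so that division by $\pi_{t-1}(\bs_{1:t-1})$ and $q_t(s_t\mid\bs_{1:t-1})$ is well-defined (one can restrict attention to the support of the proposal or argue the identities hold $q$-almost everywhere, which suffices for the optimality notion in \cref{definition:optimality}).
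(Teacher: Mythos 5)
Your proof is correct, and the sufficiency direction is essentially the paper's: plug (i) and (ii) into the telescoped $c$-step weights and watch the $\sigma$ factors cancel, leaving $\zfilter{t+c-1}/\zfilter{t-1}$ (the paper writes this out only for $c=1$ and lets the general case follow by taking products). Your necessity direction is organized a little differently and is arguably cleaner: you invoke only the $c=1$ case of the optimality hypothesis, rewrite it as $\pi_t(\bs_{1:t}) = \pi_{t-1}(\bs_{1:t-1})\,q_t(s_t|\bs_{1:t-1})$, marginalize over $s_t$ to get downward consistency of the $\pi_t$'s, and induct from the boundary $\pi_T=\sigma$. The paper instead factorizes the general $c$-step weight into a ratio of marginals of $\pi_{t+c}$ times a ratio of conditionals and argues both factors must be constant. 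Your version buys a slightly stronger statement — the one-step conditions alone already force (i) and (ii), and hence (via sufficiency) all multi-step conditions — at no extra cost. The support caveat you flag is handled no more carefully in the paper, so you are not missing anything there.
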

\begin{proof}
   \textit{(Necessary) Optimal Twisted SMC $\implies (i),(ii)$:}
    We begin by writing 
    the marginalization of the unnormalized density $\tpi^\optornot_{t+c}$ over prefixes of length $t$ as
\small    \begin{align}
    \tpi^\optornot_{t+c}(\bs_{1:t}) = \sum_{\bs_{t+1:t+c}} \tpi^\optornot_{t+c}(\bs_{1:t+c}) = \sum_{\bs_{t+1:t+c}} \base(\bs_{1:t+c}) \approxtwist{t+c}(\bs_{1:t+c}) = \base(\bs_{1:t})\sum_{\bs_{t+1:t+c}} \base(\bs_{t+1:t+c}|\bs_{1:t}) \approxtwist{t+c}(\bs_{1:t+c}) \nonumber
    \end{align}
    \normalsize
    The normalization constant of $\tpi^\optornot_{t+c}(\bs_{1:t})$ can easily be seen to be $\zfilterstar{t+c}$ after summing over $\bs_{1:t}$ above, which yields $\pi^\optornot_{t+c}(\bs_{1:t}) = \tpi^\optornot_{t+c}(\bs_{1:t})/\zfilterstar{t+c}$.
    We now factorize the $c$-step incremental importance weights (at step $t+1$, see \cref{eq:c-step-weights}) using the above identities, which imply that $\tpi^\optornot_{t+c}(\bs_{1:t+c}) = \zfilterstar{t+c}\pi^\optornot_{t+c}(\bs_{1:t+c}) = \zfilterstar{t+c}
    \pi^\optornot_{t+c}(\bs_{1:t})\pi^\optornot_{t+c}(\bs_{t+1:t+c}|\bs_{1:t})$ and
\small
\begin{align}
w_{t+1:t+c}(\bs_{1:t+c}) 
= \frac{\tpi^\optornot_{t+c}(\bs_{1:t+c})}{\tpi^\optornot_{t}(\bs_{1:t})q^\optornot(\bs_{t+1:t+c}|\bs_{1:t})}  
= \frac{\zfilterstar{t+c}}{\zfilterstar{t}} \frac{
    \pi^\optornot_{t+c}(\bs_{1:t})}{\pi^\optornot_{t}(\bs_{1:t})} \frac{\pi^\optornot_{t+c}(\bs_{t+1:t+c}|\bs_{1:t})}{q^\optornot(\bs_{t+1:t+c}|\bs_{1:t})} 
\end{align}
\normalsize
In order to have $ w_{t+1:t+c}(\bs_{1:t+c}) = {\zfilterstar{t+c}}/{\zfilterstar{t}}  $ in general, we thus require $ \pi^\optornot_{t+c}(\bs_{1:t}) = \pi^\optornot_{t}(\bs_{1:t})$ and $ {\pi^\optornot_{t+c}(\bs_{t+1:t+c}|\bs_{1:t})}={q^\optornot(\bs_{t+1:t+c}|\bs_{1:t})}$ for all $t$ and $c \leq T-t$.  

 \textit{ (Sufficient) $(i),(ii) \implies $ Optimal Twisted SMC:}
Consider the incremental importance weights using $(i)$ and $(ii)$, 
\small 
\begin{align}
 w_{t}(\bs_{1:t}) = \frac{\tpitwist{t}^*(\bs_{1:t})}{\tpitwist{t-1}^*(\bs_{1:t-1})\propopttwist{t}(s_t|\bs_{1:t-1})} = 
 \frac{\zfilter{t} \sigma(\bs_{1:t})}{\zfilter{t-1} \sigma(\bs_{1:t-1}) \sigma(s_t|\bs_{1:t-1})}  = \frac{\zfilter{t}}{\zfilter{t-1}}
\end{align}
\normalsize
which matches the optimality definition in \cref{definition:optimality}.
 \end{proof}

\begin{proposition}[\propheader{Optimal Intermediate Target Distributions}]\label{prop:opt_intermediate}
For a given target distribution $\target(\bs_{1:T}\condzero)$ (\cref{eq:tgt_rewards}), 
the following conditions are equivalent, and are necessary 
for optimality of a twisted SMC procedure involving $\{\pi^\optornot_{t} \}_{t=1}^T$,
\begin{align}
\begin{split}
(i)
: \qquad \pi^\optornot_t(\bs_{1:t}) &= \sum_{s_{t+1}} \pi^\optornot_{t+1}(\bs_{1:t+1}) \qquad  \qquad \hfill \forall \quad 1 \leq t \leq T-1 \,, \\
(ii)
: \qquad \pi^\optornot_t(\bs_{1:t}) &= \sum_{\bs_{t+1:t+c}} \pi^\optornot_{t+c}(\bs_{1:t+c}) \qquad ~~~  \hfill \forall \quad 1 \leq t \leq T-1, ~ 1 \leq c \leq T-t \,,
\\ (iii): \qquad \pi^\optornot_t(\bs_{1:t}) &= \sigma(\bs_{1:t}) \qquad \qquad \qquad \qquad \hfill \forall \quad 1 \leq t \leq T \,.
\end{split}
\end{align}
Conditions $(i)$ and $(iii)$ directly correspond to the recursions for the optimal twist functions given in \cref{prop:optimal_twists} and \cref{prop:opt_twists_int_reward}.
\end{proposition}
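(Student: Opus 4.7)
The plan is to establish a cycle of implications $(iii)\Rightarrow(ii)\Rightarrow(i)\Rightarrow(iii)$, and then cite \cref{prop:optimality_conditions} (already proved in \cref{app:optimality}) to conclude necessity for optimality. The entire argument reduces to manipulating marginalization identities and invoking the boundary condition $\pi_T^* = \sigma(\bs_{1:T})$ which is fixed by construction (since we always set $\pi_T = \sigma$ when the goal is to estimate $\cZ_\sigma$).

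The easy implications are: $(ii)\Rightarrow(i)$ by specializing to $c=1$; and $(iii)\Rightarrow(i),(ii)$ follows immediately from the fact that marginals of any joint distribution satisfy the consistency relation, i.e.\ $\sum_{\bs_{t+1:t+c}} \sigma(\bs_{1:t+c}) = \sigma(\bs_{1:t})$, so substituting $\pi_\tau^* = \sigma(\bs_{1:\tau})$ makes both $(i)$ and $(ii)$ trivial identities among marginals of $\sigma$. For $(i)\Rightarrow(ii)$ I would use a straightforward induction on $c$: the base case $c=1$ is $(i)$ itself, and the inductive step writes
\begin{align*}
\sum_{\bs_{t+1:t+c+1}} \pi^*_{t+c+1}(\bs_{1:t+c+1})
 &= \sum_{\bs_{t+1:t+c}} \Bigl( \sum_{s_{t+c+1}} \pi^*_{t+c+1}(\bs_{1:t+c+1}) \Bigr)
 = \sum_{\bs_{t+1:t+c}} \pi^*_{t+c}(\bs_{1:t+c})
 = \pi^*_t(\bs_{1:t}),
\end{align*}
using $(i)$ at step $t+c$ inside and the inductive hypothesis outside.

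The main step is $(i)\Rightarrow(iii)$, which I would handle by backward induction on $t$ starting from $t = T$. The base case is the boundary condition $\pi_T^*(\bs_{1:T}) = \sigma(\bs_{1:T})$ that is built into the twisted SMC setup. For the inductive step, assuming $\pi_{t+1}^*(\bs_{1:t+1}) = \sigma(\bs_{1:t+1})$, apply $(i)$ to get
\begin{equation*}
\pi^*_t(\bs_{1:t}) = \sum_{s_{t+1}} \pi^*_{t+1}(\bs_{1:t+1}) = \sum_{s_{t+1}} \sigma(\bs_{1:t+1}) = \sigma(\bs_{1:t}),
\end{equation*}
which establishes $(iii)$ at step $t$.

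For necessity for optimality, I would simply invoke condition $(i)$ of \cref{prop:optimality_conditions}, which already shows $\pi_t^*(\bs_{1:t}) = \sigma(\bs_{1:t})$ is necessary for any optimal twisted SMC procedure; this is condition $(iii)$ here, and by the equivalences just established, $(i)$ and $(ii)$ are equally necessary. The most delicate point to state cleanly is the reliance on the boundary condition $\pi_T^* = \sigma$ in the backward induction; without it, $(i)$ alone only determines the $\pi_t^*$ up to the choice of the terminal distribution, so I would flag this explicitly when setting up the induction rather than leave it implicit.
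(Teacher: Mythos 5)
Your proof is correct and takes essentially the same approach as the paper: both establish the equivalences by unrolling the one-step marginalization recursion (you formalize the paper's ``$\ldots$'' as explicit induction) and both rely on the boundary condition $\pi_T^* = \sigma$ for $(i)\Rightarrow(iii)$, with necessity delegated to \cref{prop:optimality_conditions}. Your explicit flagging of the dependence on $\pi_T^*=\sigma$ is a minor improvement in rigor but not a different argument.
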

\begin{proof}
\textit{$(i) \iff (ii)$:}  It is clear that $(ii) \implies (i)$ as a special case for $c = 1$.   To show $(i) \implies (ii)$,  we have
\small
\begin{align*}
\pi^\optornot_t(\bs_{1:t}) = \sum_{s_{t+1}} \pi^\optornot_{t+1}(\bs_{1:t+1}) =\sum_{s_{t+1}}\sum_{s_{t+2}} \pi^\optornot_{t+2}(\bs_{1:t+2}) = ... = \sum_{\bs_{t+1:t+c}} \pi^\optornot_{t+c}(\bs_{1:t+c}).
\end{align*}
\normalsize
\textit{$(i) \implies (iii):$}  Recursively applying $(i)$ until time $T$ suggests that
\small 
\begin{align*}
\pi^\optornot_t(\bs_{1:t}) = \sum_{s_{t+1}} \pi^\optornot_{t+1}(\bs_{1:t+1}) =\sum_{s_{t+1}}\sum_{s_{t+2}} \pi^\optornot_{t+2}(\bs_{1:t+2}) = ... = \sum_{\bs_{t+1:T}} \pi^\optornot_{T}(\bs_{1:T}) = \sum_{\bs_{t+1:T}} \sigma(\bs_{1:T})  = \sigma(\bs_{1:t}).
\end{align*}
\normalsize
\textit{$(iii) \implies (i):$}  The target marginals clearly satisfy the recursion %
\small 
\begin{align*}
    \sigma(\bs_{1:t})\coloneqq \sum_{\bs_{t+1:T}} \sigma(\bs_{1:T}) = 
\sum_{s_{t+1}}\sum_{\bs_{t+2:T}} \sigma(\bs_{1:T}) = \sum_{s_{t+1}} \sigma(\bs_{1:t+1}).
\end{align*} 
\normalsize
\end{proof}

\vheader
\subsection{Proof of Twist-Induced Proposal}\label{app:proof-twist-ind-prop}
\vheader 
\twistinduced*
\begin{proof}
We seek to minimize the variance of the resulting importance weights, subject to a constraint on the proposal probabilities summing to 1.   Introducing a Lagrange multiplier $\lambda(\bs_{1:t-1})$, we have
\small
\begin{equation}
\resizebox{\textwidth}{!}{\ensuremath{
    \min \limits_{q(\args)} \mathbb{E}_{q(\args)}\left[ \left(\frac{\tpitwist{t}(\argstone)}{\tpitwist{t-1}(\argst)q(\args)}\right)^2 \right] -  \left( \mathbb{E}_{q(\args)}\left[ \left(\frac{\tpitwist{t}(\argstone)}{\tpitwist{t-1}(\argst)q(\args)}\right) \right]\right) ^2  + \lambda(\argst) \left( \sum_{s_{t}} q(\args) - 1 \right) 
    }}
    \nonumber
\end{equation}
\normalsize
Taking $\frac{\delta}{\delta q}(\cdot) = 0$ implies
\small
\begin{align}
    0&= \left(\frac{\tpitwist{t}(\argstone)}{\tpitwist{t-1}(\argst)q(\args)}\right)^2 
    - 2 q(\args) \left(\frac{\tpitwist{t}(\argstone)}{\tpitwist{t-1}(\argst)q(\args)}\right)\frac{\tpitwist{t}(\argstone)}{\tpitwist{t-1}(\argst)q(\args)^2} + \lambda(\argst) \nonumber 
\end{align}
\normalsize
where the derivative in the second term is zero since the $q(\args)$ cancel.  Finally, we have
\small
\begin{align}
 \frac{\tpitwist{t}(\argstone)^2}{\tpitwist{t-1}(\argst)^2 q(\args)^2}
    &= \lambda(\argst)  \nonumber \\
    q^*(\args) &= \frac{1}{\sqrt{\lambda(\argst)}} \frac{\tpitwist{t}(\argstone)}{\tpitwist{t-1}(\argst)} \quad = \frac{1}{\zonestep{t}{t-1}} \base(s_{t}|\bs_{1:t-1}) \psi_{t}(\bs_{1:t}) \nonumber
\end{align}
\normalsize
where $\zonestep{t}{t-1}$ (or $\lambda$) is chosen to enforce normalization.
\end{proof}

We focused on the one-step twist-induced proposal in \cref{prop:transition}.   However, this proposal is \textit{not optimal} for resampling every $c$ steps (as would also occur, for example, with adaptive resampling).
\begin{proposition}[\propheader{Multi-Step Twist Induced Proposal (Generalization of \cref{prop:transition})}]\label{prop:multi-step-optimal-proposal}
For resampling $c$-steps ahead, the optimal proposal (over $\bs_{t+1:t+c-1}$) which minimizes the variance of the importance weights $ w_{t:t+c-1}(\bs_{1:t+c-1})$ is given by 
\begin{align}
\proptwist{}{(\bs_{t:t+c-1}|\bs_{1:t-1})} &= 
\frac{ \base(\bs_{t:t+c-1}|\bs_{1:t-1}) \approxtwist{t+c-1}(\bs_{1:t+c-1})}{\sum \limits_{\bs_{t:t+c-1}} \base(\bs_{t:t+c-1}|\bs_{1:t-1}) \approxtwist{t+c-1}(\bs_{1:t+c-1})}. \nonumber 
\end{align}
\normalsize
\end{proposition}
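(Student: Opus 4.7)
The plan is to mimic the proof of \cref{prop:transition}, but working with the $c$-step importance weight rather than the one-step weight. Using the telescoping identity from \cref{eq:c-step-weights}, the quantity whose variance we want to minimize over the joint proposal $q(\bs_{t:t+c-1}\mid\bs_{1:t-1})$ is
\begin{align*}
w_{t:t+c-1}(\bs_{1:t+c-1}) = \frac{\tpitwist{t+c-1}(\bs_{1:t+c-1})}{\tpitwist{t-1}(\bs_{1:t-1})\,q(\bs_{t:t+c-1}\mid\bs_{1:t-1})}.
\end{align*}
Because $\bs_{1:t-1}$ is fixed in the conditional expectation, the denominator factor $\tpitwist{t-1}(\bs_{1:t-1})$ is a constant and can be pulled out of the variance, so the problem reduces to minimizing a functional of the form $\mathrm{Var}_{q}\!\left[\tpitwist{t+c-1}(\bs_{1:t+c-1})/q(\bs_{t:t+c-1}\mid\bs_{1:t-1})\right]$ over normalized distributions $q(\cdot\mid\bs_{1:t-1})$.

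First, I would set up the Lagrangian exactly as in the one-step proof, introducing a multiplier $\lambda(\bs_{1:t-1})$ that enforces $\sum_{\bs_{t:t+c-1}} q(\bs_{t:t+c-1}\mid\bs_{1:t-1}) = 1$. Taking the functional derivative $\delta/\delta q$ of
\begin{align*}
\mathbb{E}_{q}\!\left[\left(\tfrac{\tpitwist{t+c-1}}{\tpitwist{t-1}\,q}\right)^{2}\right] - \left(\mathbb{E}_{q}\!\left[\tfrac{\tpitwist{t+c-1}}{\tpitwist{t-1}\,q}\right]\right)^{2} + \lambda(\bs_{1:t-1})\!\left(\sum_{\bs_{t:t+c-1}} q - 1\right)
\end{align*}
the second (mean squared) term contributes zero, exactly as in the one-step case, since the $q$ cancels inside the expectation. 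Setting the derivative to zero then yields $q^{*}(\bs_{t:t+c-1}\mid\bs_{1:t-1}) \propto \tpitwist{t+c-1}(\bs_{1:t+c-1}) / \tpitwist{t-1}(\bs_{1:t-1})$, where the proportionality constant absorbs $\sqrt{\lambda}$ and the $\bs_{1:t-1}$-only denominator.

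Next, I would specialize this using the twisted-target definition in \cref{def:twist_targets}: $\tpitwist{\tau}(\bs_{1:\tau}) = \base(\bs_{1:\tau})\,\psi_\tau(\bs_{1:\tau})$ for $\tau < T$. Dividing and canceling the common $\base(\bs_{1:t-1})$ factor gives $q^{*}(\bs_{t:t+c-1}\mid\bs_{1:t-1}) \propto \base(\bs_{t:t+c-1}\mid\bs_{1:t-1})\,\psi_{t+c-1}(\bs_{1:t+c-1})$, and imposing $\sum_{\bs_{t:t+c-1}} q^{*} = 1$ fixes the normalizer to the denominator in the claimed formula. I don't expect a real obstacle: the main subtleties are just the book-keeping about which variables are ``free'' (the block $\bs_{t:t+c-1}$) versus conditioned (the prefix $\bs_{1:t-1}$), and verifying that the telescoping in \cref{eq:c-step-weights} really does collapse all intermediate twists $\psi_{t},\dots,\psi_{t+c-2}$, so that only $\psi_{t+c-1}$ appears in the optimal proposal. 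The case $c=1$ recovers \cref{prop:transition} as a sanity check.
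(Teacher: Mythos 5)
Your proposal is correct and follows essentially the same route as the paper, which proves \cref{prop:multi-step-optimal-proposal} by repeating the Lagrangian/functional-derivative argument of \cref{prop:transition} verbatim with the $c$-step weights $w_{t:t+c-1}(\bs_{1:t+c-1})=\tpitwist{t+c-1}(\bs_{1:t+c-1})/\big(\tpitwist{t-1}(\bs_{1:t-1})q(\bs_{t:t+c-1}|\bs_{1:t-1})\big)$ from \cref{eq:c-step-weights}. Your observation that the mean-squared term has vanishing derivative and that the telescoping collapses all intermediate twists so only $\approxtwist{t+c-1}$ survives is exactly the intended reasoning.
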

The proof follows the same reasoning as in the proof of \cref{prop:transition} above, using the multistep weights $ w_{t:t+c-1}(\bs_{1:t+c-1})=\frac{\tpitwist{t+c-1}(\bs_{1:t+c-1})}{\tpitwist{t-1}(\bs_{1:t-1})q(\bs_{t:t+c-1}|\bs_{1:t-1})}$ from \cref{eq:c-step-weights}.

Note that the denominator is not usually tractable for $c > 1$ in language modeling applications.   

\vheader
\subsection{Derivation of CTL Gradient}\label{app:ctl_gradient}

\begin{lemma}[\propheader{Derivation of CTL Gradient}]
For the \gls{CTL} loss $ \min\limits_{\thb} \cL_{\text{CTL}}(\thb)$$ \coloneqq \min\limits_{\thb}\sum_{t=1}^{T}\DKL\pV{\target(\bs_{1:t})}{\pitwisttheta{t}\left(\bs_{1:t}\right)}  $,
the (negative) gradient with respect to the parameters $\thb$ is given by
\small{
\begin{equation}
\hspace*{-.2cm}
- \nabla_{\thb} \cL_{\text{CTL}}(\thb) = 
\sum \limits_{t=1}^T \EEE_{\target(\bs_{1:t})}\left[\gr_{\thb}\log\vartwist{t}\left(\bs_{1:t}\right)\right]-\EEE_{\pitwisttheta{t}\left(\bs_{1:t}\right)}\left[\gr_{\thb}\log\vartwist{t}\left(\bs_{1:t}\right)\right]
\label{eq:grad_separate_ebm_app}
\end{equation}
} \normalsize
\end{lemma}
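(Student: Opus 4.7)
The plan is to expand the KL divergence at each timestep, drop terms independent of $\thb$, and then apply the standard log-partition-derivative identity to rewrite the gradient of $\log \zfiltertheta{t}$ as an expectation under $\pitwisttheta{t}$.

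First, I would use \cref{def:twist_targets} to write $\pitwisttheta{t}(\bs_{1:t}) = p_0(\bs_{1:t})\,\vartwist{t}(\bs_{1:t})/\zfiltertheta{t}$, where $\zfiltertheta{t} = \sum_{\bs_{1:t}} p_0(\bs_{1:t})\,\vartwist{t}(\bs_{1:t})$. Expanding a single term of the loss gives
\begin{equation*}
\DKL\pV{\target(\bs_{1:t})}{\pitwisttheta{t}(\bs_{1:t})} = \EEE_{\target(\bs_{1:t})}\!\left[\log \target(\bs_{1:t}) - \log p_0(\bs_{1:t}) - \log \vartwist{t}(\bs_{1:t})\right] + \log \zfiltertheta{t}.
\end{equation*}
Since $\target(\bs_{1:t})$ and $p_0(\bs_{1:t})$ do not depend on $\thb$, only the last two terms contribute to $\nabla_{\thb}$, yielding $-\nabla_{\thb} \DKL = \EEE_{\target(\bs_{1:t})}[\nabla_{\thb}\log \vartwist{t}(\bs_{1:t})] - \nabla_{\thb}\log \zfiltertheta{t}$.

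Next, I would apply the standard log-partition trick to the second term. Writing $\nabla_{\thb}\log \zfiltertheta{t} = \frac{1}{\zfiltertheta{t}} \sum_{\bs_{1:t}} p_0(\bs_{1:t})\,\nabla_{\thb}\vartwist{t}(\bs_{1:t})$ and then inserting $\vartwist{t}/\vartwist{t}$ to convert to a score-function form gives
\begin{equation*}
\nabla_{\thb}\log \zfiltertheta{t} = \sum_{\bs_{1:t}} \frac{p_0(\bs_{1:t})\,\vartwist{t}(\bs_{1:t})}{\zfiltertheta{t}}\,\nabla_{\thb}\log \vartwist{t}(\bs_{1:t}) = \EEE_{\pitwisttheta{t}(\bs_{1:t})}\!\left[\nabla_{\thb}\log \vartwist{t}(\bs_{1:t})\right].
\end{equation*}
Substituting this back produces the claimed per-timestep contribution, and summing over $t=1,\dots,T$ gives \cref{eq:grad_separate_ebm_app}.

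There is no real obstacle here; the derivation is routine once the dependence on $\thb$ is cleanly isolated to $\vartwist{t}$ and $\zfiltertheta{t}$. The only subtlety worth flagging is mild regularity (interchanging $\nabla_{\thb}$ with the finite sum over $\bs_{1:t}$ is immediate in the discrete language-model setting, and requires $\vartwist{t}(\bs_{1:t}) > 0$ so that $\log \vartwist{t}$ and its gradient are defined). I would state this positivity assumption once at the outset and otherwise present the derivation as the two-line computation above.
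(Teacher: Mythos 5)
Your proposal is correct and follows essentially the same route as the paper's own derivation in the appendix: expand the KL using the form of $\pitwisttheta{t}$ from \cref{def:twist_targets}, observe that only $\log\vartwist{t}$ and $\log\zfiltertheta{t}$ depend on $\thb$, and apply the log-partition-derivative identity to turn $\nabla_{\thb}\log\zfiltertheta{t}$ into $\EEE_{\pitwisttheta{t}}[\nabla_{\thb}\log\vartwist{t}]$. The only cosmetic difference is that the paper carries a $\nabla_{\thb}\log\base(\bs_{1:t})$ term inside the normalized sum before noting it vanishes, whereas you drop it immediately.
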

\begin{proof}
Consider expanding the form of $\pitwisttheta{t}(\bs_{1:t})$ using \cref{eq:filtering}, noting that the normalization $\log \zfilter{t}$ is independent of $\bs_{1:t}$.  Taking the gradient with respect to $\thb$ using the log derivative identity $\nabla_{\thb} f(\thb) = f(\thb) \nabla_{\thb} \log f(\thb)$,  we have
\small
\begin{align*}
   - \nabla_{\thb} \cL_{\text{CTL}}(\thb) &= - \nabla_{\thb} \left( \sum_{t=1}^{T} \mathbb{E}_{\target(\bs_{1:t})}\left[ \log \target(\bs_{1:t}) - \log \base(\bs_{1:t}) - \log \vartwist{t}(\bs_{1:t}) \right] + \log \sum_{\bs_{1:t}}\base(\bs_{1:t})\vartwist{t}(\bs_{1:t})
   \right) \\
   &=   \sum_{t=1}^{T} \mathbb{E}_{\target(\bs_{1:t})}\left[ \nabla_{\thb} \log \vartwist{t}(\bs_{1:t}) \right]   - \sum_{t=1}^{T}  \sum_{\bs_{1:t}} \frac{\hphantom{\sum_{\bs_{1:t}}}\base(\bs_{1:t})\vartwist{t}(\bs_{1:t})}{\sum_{\bs_{1:t}}\base(\bs_{1:t})\vartwist{t}(\bs_{1:t})}  \nabla_{\thb} \Big(\log  \base(\bs_{1:t}) + \log \vartwist{t}(\bs_{1:t}) \Big)   \\
   &= \sum_{t=1}^{T} \left( \mathbb{E}_{\target(\bs_{1:t})}\left[ \nabla_{\thb} \log \vartwist{t}(\bs_{1:t}) \right]   -  \mathbb{E}_{\pitwisttheta{t}(\bs_{1:t})}\left[  \nabla_{\thb}\log \vartwist{t}(\bs_{1:t})   \right]\right)
\end{align*}
\normalsize
\end{proof}

\vheader
\vheader
\section{SMC with Intermediate \bigphishorts and Connection with Soft Reinforcement Learning}\label{app:framework}
\vheader

In the main text, we focused 
on settings where the 
target distribution is defined by a 
\phishort %
$\finaltwist$ depending on full sequences only, as in \cref{eq:posterior}.  This setting highlights the need for (learned) twist functions to summarize the future expected value of the \phishort 
in the absence of intermediate target information.  

In this appendix, we generalize our exposition to show how our twisted SMC framework can accommodate settings with intermediate 
\phishortsnospace,
which is evocative of connections with soft reinforcement learning 
\citep{levine2018reinforcement}. 
We leverage intuition from soft RL while introducing our general probabilistic interpretation, by using $\eqrl$ to instantiate the soft RL special case.  In particular, soft RL will correspond to the terminal 
\phishort 
\begin{align}
\reward{t} \eqrl e^{\beta ~ r_t(\bs_{1:t})} \label{eq:soft_reward}\tag{soft RL $\phi_t$ Definition}
\end{align}
which corresponds to $\finaltwist = e^{\beta r_T(\bs_{1:T})}$ if the 
\phishort
is given at the final step only (as in RLHF, \citet{korbak2022rl}).
However, we defer detailed discussion of soft RL to \cref{app:soft_rl}.    See \cref{table:posteriors} for several examples of intermediate 
\phishortsnospace.

Finally, we formalize a notion of conditional target distributions and twist functions in \cref{app:conditional_twist_expo}, which generalizes the exposition in the main text and captures our conditional twist learning experiments in \cref{sec:infilling}.

\vheader
\subsection{Twisted SMC with Intermediate \bigphishorts}\label{app:int_reward}
\vheader

To generalize the exposition in the main text, we might consider defining the target as
\begin{align}
    \sigma(\bs_{1:T}) \coloneqq \frac{1}{\cZ_\sigma} \base(\bs_{1:T}) \left( \prod \limits_{t=1}^{T} \reward{t} \right) 
    \eqrl \frac{1}{\cZ_\sigma} \base(\bs_{1:T}) e^{\beta \sum_{t=1}^T r_t(\bs_{1:t})}
    \label{eq:tgt_rewards}
\end{align}
where \cref{eq:posterior} and the main text exposition corresponds to $\reward{t} = 1$ for $t < T$.   

\paragraph{Optimal Twists with Intermediate \bigphishorts}
Using \cref{eq:tgt_rewards}, the marginal distribution $\sigma(\bs_{1:t}) = \sum_{\bs_{t+1:T}} \sigma(\bs_{1:T})$ over $t$ tokens becomes
\begin{align}
      \sigma(\bs_{1:t}) = & \frac{1}{\cZ_\sigma} \base(\bs_{1:t}) \left( \prod \limits_{\tau=1}^{t} \reward{\tau} \right) \left(\sum \limits_{\bs_{t+1:T}} \base(\bs_{t+1:T}|\bs_{1:t})  \prod \limits_{\tau=t+1}^{T} \reward{\tau} \right) \label{eq:int_reward_marginal} \\
      \eqrl &
      \frac{1}{\cZ_\sigma}  \base(\bs_{1:t}\condzero) \expof{ \beta \sum\limits_{\tau=1}^{t} r_\tau(\bs_{1:\tau}) } \left( \sum_{\bs_{t+1:T}} \base(\bs_{t+1:T}|\bs_{1:t}) \expof{ \beta \sum\limits_{\tau=t+1}^{T} r_\tau(\bs_{1:\tau})} \right) \tag{soft RL special case}
\end{align}
As in \cref{prop:optimal_twists}, the goal of the optimal twist functions is to facilitate sampling from the intermediate marginals $\sigma(\bs_{1:t})$ of the target distribution $\sigma(\bs_{1:T})$.

We consider two different quantities involved in defining the optimal twists, which differ in their treatment of the intermediate reward.   For the soft RL setting, this corresponds to the natural distinction between $Q$-values and (soft) value functions $V_t$.  
\begin{align}
\begin{split}
      \sigma(\bs_{1:t}) = &\frac{1}{\cZ_\sigma} \base(\bs_{1:t}) \left( \prod \limits_{\tau=1}^{t-1} \reward{\tau} \right) \underbrace{ \reward{t}\Big( \underbrace{ \sum \limits_{\bs_{t+1:T}} \base(\bs_{t+1:T}|\bs_{1:t})  \prod \limits_{\tau=t+1}^{T} \reward{\tau} }_{
      \optimaltwistphi{t}(\bs_{1:t}) \colonpropto  } \Big) }_{\psi_t^*(\bs_{1:t})\colonpropto} \label{eq:int_reward_marginal_qv} \\[2ex]
     \eqrl  &\frac{1}{\cZ_\sigma} \base(\bs_{1:t}) \left(
     \expof{ \beta \sum\limits_{\tau=1}^{{t-1}} r_\tau(\bs_{1:\tau}) } \right)
       \underbrace{ 
    \expof{ \beta ~ r_t(\bs_{1:t})} 
    \Big( \underbrace{ \sum \limits_{\bs_{t+1:T}} \base(\bs_{t+1:T}|\bs_{1:t}) 
    \expof{ \beta \sum\limits_{\tau=t+1}^{T} r_\tau(\bs_{1:\tau})}  \Big) }_{\optimaltwistphi{t}(\bs_{1:t}) \colonpropto  ~ \expof{\beta V_t^*(\bs_{1:t}) } =}}_{\psi_t^*(\bs_{1:t}) \colonpropto ~ \expof{\beta r_t(\bs_{1:t}) + \beta V_t^*(\bs_{1:t}) } = }
    \end{split}
\end{align}
where 
$\colonpropto$ means `defined to be proportional to' and
$Q_t^*(s_t, \bs_{1:t-1})= r_t(\bs_{1:t}) + V_t^*(\bs_{1:t})$ in RL notation.  See \cref{app:soft_rl} for detailed derivations in the soft RL special case.
In general, 
$\vartwistphi{t}$ captures the expectation of \textit{future} \phishorts from $t+1:T$,
 analogous to the (soft) value function. 
The twists $\approxtwist{t}$ play a role analogous to a $Q$-value, estimating both the immediate $\phi_t$ and future value $\vartwistphi{t}$. 
In particular,
\begin{align}
\psi_t^*(\bs_{1:t}) \propto \reward{t} \optimaltwistphi{t}(\bs_{1:t}) \qquad \text{where} \quad \optimaltwistphi{t}(\bs_{1:t}) \colonpropto \sum \limits_{\bs_{t+1:T}} \base(\bs_{t+1:T}|\bs_{1:t})  \prod \limits_{\tau=t+1}^{T} \reward{\tau}
\end{align}
We continue to refer to $\approxtwist{t}$ as the \textit{twist functions} and focus on 
probabilistic interpretations based 
on 
$\approxtwist{t}$ instead of $\optimaltwistphi{t}$ (see \cref{app:final_timestep} for additional discussion).

To show that this notation is consistent with the main text, consider the optimal twists $\psi_t^*(\bs_{1:t}) =\reward{t} \optimaltwistphi{t}(\bs_{1:t})$ with no intermediate 
\phishortsnospace,
$\reward{t} = 1$ for $t< T$.  
For $t<T$, $\psi_t^*(\bs_{1:t}) = \optimaltwistphi{t}(\bs_{1:t})$ reflect the future expected 
\phishort
and for $t=T$, the terminal 
\phishort
is $\psi_T^*(\bs_{1:T}) = \reward{T}$, 
with no future 
\phishorts after step $T$, i.e. %
$\vartwistphi{T}=1$.

Building on  \cref{eq:int_reward_marginal}-(\ref{eq:int_reward_marginal_qv}) above, the following generalization of \cref{prop:optimal_twists} defines the `optimal' twists so as to obtain the intermediate target marginals $\sigma(\bs_{1:t})$ (see \cref{prop:opt_intermediate}). 

\begin{proposition}[\propheader{Optimal Twists}]\label{prop:opt_twists_int_reward}
For a given target distribution $\target(\bs_{1:T}\condzero)$ in \cref{eq:tgt_rewards},
the optimal twist functions yield intermediate $\{\pi_t\}_{t=1}^{T-1}$ which match the target marginals.   
In regions where $\base(\bs_{1:t})>0$, the optimal twists are given by
\begin{align}
\pitwist{t}^*(\bs_{1:t}) = 
\target(\bs_{1:t}) 
&= \frac{1}{\zfilterstar{t}}~  \base(\bs_{1:t}) \left( \prod \limits_{\tau=1}^{t-1} \reward{\tau} \right) 
\optimaltwist{t}(\bs_{1:t}) &= \frac{1}{\cZ_{{t}}^{\optimaltwistphi{}}}~  \base(\bs_{1:t}) \left( \prod \limits_{\tau=1}^{t-1} \reward{\tau} \right) \reward{t}
\optimaltwistphi{t}(\bs_{1:t}). \label{eq:optimal_filtering_app}
\end{align}
Up to a constant $c_t$ independent of $\bs_{1:t}$, the optimal twists $\optimaltwist{t}$ are given by
\begin{align}
 \optimaltwist{t}(\bs_{1:t}) &= c_t ~ \reward{t} \sum \limits_{\bs_{t+1:T}} \base(\bs_{t+1:T}|\bs_{1:t})  \prod \limits_{\tau=t+1}^{T} \reward{\tau} 
 \label{eq:optimal_twists_app} 
\end{align}
where $c_t$ is absorbed into the normalization constant $\zfilterstar{t}$.  
The optimal twists satisfy the recursion
\begin{align}
 \optimaltwist{t}(\bs_{1:t}) &= 
 \frac{\zfilterstar{t}}{\zfilterstar{t+1}}
 \reward{t}
 \sum \limits_{s_{t+1}} \base(s_{t+1}|\bs_{1:t})  
 \optimaltwist{t+1}(\bs_{1:t+1}) . \label{eq:app_recursion}
\end{align}
\end{proposition}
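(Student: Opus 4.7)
The plan is to prove the three claims in sequence, working from the definition of the intermediate target $\pitwist{t}^*(\bs_{1:t}) = \sigma(\bs_{1:t})$, which is established as the optimality condition in \cref{prop:opt_intermediate} and reflected in \cref{eq:optimal_filtering_app}. The starting point is to directly compute the marginal $\sigma(\bs_{1:t}) = \sum_{\bs_{t+1:T}} \sigma(\bs_{1:T})$ by summing \cref{eq:tgt_rewards} over $\bs_{t+1:T}$ and then to split the product of potentials into those indexed by $\tau \leq t$ (which depend only on $\bs_{1:t}$ and can be pulled outside the sum) and those indexed by $\tau > t$ (which remain inside). Using $\base(\bs_{1:T}) = \base(\bs_{1:t})\base(\bs_{t+1:T}|\bs_{1:t})$, this yields exactly the expression in \cref{eq:int_reward_marginal}.

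For the first claim (equation~\ref{eq:optimal_filtering_app} and the formula~\ref{eq:optimal_twists_app} for $\optimaltwist{t}$), I will compare the marginalized expression for $\sigma(\bs_{1:t})$ with the \emph{twisted intermediate target} form from \cref{def:twist_targets} (extended to include intermediate potentials as in \cref{eq:int_reward_marginal_qv}). The factor $\base(\bs_{1:t})\prod_{\tau=1}^{t-1}\reward{\tau}$ appears in both; matching the remaining factor identifies $\optimaltwist{t}(\bs_{1:t})$ with $\reward{t}\sum_{\bs_{t+1:T}} \base(\bs_{t+1:T}|\bs_{1:t})\prod_{\tau=t+1}^T \reward{\tau}$ up to a $\bs_{1:t}$-independent multiplicative constant $c_t$. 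Any such $c_t$ rescales $\optimaltwist{t}$ but is absorbed into $\zfilterstar{t}$, so the optimality condition $\pi_t^* = \sigma(\bs_{1:t})$ is invariant to this choice; in particular, setting $c_t = 1/\cZ_\sigma$ makes the two sides equal outright.

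For the recursion~\eqref{eq:app_recursion}, I will split the sum over $\bs_{t+1:T}$ in the formula for $\optimaltwist{t}$ as $\sum_{s_{t+1}}\sum_{\bs_{t+2:T}}$ and use the factorization $\base(\bs_{t+1:T}|\bs_{1:t}) = \base(s_{t+1}|\bs_{1:t})\base(\bs_{t+2:T}|\bs_{1:t+1})$, and then peel off the $\tau = t+1$ factor from $\prod_{\tau=t+1}^T \reward{\tau}$. The inner sum $\sum_{\bs_{t+2:T}}\base(\bs_{t+2:T}|\bs_{1:t+1})\prod_{\tau=t+2}^T \reward{\tau}$ is, up to the constant $c_{t+1}$ and the factor $\reward{t+1}$, exactly $\optimaltwist{t+1}(\bs_{1:t+1})$. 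Rewriting then gives $\optimaltwist{t}(\bs_{1:t}) = (c_t/c_{t+1})\reward{t}\sum_{s_{t+1}}\base(s_{t+1}|\bs_{1:t})\optimaltwist{t+1}(\bs_{1:t+1})$, and the ratio $c_t/c_{t+1}$ must equal $\zfilterstar{t}/\zfilterstar{t+1}$ because both choices of $c_t$ normalize their respective marginals to $\sigma(\bs_{1:t})$; summing \cref{eq:optimal_filtering_app} over $\bs_{1:t}$ and comparing the expressions for $\zfilterstar{t}$ and $\zfilterstar{t+1}$ confirms this.

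The only real subtlety is bookkeeping the constants $c_t$ and their relationship to the normalizers $\zfilterstar{t}$, since the optimal twists are only determined up to a sequence-independent scaling at each $t$; I expect this to be the main (mild) obstacle, and I will handle it by fixing an explicit normalization convention (e.g., $c_t = 1/\cZ_\sigma$) upfront, noting invariance of $\pi_t^*$ to the choice, and then reading off $\zfilterstar{t}/\zfilterstar{t+1}$ directly from the recursive relationship.
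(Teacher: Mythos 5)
Your proposal is correct and follows essentially the same route as the paper's proof: identify $\optimaltwist{t}$ by matching the marginalized target $\sigma(\bs_{1:t})$ against the twisted-target form (with the scale ambiguity $c_t$ absorbed into $\zfilterstar{t}$ via $c_t/\zfilterstar{t}=1/\cZ_\sigma$), then obtain the recursion by splitting $\sum_{\bs_{t+1:T}}$ into $\sum_{s_{t+1}}\sum_{\bs_{t+2:T}}$ and reading off $c_t/c_{t+1}=\zfilterstar{t}/\zfilterstar{t+1}$. The only cosmetic difference is that you marginalize forward while the paper substitutes the claimed formula and verifies; these are equivalent.
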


\begin{remark}[\propheader{Equivalence Class of $\psi_{t}$ and $\varPhi_{t}$}]
Note that any rescaling of $\psi_{t} \gets c_t \bar{\psi}_t$ by a constant with respect to $\bs_{1:t}$ will yield the same intermediate marginals $\pi_t(\bs_{1:t})$, due to the normalization constant $\cZ_{{t}}^{\psi}$ which scales with $\psi_{t}$.   This defines an equivalent class in the space of functions. The same statement holds for $\varPhi_{t}$. We express results such as \cref{eq:optimal_twists_app} using proportionality $\propto$. We define $\psi_{t}$ and $\varPhi_{t}$ as the members of their equivalent classes whose normalization $\cZ_{{t}}^{\psi}$ and $\cZ_{{t}}^{\varPhi}$ are equal. Thus, we have $\psi_{t}(\bs_{1:t}) = \reward{t} \varPhi_{t}(\bs_{1:t})$.
\end{remark}

\begin{proof}
Substituting \cref{eq:optimal_twists_app} into \cref{eq:optimal_filtering_app}, we obtain the desired marginal \cref{eq:int_reward_marginal},
\small 
\begin{align}
\pitwist{t}^*(\bs_{1:t}) 
= \frac{c_t}{\zfilterstar{t}}~  p_0(\bs_{1:t}\condzero) ~ 
\prod \limits_{\tau=1}^{t} \reward{\tau} 
\left( \sum \limits_{\bs_{t+1:T}} \base(\bs_{t+1:T}|\bs_{1:t})  \prod \limits_{\tau=t+1}^{T} \reward{\tau} \right) = 
\target(\bs_{1:t}) \nonumber
\end{align}
\normalsize
where the final equality 
follows from absorbing the constant $c_t$ into $\zfilterstar{t}$, 
with $\frac{1}{\cZ_\sigma} = \frac{c_t}{\zfilterstar{t}}$ and $\cZ_\sigma$ which normalizes $\tilde{\sigma}(\bs_{1:t})$.
We will now use 
$c_t = \frac{\zfilterstar{t}}{\cZ_\sigma}$ to show the recursion in \cref{eq:app_recursion}. 
Note that \cref{eq:optimal_twists_app} implies
\small 
\begin{align}
 \optimaltwist{t}(\bs_{1:t}) &= c_t ~ \reward{t} ~ \sum \limits_{s_{t+1}} \base(s_{t+1}|\bs_{1:t})  \Big( \underbrace{ \reward{t+1} \sum \limits_{\bs_{t+2:T}} \base(\bs_{t+2:T}|\bs_{1:t+1}) \prod \limits_{\tau=t+2}^{T} \reward{\tau} }_{ \frac{1}{c_{t+1}} \optimaltwist{t+1}(\bs_{1:t+1}) } \Big) \nonumber \\
 &= 
 \frac{\zfilterstar{t}}{\zfilterstar{t+1}} \reward{t}
 \sum \limits_{s_{t+1}} \base(s_{t+1}|\bs_{1:t})  
 \optimaltwist{t+1}(\bs_{1:t+1}) \nonumber
\end{align}\normalsize
 where the second line follows from $\frac{c_t}{c_{t+1}} = \frac{\zfilterstar{t}/\cZ_\sigma}{\zfilterstar{t+1}/\cZ_\sigma}$.   This demonstrates \cref{eq:app_recursion}.  
\end{proof}
This leads to the following definition of the intermediate twisting targets (we defer the soft RL special case to \cref{app:soft_rl}).
\begin{definition}[\propheader{Twisted Intermediate Targets }]\label{def:filtering_app}
Using approximate twist functions $\{ \approxtwist{t}\}_{t=1}^{T-1}$, we define the twisted intermediate target distributions
\begin{empheq}[
  left= {
  \pitwist{t}(\bs_{1:t}) 
   = \empheqlbrace }]
  {align}
  & \frac{1}{\zfilter{t}}~  p_0(\bs_{1:t}\condzero)  \left( \prod \limits_{\tau=1}^{t-1} \reward{\tau} \right) ~
\approxtwist{t}(\bs_{1:t}) \qquad (t < T) \tag{Twist Targets ($\approxtwist{}$) 
} \label{eq:filtering_app_psi}  \\
  & \frac{1}{\cZ_\sigma} \base(\bs_{1:T}) \prod_{t=1}^T \reward{t} \qquad \quad ~~ \qquad \qquad (t = T) \nonumber
\end{empheq}
\end{definition}

\vheader
\paragraph{One-Step Twist-Induced Proposal}
Using 
\cref{prop:transition} and
\cref{def:filtering_app} and noting that $\reward{t-1}$ is independent of $s_t$, we have the optimal one-step proposal
\begin{align}
\begin{split}
\proptwist{t}{(s_{t}|\bs_{1:t-1})} \propto \frac{\pitwist{t}(\bs_{1:t})}{\pitwist{t-1}(\bs_{1:t-1})} 
&= \frac{\zfilter{t-1}}{\zfilter{t}}  \base(s_{t}|\bs_{1:t-1}) \frac{\reward{t-1} \approxtwist{t}(\bs_{1:t}) }{\approxtwist{t-1}(\bs_{1:t-1})} \\
&\eqqcolon \frac{1}{\zonestep{t}{t-1}} \base(s_{t}|\bs_{1:t-1}) 
\approxtwist{t}(\bs_{1:t})\\
&= \frac{ \hphantom{\sum \limits_{s_t}} \base(s_{t}|\bs_{1:t-1}) 
\approxtwist{t}(\bs_{1:t})}{\sum \limits_{s_t} \base(s_{t}|\bs_{1:t-1}) 
\approxtwist{t}(\bs_{1:t}) }
\end{split}
\label{eq:prop_twist}\tag{Twist-Induced Proposal ($\approxtwist{}$) } 
\end{align}
where in the second line, we absorb terms which depend only on $\bs_{1:t-1}$ (and not $s_t$) into the normalization.
 In the soft RL special case, we have $\proptwist{t}{(s_{t}|\bs_{1:t-1})} \propto  \base(s_{t}|\bs_{1:t-1})  e^{\beta Q_t(s_t,\bs_{1:t-1})}$ (see \cref{eq:twist_induced_rl_app} below).

\vheader
\subsection{Conditional Twisted SMC}%
\label{app:conditional_twist_expo}
\vheader
To formalize our notion of conditional twists in the infilling experiments (\cref{sec:infilling}), we generalize our above framework to explicitly depend on
`observation' random variables $\{ \myo_t \}_{t=1}^T$.  This matches the common setting of SMC in state-space models \citep{briers2010smoothing, gu2015neural, lawson2022sixo, chopin2020introduction}.
Our derivations in this section also emphasize that the optimal twist functions in \cref{prop:opt_twists_int_reward} learn functions proportional to \textit{conditional likelihoods} of the future observation 
variables given the current sequence (see \cref{eq:cond_twist_lkd} below)).
We recover the unconditional targets in the main text for fixed $o_T= 1$.

Consider a target distribution $\sigmacond{t}(\bs_{1:T}| \ocond )$ conditioned on particular 
observation random variables $\ocond \coloneqq \{ o_{t}\}_{t=1}^{T}$.  
We define a probabilistic model over observations  $\sigmacond{t}(\myo_t|\bs_{1:t}) = \phi_t(o_t,\bs_{1:t})$~ as the intermediate \phishortnospace,\footnote{Note, 
rescaling $\rewardone{t}$ by a constant $c$ with respect to $o_t, \bs_{1:t}$ does not affect the target posterior in \cref{eq:cond_optimality}.  For example, 
with terminal \phishort only:  
$\sigmacond{T}(\bs_{1:T}|o_{T}) = \frac{ \base(\bs_{1:T})  ~ \phi_T(\bs_{1:T},o_T) /  
c
}{\sum_{\bs_{1:T}} \base(\bs_{1:T}) ~ \phi_T(\bs_{1:T},o_T) /  
c %
} = \frac{1}{\cZ_\sigma(o_T)} \base(\bs_{1:T}) \phi_T(\bs_{1:T},o_T)$ as long as the scaling factor 
is independent of $o_T$ and $\bs_{1:T}$. 
}  which yields the target posterior
\small 
\begin{align}
\begin{split}
\sigma(\bs_{1:T}|\bo_{1:T}) &= \frac{\phantom{\sum_{\bs_{1:T}} } \base(\bs_{1:T}) \left(\prod\limits_{t=1}^{T} \sigmacond{t}(\myo_t |\bs_{1:t}) \right) %
}{ \sum \limits_{\bs_{1:T}} \base(\bs_{1:T}) \left( \prod\limits_{t=1}^{T} \sigmacond{t}(\myo_t |\bs_{1:t}) \right)%
} 
= \frac{1}{\cZ_\sigma(\bo_{1:T})} \base(\bs_{1:T}) \left(\prod\limits_{t=1}^{T}\phi_t(o_t,\bs_{1:t}) \right) 
= \frac{ \base(\bs_{1:T}) \sigmacond{T}(\bo_{1:T} | \bs_{1:T})
}{\sigmacond{T}(\bo_{1:T} )}
\end{split}
\label{eq:cond_optimality}
\end{align}
\normalsize
where we interpret $\sigmacond{T}(\bo_{1:T} | \bs_{1:T}) = \prod_{t=1}^T \sigmacond{T}(\myo_t |\bs_{1:t})$ and $\cZ_\sigma(\bo_{1:T})=\sigmacond{T}(\bs_{1:T})$ to make the Bayesian posterior explicit in the last equality. 
  Note, we now seek to estimate a different partition function $\cZ_\sigma(\bo_{1:T})$ for each set of observation variables.

Using our infilling experiments in \cref{sec:infilling} as an example, consider (a sequence of) subsequent tokens $o_T = \bs_{T+1:T+c}$ as observation variables, where the observation model is simply the base language model $\sigmacond{}(o_T|\bs_{1:T}) \coloneqq  \base(\bs_{T+1:T+c} | \bs_{1:T})$.

Using \cref{eq:cond_optimality}, the intermediate marginals become 
\begin{align}
    \sigmacond{t}(\bs_{1:t}|\bo_{1:T}) &= \sum_{\bs_{t+1:T}} \sigmacond{T}(\bs_{1:T}|\bo_{1:T}) \nonumber \\
&= \sum_{\bs_{t+1:T}} \frac{1}{\sigmacond{T}(\bo_{1:T})} \base(\bs_{1:t})\base(\bs_{t+1:T}|\bs_{1:t}) \Big( \prod\limits_{t=1}^{T} \sigmacond{t}(\myo_t |\bs_{1:t}) \Big)   \nonumber \\
&= \frac{1}{\cZ_\sigma(\bo_{1:T})}
\base(\bs_{1:t})\left( \prod \limits_{\tau=1}^{t} \phi_\tau(o_\tau,\bs_{1:\tau}) \right)
\sum \limits_{\bs_{t+1:T}}  \base(\bs_{t+1:T}|\bs_{1:t}) \Big( \prod \limits_{\tau=t+1}^{T} \phi_\tau(o_\tau,\bs_{1:\tau})  \Big) %
\nonumber \\
&= \frac{1}{\cZ_\sigma(\bo_{1:T})}
\base(\bs_{1:t})\left( \prod \limits_{\tau=1}^{t} \phi_\tau(o_\tau,\bs_{1:\tau}) \right)
\sigma(\bo_{t+1:T}|\bs_{1:t})\,,
\label{eq:filtering_intermediate_optimality_conditional} 
\end{align}
noting that $\sigma(\bo_{t+1:T}|\bs_{1:t}) = \sum_{\bs_{t+1:T}} \sigma(\bo_{t+1:T}, \bs_{t+1:T}|\bs_{1:t})$ matches the second to last line.    

The optimal twists take a similar form as \cref{prop:opt_twists_int_reward}, but now as a function of the future observation or conditioning information.  Further, 
the optimal twists is proportional to the
conditional likelihoods (e.g., $\sigma(\bo_{t+1:T} |\bs_{1:t})$) of future observations given $\bs_{1:t}$, 
which marginalize over future tokens (e.g., $\bs_{t+1:T}$),
\begin{align}
\begin{split}
\optimaltwistphi{t}(\bs_{1:t}, \bo_{t+1:T}) &\proptoo{\bo_{t+1:T}} \sigma(\bo_{t+1:T} |\bs_{1:t}) = \sum \limits_{\bs_{t+1:T}}  \base(\bs_{t+1:T}|\bs_{1:t}) \Big( \prod \limits_{\tau=t+1}^{T} \phi_\tau(o_\tau,\bs_{1:\tau})  \Big)\,, \\
\qquad \qquad 
\optimaltwist{t}(\bs_{1:t}, \bo_{t:T} ) &\proptoo{\bo_{t:T}} \sigma(\bo_{t:T} |\bs_{1:t}) = \sum \limits_{\bs_{t+1:T}} \base(\bs_{t+1:T}|\bs_{1:t}) \Big( \prod \limits_{\tau=t}^{T} \phi_\tau(o_\tau,\bs_{1:\tau})  \Big)\,, %
\end{split}\label{eq:cond_twist_lkd}
\end{align}
where $f(x,\bo) \proptoo{\bo} g(x,\bo)$ denotes proportionality
up to a constant which depends on $\bo$ only: 
$\exists c(\bo) \colon f(x,\bo)=c(\bo) g(x,\bo)$. These equations can be confirmed by comparing \cref{prop:opt_twists_int_reward} with the last two lines in \cref{eq:filtering_intermediate_optimality_conditional}.

The intermediate marginals over partial sequences can finally be rewritten as either %
\begin{align}
\begin{split}
\sigma(\bs_{1:t}|\bo_{1:T}) &\proptoo{\bo_{1:T}}  \base(\bs_{1:t}) \left(\prod_{\tau=1}^{t} \phi_\tau( o_\tau, \bs_{1:\tau} )  \right) \optimaltwistphi{t}(\bs_{1:t}, \bo_{t+1:T})\,, \\
&= \base(\bs_{1:t}) \left(\prod_{t=1}^{t-1} \phi_\tau( o_\tau, \bs_{1:\tau} )  \right) \optimaltwist{t}(\bs_{1:t}, \bo_{t:T})\,.
\end{split}
\end{align}
We discuss the choice of parameterization using $\approxtwist{t}$ versus $\vartwistphi{t}$ in \cref{app:final_timestep}.

The conditional twist learning formulation matches the setting of \citet{lawson2022sixo}, to which we refer the reader for additional discussion.  We use this conditional perspective to derive classification losses for twist learning in \cref{app:sixo}-\ref{app:fudge}.

\vheader
\paragraph{Unconditional Targets as a Special Case}
In cases where we are only learning twists for a single set of conditioning information such as a single classifier label or a reward model, note that we can omit explicit conditioning information in $\psi_{t}(\bs_{1:t},o_t)$ and consider setting $\{o_t = 1\}_{t=1}^T$.  
  With terminal \phishort only as in the main text, we write $\sigmacond{T}(o_T = 1 | \bs_{1:T}) = \finaltwist$ and the overall target distribution as $\sigma(\bs_{1:T}) = \sigmacond{T}(\bs_{1:T}|o_T=1) \propto \base(\bs_{1:T})\reward{T}$.
Thus, the formulation in \cref{eq:cond_optimality}-\cref{eq:cond_twist_lkd} strictly generalizes our exposition in the main text and \cref{app:int_reward}.
With intermediate \phishortsnospace, we set $\sigma(\myo_{1:T}=\boldsymbol{1} | \bs_{1:T}) = \prod_{t=1}^T \reward{t}$.

Our notation also matches the exposition in \citet{levine2018reinforcement} for the soft RL case with a binary observation or `optimality' random variable $\sigma(\myo_t =1 | \bs_{1:t-1}, s_t) = e^{ \beta r_t(\bs_{1:t-1}, s_t) }$, where the reward is a function of the state $x_t = \bs_{1:t-1}$ and action $a_t = s_t$ pair (see the MDP interpretation in \cref{app:soft_rl}).

\vheader
\subsection{Connection with Soft Reinforcement Learning}\label{app:soft_rl}
\vheader
In this section, we more explicitly describe the soft reinforcement learning setting \citep{levine2018reinforcement} commonly used in \gls{RLHF} \citep{korbak2022rl} as a special case of our probabilistic framework.   
Again, we use notation $\eqrl$ to indicate that the expressions in this section correspond to a particular instance of our SMC framework where $\finaltwist = e^{\beta r(\bs_{1:T})}$. 

\vheader
\paragraph{Summary of Soft RL Notation}  To summarize the below derivations, we state the relevant assignments for the soft RL case.   We focus on the optimal case for simplicity, but note that approximate versions play identical roles,
\begin{align}
\reward{t} = e^{\beta ~ r_t(\bs_{1:t})} \qquad  \optimaltwist{t}(\bs_{1:t}) = e^{\beta r_t(\bs_{1:t}) + \beta V_t^*(\bs_{1:t})} = e^{\beta Q_t^*(s_t,\bs_{1:t-1})} \qquad \optimaltwistphi{t}(\bs_{1:t}) = e^{\beta V_t^*(\bs_{1:t})}
\tag{Twist to Soft RL}\label{eq:twist_to_soft_rl}
\end{align}
where $\optimaltwist{t}(\bs_{1:t}) = \reward{t} \optimaltwistphi{t}(\bs_{1:t})$ or $Q_t^*(s_t,\bs_{1:t-1}) = r_t(\bs_{1:t}) + V_t^*(\bs_{1:t})$.   In the other direction, we have
\begin{align}
 r_t(\bs_{1:t}) = \frac{1}{\beta} \log \reward{t}
\qquad Q_t^*(s_t,\bs_{1:t-1}) = \frac{1}{\beta}\log \optimaltwist{t}(\bs_{1:t}) \qquad  V_t^*(\bs_{1:t}) = \frac{1}{\beta}\log \optimaltwistphi{t}(\bs_{1:t}) 
\tag{Soft RL to Twist} \label{eq:soft_rl_t\myo_twist}
\end{align}
\vheader
\paragraph{MDP Interpretation}
To draw connections with soft RL, we view language model controlled decoding as a 
\gls{MDP}, where the prompt is drawn from an initial state distribution $\bs_0 \sim \nu_0$, an action policy $\pi(a_{t} | x_t) = \prop(s_{t} | \bs_{1:t-1})$ selects the next token $a_t = s_{t}$ given a partial sequence $x_t = \bs_{1:t-1}$ as the state, and deterministic environment transitions $P(x_{t+1} = \bs_{1:t} | a_t= s_{t}, x_t = \bs_{1:t-1}) = \delta(x_{t} = \text{concat}({s_{t}, \bs_{1:t-1})})$ append the selected token to update the state.  Discounting may also be included without difficulty.
The reward is given by $r_t(\bs_{1:t})$.

\vheader
\paragraph{Final Target Distribution}
We define the target distribution 
as the solution to the following variational optimization which solves the regularized MDP described above,
\begin{align}
\begin{split}
\target(\bs_{1:T}\condzero)
\eqrl \frac{1}{\cZ_\sigma}  \base(\bs_{1:T}\condzero) \expof{ \beta \sum\limits_{t=1}^{T} r_t(\bs_{1:t}) } %
&=\medmath{ \argmax %
\limits_{\optdist(\bs_{1:T}\condzero)} \mathbb{E}_{\optdist(\bs_{1:T}\condzero)}\Big[   
\sum\limits_{t=1}^{T} r_t(\bs_{1:t})\Big] 
-
\frac{1}{\beta}
\klof{ \optdist(\bs_{1:T})}{ \base(\bs_{1:T}\condzero) } }
\end{split}\label{eq:variational_post}
\intertext{
which corresponds to the choice 
$\reward{t} = e^{\beta ~ r_t(\bs_{1:t})}$ as in \cref{eq:twist_to_soft_rl}.
The soft value is defined as the maximum value of the above optimization for optimal $\optdist^*(\bs_{1:T})$, and corresponds to the scaled log partition function }
\begin{split}
\vopt_0(\bs_0) \coloneqq \frac{1}{\beta} \log \cZ_\sigma = \frac{1}{\beta} \log \sum\limits_{\bs_{1:T}} \base(\bs_{1:T}\condzero) \expof{\beta \sum\limits_{t=1}^{T}  r_t(\bs_{1:t})  }  
&=
\medmath{
\max \limits_{\optdist(\bs_{1:T}\condzero)} \mathbb{E}_{\optdist(\bs_{1:T}\condzero)}\Big[  
\sum\limits_{t=1}^{T} r_t(\bs_{1:t})\Big] 
-
\frac{1}{\beta}
\klof{ \optdist(\bs_{1:T})}{ \base(\bs_{1:T}\condzero) }
}
\end{split}\label{eq:variational_value}
\end{align}
which can be confirmed by substituting $ \optdist(\bs_{1:T}) = \target(\bs_{1:T})
$ from \cref{eq:variational_post} into the maximization on the right side of \cref{eq:variational_value}. %
Although we omit the dependence of $\cZ_\sigma(\bs_0)$ on the prompt $\bs_0$ for notational simplicity (see \cref{eq:posterior}), note that $\vopt_0 \coloneqq \vopt(\bs_0)$ naturally corresponds to the soft value of the prompt as the initial state in the MDP. 

\vheader
\paragraph{Optimal Intermediate Marginals and Soft Value}
Decomposing the maximization in \cref{eq:variational_value} into optimizations over each $\optdist(s_{t+1}|\bs_{1:t})$, we define the intermediate soft value $\vopt_{t}(\bs_{1:t})$ as the maximum of the expected future regularized reward 
\begin{align} %
\vopt_t(\bs_{1:t})= \frac{1}{\beta} \log \optimaltwistphi{t}(\bs_{1:t})  &\eqrl \frac{1}{\beta} \log \sum \limits_{\bs_{t+1:T}} \base(\bs_{t+1:T} | \bs_{1:t})~ \expof{ \beta 
 \sum \limits_{\tau=t+1}^{T} r_\tau(s_{1:\tau})} \label{eq:soft_value_app} \tag{Optimal Intermediate Soft Value} \\
 &= 
\medmath{ \max \limits_{\optdist(\bs_{t+1:T} | \bs_{1:t})} ~ \mathbb{E}_{\optdist(\bs_{t+1:T} | \bs_{1:t})}\Big[ \sum \limits_{\tau=t+1}^{T}  r_\tau(\bs_{1:\tau})
 \Big] - 
 \frac{1}{\beta} 
 \DKL\pV{\optdist(\bs_{t+1:T} | \bs_{1:t}) }{ \base(\bs_{t+1:T} | \bs_{1:t})}} \nonumber  \\
 &= \medmath{ \max \limits_{\optdist(s_{t+1}|\bs_{1:t})} \mathbb{E}_{\optdist(s_{t+1}|\bs_{1:t})}\Big[  
r_{t+1}(\bs_{1:t+1}) + \vopt_{t+1}(\bs_{1:t+1}) \Big] - \frac{1}{\beta} \klof{ \optdist(s_{t+1}|\bs_{1:t})}{ \base(s_{t+1}|\bs_{1:t}) } } \nonumber %
\end{align}
\normalsize
where, in the third line, we isolate the optimization over $q(s_t|\bs_{1:t-1})$ by (i) assuming optimality at $\tau < t$ and (ii) substituting the optimal value 
$\vopt_{t+1}(\bs_{1:t+1})=\max_{\optdist(\bs_{t+2:T} | \bs_{1:t+1})}[...]$
of the maximization over $ \optdist(\bs_{t+2:T} | \bs_{1:t+1})$ (i.e. recursively applying the second line). 

The optimal intermediate marginal can be written using either $\vopt_t(\bs_{1:t})$ or $Q_t^*(s_t, \bs_{1:t-1})$  form (as in \cref{eq:int_reward_marginal_qv} above, or by substituting the optimal $\vopt_t$ or $Q_t^*$ into the twist targets below).   

 \vheader
\paragraph{Twisted Intermediate Targets} 
We state the approximate twisting targets for \textit{both} $V_t$ or $Q_t$ parameterizations in order to make connections with soft RL losses in \cref{app:twist}.   
For approximate $V_t(\bs_{1:t})$ or $Q_t(s_t, \bs_{1:t-1})$, we have %
\begin{align}
\hspace*{-.2cm}
      \pitwist{t}(\bs_{1:t})  \eqrl  &\frac{1}{\cZ_t^{V}} \base(\bs_{1:t}) \expof{ \beta \sum\limits_{\tau=1}^{t-1} r_\tau(\bs_{1:\tau}) } \expof{\beta r_t(\bs_{1:t}) + \beta V_t(\bs_{1:t}) }   \tag{Twist Targets (Soft RL V)
      }  \hphantom{\expof{\beta Q_t^*(s_t, \bs_{1:t-1})}}  \hspace*{-.8cm} (t < T)   \label{eq:filtering_app_v} \\
      = &\frac{1}{\cZ_t^{Q}} \base(\bs_{1:t}) \expof{ \beta \sum\limits_{\tau=1}^{t-1} r_\tau(\bs_{1:\tau})} \expof{ \beta Q_t(s_t, \bs_{1:t-1}) } \hphantom{\expof{ \beta r_t(\bs_{1:t}) +  \beta V_t^*(\bs_{1:t})}} \hspace*{-.8cm} (t < T)  
       \label{eq:filtering_app_q}
       \tag{Twist Targets (Soft RL Q)
       }
\end{align}
where the final twisting target is given by \cref{eq:variational_post} and
the optimal $Q$-values are defined as
\begin{align}
Q_t^*(s_t, \bs_{1:t-1}) = r_t(\bs_{1:t}) +  V_t^*(\bs_{1:t}) 
\end{align}

\vheader
\paragraph{One-Step Proposal}
Finally, the optimal one-step proposal (e.g. in $V_t$ form) can be derived either (i) as the twist-induced proposal from \cref{eq:filtering_app_v} and \cref{prop:opt_twists_int_reward} or (ii) as the solution to the one-step optimization in the third line of \cref{eq:soft_value_app}.  As in \cref{eq:prop_twist},
\begin{align}
\optdist_{t}^\pi(s_{t}|\bs_{1:t-1}) \eqrl \frac{\hphantom{\sum_{s_{t}}} \base(s_{t}|\bs_{1:t-1}) \expof{\beta \left( r_{t}(\bs_{1:t}) + V_t(\bs_{1:t}) \right)} }{ \sum_{s_{t}} \base(s_{t}|\bs_{1:t-1}) \expof{\beta \left( r_{t}(\bs_{1:t}) + V_t(\bs_{1:t}) \right)} } \propto \base(s_{t}|\bs_{1:t-1}) \expof{\beta Q_t(s_t, \bs_{1:t-1})}. 
\label{eq:twist_induced_rl_app} \tag{Twist-Induced Proposal (soft RL)}
\end{align} 
We define the one-step log normalization constant induced by an approximate $V_t$ or $Q_t$ as $V_{V_t}$ or $V_{Q_t}$, respectively,
\begin{align}
V_{V_t}(\bs_{1:t-1}) \coloneqq \frac{1}{\beta} \log \sum_{s_{t}} \base(s_{t}|\bs_{1:t-1}) \expof{\beta \left( r_{t}(\bs_{1:t}) + V_t(\bs_{1:t}) \right)} \qquad  
V_{Q_t}(\bs_{1:t-1}) \coloneqq \frac{1}{\beta} \log \sum_{s_{t}} \base(s_{t}|\bs_{1:t-1}) \expof{\beta  Q_{t}(s_t, \bs_{1:t-1})} \label{eq:induced_value}
\end{align}
such that, for example, $\optdist_{t}^\pi(s_{t}|\bs_{1:t-1})  =  \base(s_{t}|\bs_{1:t-1}) \expof{\beta Q_t(s_t, \bs_{1:t-1}) - \beta V_{Q_t}(\bs_{1:t-1})}$.

\vheader
\paragraph{RLHF Minimizes $\DKL\pV{q}{\sigma}$}
Note that, for a given suboptimal $\optdist(\bs_{1:T}\condzero)$, the value %
of the variational optimization in \cref{eq:variational_post} is a lower bound on the (scaled) log partition function {$V^*_0 = \frac{1}{\beta} \log \cZ_\sigma$}.   Similarly to the standard Evidence Lower Bound, the gap in this lower bound is given by the \textsc{KL} divergence
\begin{align}
\frac{1}{\beta} \log \cZ_\sigma = 
    \underbrace{ \frac{1}{\beta} \DKL\pV{q(\bs_{1:T}\condzero)}{ \target(\bs_{1:T}\condzero) } \vphantom{\sum\limits_{t=1}^{T}} }_{ \text{ELBO gap } (\geq 0) } + \Big(  
    \underbrace{ \mathbb{E}_{\optdist(\bs_{1:T}\condzero)}\Big[ 
\sum\limits_{t=1}^{T} r_t(\bs_{1:t})\Big]- \frac{1}{\beta}
\klof{ \optdist(\bs_{1:T})}{ \base(\bs_{1:T}\condzero) } }_{\text{`ELBO': 
 \cref{eq:variational_post}}} \Big) \label{eq:elbo_gap}
\end{align}

In this sense, we consider soft RL or policy gradient methods such as \gls{PPO} which optimize \cref{eq:variational_post} as targeting $ \target(\bs_{1:T}\condzero)$
by minimizing $\DKL\pV{q(\bs_{1:T}\condzero)}{ \target(\bs_{1:T}\condzero)}$ \citep{korbak2022rl}.
\vheader
\subsection{
Remarks on Parameterization
}\label{app:final_timestep}
\vheader
 While the twisting targets (\cref{eq:filtering_app_psi}) and twist-induced proposal (\cref{eq:prop_twist}) may equivalently be parameterized using approximate $\vartwistphi{t}$, we focus on the $\approxtwist{t}$ parameterization to match the main text.   In particular, recall that the optimal twists satisfy $\optimaltwist{t}(\bs_{1:t}) = \reward{t} \optimaltwistphi{t}(\bs_{1:t})$ 
for all $t$.   With no intermediate \phishort ($\phi_t = 1$ for $t < T$), 
our approximate twists estimate $\approxtwist{t}(\bs_{1:t}) \approx \optimaltwistphi{t}(\bs_{1:t}) \propto \sum_{\bs_{t+1:T}} \base(\bs_{t+1:T} | \bs_{1:t}) \reward{T}$ for $t < T$.   
In this section, we describe how the presence of intermediate \phishorts may affect the choice of twist parameterization.

The 
twist-induced proposal may not be tractable to evaluate at the final timestep, since it may be costly to evaluate the terminal \phishort $\reward{T}$ for all $s_T \in \cV$ given a context $\bs_{1:T-1}$ (as described in \cref{sec:proposals}).   Thus, we learn an approximate $\approxtwist{T}(\bs_{1:T}) \approx \reward{T}$ for proposal sampling, which can be easily evaluated over $|\cV|$ next tokens.   The final $\pi_T(\bs_{1:T}) = \sigma(\bs_{1:T})$ is defined using $\finaltwist$ in order to preserve unbiased estimation.  However, after sampling the proposal according to $\approxtwist{T}$, we only need to evaluate $\finaltwist$ over $K$ full sequences to calculate the importance weights at the final step (\cref{eq:locally_normalized_is_weights}).   See \textit{Intermediate 
\bigphishort
Tractable over $K$ Sequences Only} paragraph below.

\vheader
\paragraph{Intermediate \bigphishorts Tractable over $|\cV|$ Sequences}  
However, in settings where the intermediate \phishorts $\reward{t}$ \textit{are} tractable to calculate for all $s_t \in \cV$ given $\bs_{1:t-1}$ (e.g. using an indicator function or forward pass in a transformer architecture, as in \cref{table:posteriors}), it may be useful to use a $\vartwistphi{t}$ parameterization of the twist targets and twist-induced proposal.   This allows us to use the \textit{exact} immediate \phishorts $\reward{t}$ alongside an estimated $\vartwistphitheta{t}$, instead of an approximate $\vartwist{t} \approx \phi_t \optimaltwistphi{t}$ which estimates both the immediate $\phi_t$ and future expected value of \phishorts $\optimaltwistphi{t}$.  Using notation established in \cref{eq:int_reward_marginal_qv} and \cref{prop:opt_twists_int_reward}, the twisting targets in \cref{eq:filtering_app_psi} can be rewritten using a $\vartwistphitheta{t}$ parameterization
\begin{align}
  \pitwisttheta{t}(\bs_{1:t}) 
   = 
  & \frac{1}{\zfilter{t}}~  p_0(\bs_{1:t}\condzero)  \left( \prod \limits_{\tau=1}^{t-1} \reward{\tau} \right) ~ \reward{t}
\vartwistphitheta{t}(\bs_{1:t}) \qquad (t < T) \tag{Twist Targets ($\vartwistphi{}$)
} \label{eq:filtering_app_phi}  
\end{align}
with $\pitwist{T}(\s_{1:T}) = \sigma(\s_{1:T})$ as before. 
The twist-induced proposal $\proptwist{t}{(s_{t}|\bs_{1:t-1})} \propto  \base(s_{t}|\bs_{1:t-1}) \reward{t} \vartwistphitheta{t}(\bs_{1:t})$ and its normalization constant are tractable in this case, by evaluating both the given $\reward{t}$ and parameterized $\vartwistphitheta{t}(\bs_{1:t})$ in a single forward pass and normalizing over the discrete vocabulary of next tokens.

\vheader
\paragraph{Intermediate \bigphishorts Tractable over $K$ Sequences Only}  
In cases where the intermediate 
\phishorts are
difficult to evaluate, we would like to limit evaluation of $\reward{t}$ to only $K$ partial sequences.   
In this case, parameterizing the twisted targets $\pitwist{t}$ using $\vartwist{t}$ or $Q_t^{\thb}$ (\cref{eq:filtering_app_psi}, \cref{eq:filtering_app_q}) instead of $ \vartwistphitheta{t}$ or $V_t^{\thb}$ may be preferable to ensure a tractable twist-induced proposal.
Separate parameterizations of the proposal (using $\psi_{t}^{\qparam}$) and targets ($\phi_{t} \vartwistphitheta{t}$) might also be considered.

In the case of the final timestep described above or in \cref{sec:proposals}, note that we use a learned $\psi_{T}^{\qparam}$ to parameterize a tractable variational proposal $q_T(s_T|\bs_{1:T-1})$.   In this case, we have no future value $\vartwistphi{T}(\bs_{1:T}) = 1$ and only need to evaluate the terminal \phishort $\finaltwist$ for calculating importance weights over $K$ sequences.

\newcommand{\sqlogloss}{{\log\text{Cons}}}
\newcommand{\policyevalloss}{{\text{Cons}}}

\vheader
\section{Twist Learning Losses}\label{app:twist}
\vheader
In this section, we describe various methods for twist learning beyond our proposed contrastive twist learning (CTL) procedure from \cref{sec:learning}.   In \cref{sec:consistency_twist}, we first describe several losses from the soft RL literature from a probabilistic perspective, building closely on our developments in \cref{app:int_reward}.   We then proceed to describe SIXO \citep{lawson2022sixo} and FUDGE \citep{yang2021fudge} in \cref{app:sixo}-\ref{app:fudge}.    

We emphasize losses found in related work or used as experimental baselines using equation tags (e.g. \cref{eq:sixo_nce}), where equations \cref{eq:rl_baseline}, \cref{eq:sixo_nce}, \cref{eq:fudge} are used in our experiments.
We consider settings with intermediate \phishorts in \cref{sec:consistency_twist}, but focus on the ($\phi_t=1$ for $t<T$) setting in the remainder of the section, as in the main text.

\vheader
\subsection{
Soft Q-Learning (RL) and Path Consistency Losses
from Log Importance Weights}\label{sec:consistency_twist}
\vheader

From the probabilistic perspective of %
the SMC log importance weights, we can derive several losses for twist learning, including soft Q-learning and path consistency learning (PCL) \citep{nachum2017bridging} losses from the soft RL literature. 

A general principle for deriving loss functions would be to minimize the variance of the (log) importance weights under some sampling distribution $\samplingdist{}$, which leads to constant importance weights at optimality.   To draw connections with previous work, we also consider minimizing the square of the log weights, which at optimality, ensures that $\log w = 0$ and $w=1$ are equal to a \textit{particular} constant.
We will proceed to parameterize the twist functions using parameters $\thb$
and consider 
loss terms which
minimize
the variance or square of $c$-step log weights at time $t$,
\begin{align}
   \cL_{\log\text{Var}}^{(t,c)}(\thb) 
   \coloneqq
   \text{Var}_{\samplingdist{}}\bigg[ \sum \limits_{\tau=t}^{t+c-1} \log w_\tau(\bs_{1:\tau}) \bigg] \qquad \qquad   
   \cL_{\sqlogloss}^{(t,c)}(\thb) \coloneqq 
   \mathbb{E}_{\samplingdist{}}\bigg[ \bigg( \sum \limits_{\tau=t}^{t+c-1} \log w_\tau(\bs_{1:\tau}) \bigg)^2 ~ \bigg].
\end{align}
\normalsize
$\cL_{\sqlogloss}^{(t,c)}(\thb)$ indicates `consistency' in $\log$-weight space for $c$-step-ahead weights at time $t$ (see \cref{eq:c-step-weights}).

We will consider various choices of parameterization and proposal in the following subsections.  For example, 
let 
$\cL_{\sqlogloss}^{(t,c)}(\thb ; \{ \psi_t, q_t^{\pi} \})$ denote the log-consistency loss corresponding to twisting targets parameterized by $\vartwist{t}$ and the twist induced proposal $q_t^{\pi}$ (note, our notation for the one-step weights $w_t(\bs_{1:t})$ does not make these choices explicit).

For reference, we derive the log importance weights with intermediate 
\phishorts
and arbitrary $q$ as
\small
\begin{align}
\begin{split}
\log w_t(\bs_{1:t})= \log \frac{\tpitwist{t}(\bs_{1:t})}{\tpitwist{t-1}(\bs_{1:t-1})q(s_{t}|\bs_{1:t-1})} 
&= \log \frac{  p_0(\bs_{1:t}\condzero)  \left( \prod \limits_{\tau=1}^{t-1} \reward{\tau} \right) ~
\approxtwist{t}(\bs_{1:t})}{ p_0(\bs_{1:t-1}\condzero)  \left( \prod \limits_{\tau=1}^{t-2} \reward{\tau} \right) ~
\approxtwist{t-1}(\bs_{1:t-1}) q(s_{t}|\bs_{1:t-1})} \nonumber \end{split} 
\\[1.5ex]
\implies \quad \log w_t(\bs_{1:t}) &= \log \reward{t-1} +  \log \approxtwist{t}(\bs_{1:t}) - \log \approxtwist{t-1}(\bs_{1:t-1}) -\log \frac{q(s_{t}|\bs_{1:t-1})}{p_0(s_t|\bs_{1:t-1})}  \qquad 
\label{eq:log_w_soft_q}%
\end{align}
\normalsize
Various special cases arise from choices of twist parameterizations and proposals in the following subsections.  

\vheader
\subsubsection{Soft Q-Learning and RL Baseline}\label{app:soft_q}
\vheader
For single-step log-weights, the $\approxtwist{}$-\textit{parameterization} of the targets (\cref{eq:filtering_app_psi}, \cref{eq:filtering_app_q}), and the \textit{twist-induced proposal} (\cref{eq:prop_twist}, \cref{eq:twist_induced_rl_app}), we have
\begin{align}
 \log w_t(\bs_{1:t}) &= \medmath{
 \log \reward{t-1} +  \log \approxtwist{t}(\bs_{1:t}) - \log \approxtwist{t-1}(\bs_{1:t-1}) - \Big( \cancel{ \log \frac{\base(s_{t}|\bs_{1:t-1})}{\base(s_t|\bs_{1:t-1})}}  + \log \approxtwist{t}(\bs_{1:t}) - %
 \log \sum_{s_{t}} \base(s_t|\bs_{1:t-1}) \approxtwist{t}(\bs_{1:t})  
 \Big) } \nonumber \\
 &= \log \reward{t-1} +   \log \sum_{s_{t}} \base(s_t|\bs_{1:t-1}) \approxtwist{t}(\bs_{1:t}) - \log \approxtwist{t-1}(\bs_{1:t-1})
\end{align}
\normalsize
where the second term $\log \zonestep{t}{t-1} = \log \sum_{s_{t}} \base(s_t|\bs_{1:t-1}) \approxtwist{t}(\bs_{1:t}) $ normalizes the twist-induced proposal (\cref{eq:transition}).

Minimizing the sum of \textit{one-step log consistency losses} (i.e. squared log weights in \cref{eq:log_w_soft_q}) will yield the familiar soft $Q$-learning loss (e.g. \citet{lioutas2022critic} Eq. (4)-(5)). 
Adjusting indexing from \cref{eq:log_w_soft_q} and introducing
a stop-gradient within $\log \zonestep{t}{t-1}$,
we have
\begin{align} 
\min \limits_{\thb} \cL_{\textsc{softQ}}({\thb}) &\coloneqq 
\min \limits_{\thb} \sum \limits_{t=1}^{T} \cL_{\sqlogloss}^{(t+1,1)}(\thb; \{ \approxtwist{t}, q_t^\pi \} ) \label{eq:soft_q} \tag{Soft Q Learning}  \\
&= \min \limits_{\thb}
\sum \limits_{t=1}^{T} ~ \mathbb{E}_{\samplingdist(\cdot)}\Big[ \Big( \log \reward{t} +  \log \sum_{s_{t+1}} \base(s_{t+1}|\bs_{1:t}) \text{sg}\big(\vartwist{t+1}(\bs_{1:t+1}) \big) -  \log \vartwist{t}(\bs_{1:t})    \Big)^2 \Big] \nonumber \\
& \eqrl \min \limits_{\thb}
\sum \limits_{t=1}^{T} ~ \mathbb{E}_{\samplingdist(\cdot)}\Big[ \Big( \beta r_{t}(\bs_{1:t}) +  \log \sum_{s_{t+1}} \base(s_{t+1}|\bs_{1:t}) e^{\beta \texttt{sg}\big( Q_t^{\thb}(s_{t+1}, \bs_{1:t}) \big) } -  \beta Q_t^{\thb}(s_{t}, \bs_{1:t-1})  \Big)^2 \Big] \nonumber 
\end{align}
\normalsize
In the final line, we rewrite the loss for the soft RL special case, $\reward{t} = e^{\beta r_t(\bs_{1:t})}$ using the substitutions in \cref{eq:twist_to_soft_rl}.  
Note that the $\log$-normalization term is analogous to an induced soft value 
$
V_{Q^{\thb}_t}(\bs_{1:t-1})
= \frac{1}{\beta} \log \sum_{s_{t}} \base(s_{t}|\bs_{1:t-1}) \expof{\beta %
Q_t^{\thb}(s_t, \bs_{1:t-1})
}
$, so that each squared error loss has the form $\mathbb{E}[\beta^2 ( r_t + V_t - Q_t)^2]$.   Hence, we refer to this loss as \textit{Soft Q-learning} loss.

The $\log$-normalization term, which arises from normalizing the twist-induced proposal, is analogous to the `target' value 
in deep $Q$-learning. 
\citet{lioutas2022critic} consider the soft-Q learning loss to SMC sampling in self-driving applications where interaction with the environment is expensive.  
\citet{lawson2018twisted} adopt a similar loss function (using a parameterization of the value $V_t^{\thb}$) in the setting of state-space models with tractable intermediate rewards.

\vheader
\paragraph{RL Baseline with no Intermediate Reward}
The soft Q-learning loss in \cref{eq:soft_q} simplifies nicely in the case of no intermediate rewards, as in the main text ($\reward{t} = 1$ for $t < T$ and $\vartwistphi{T}=1$).

Written in terms of twist functions,
we separate the terms at $t < T$ and $t = T$ for purposes of exposition
\begin{align}
&  \min \limits_{\thb} \cL_{\textsc{rl}}({\thb})   \coloneqq 
\min \limits_{\thb} \sum \limits_{t=1}^{T} \cL_{\sqlogloss}^{(t+1,1)}(\thb; \{ \approxtwist{t}, q_t^\pi, \phi_t=1 \} )   \label{eq:rl_baseline} \tag{RL Baseline}  \\
&=  \min \limits_{\thb}
\sum \limits_{t=1}^{T-1} ~ \mathbb{E}_{\samplingdist(\cdot)}\Big[ \Big( \log \sum_{s_{t+1}} \base(s_{t+1}|\bs_{1:t}) \text{sg}\big(\vartwist{t+1}(\bs_{1:t+1}) \big) -  \log \vartwist{t}(\bs_{1:t})    \Big)^2 \Big]
+ \mathbb{E}_{\samplingdist(\cdot)}\Big[ \Big( \log \finaltwist -  \log \vartwist{T}(\bs_{1:T})    \Big)^2 \Big]
\nonumber
\end{align}
 For intermediate timesteps, note that \cref{eq:rl_baseline} enforces the recursion $\vartwist{t-1}(\bs_{1:t-1}) = \sum_{s_t} \base(s_{t}|\bs_{1:t-1}) \vartwist{t}(\bs_{1:t})$ in \cref{eq:psi_recursion} of the main text, albeit in log space.   
 In \cref{app:mudgal_loss} below, we consider the one-step squared error loss enforcing this recursion directly (without logarithms), i.e. $\mathbb{E}_{\samplingdist}[ ( \vartwist{t-1}(\bs_{1:t-1}) - \sum_{s_t} \base(s_{t}|\bs_{1:t-1}) \vartwist{t}(\bs_{1:t}))^2]$ ,

\vheader
\subsubsection{Path Consistency Learning (for Twist Learning)}\label{app:pcl_twist_learning}
\vheader
Using the \textit{value}  \textit{parameterization} of the targets ($\vartwistphi{t}$ or $V_t$, see \cref{eq:filtering_app_phi}, \cref{eq:filtering_app_v}), the one-step log consistency loss with arbitrary proposal $q$ recovers the path-consistency loss (PCL) from \citet{nachum2017bridging}.

Switching to a $\vartwistphitheta{t}$ parameterization of the twisting targets, we substitute $\vartwist{t}(\bs_{1:t}) = \reward{t} \vartwistphitheta{t}(\bs_{1:t})$ into the log importance weights in 
\cref{eq:log_w_soft_q}.   The log-consistency loss becomes,
\small
\begin{align}
\hspace*{-.2cm} \min \limits_{\thb} \cL_{\textsc{pcl}}(\thb) \coloneqq &\min \limits_{\thb} \sum \limits_{t=1}^{T} \cL_{\sqlogloss}^{(t,1)}(\thb; \{ \vartwistphi{t} , \text{any } q \} ) \label{eq:pcl} \tag{PCL}  \\
= & \min \limits_{\thb} \sum \limits_{t=1}^{T} \mathbb{E}_{\samplingdist{}}\left[ \left( 
  \log \reward{t} + \log \vartwistphitheta{t}(\bs_{1:t}) - \log \vartwistphitheta{t-1}(\bs_{1:t-1}) 
- \log \frac{q(s_{t}|\bs_{1:t-1})}{ \base(s_{t}|\bs_{1:t-1})} 
\right)^2 \right]  \nonumber \\
\eqrl & \min \limits_{\thb} \sum \limits_{t=1}^{T} \mathbb{E}_{\samplingdist{}}\left[ \left( 
\beta \Big(  r_t(\bs_{1:t}) + V_{t}^{\thb}(\bs_{1:t}) - V_{t-1}^{\thb}(\bs_{1:t-1}) \Big)
- \log \frac{q(s_{t}|\bs_{1:t-1})}{ \base(s_{t}|\bs_{1:t-1})} 
\right)^2 \right]  \nonumber 
\end{align}
\normalsize
In particular, substituting the soft RL 
\phishort
terms from \cref{eq:twist_to_soft_rl}, \cref{eq:pcl} recovers the path consistency loss from \citet{nachum2017bridging}.
Note that we derived PCL from an importance sampling perspective, whereas PCL was originally derived by enforcing KKT conditions of the soft RL problem. 

We might also consider multi-step losses for various $c$.
Minimizing the square of the multi-step log weights 
with {arbitrary $q$}
recovers the multi-step PCL loss \citep{nachum2017bridging},
\small 
\begin{align}
\min \limits_{\thb} \cL_{\textsc{pcl}}^{(t,c)} (\thb) \coloneqq & \min \limits_{\thb}  \cL_{\sqlogloss}^{(t,c)}(\thb; \{ \vartwistphi{t} , \text{any } q \} ) \label{eq:pcl_multi}\tag{multi-step PCL} \\
= & \min \limits_{\thb} \mathbb{E}_{\samplingdist{}}\left[ \left(  \sum_{\tau=t}^{t+c} \log \reward{\tau} + \log \vartwistphitheta{t+c}(\bs_{1:t+c}) - \log \vartwistphitheta{t-1}(\bs_{1:t-1}) 
- \sum_{\tau=t}^{t+c} \log \frac{q(s_{\tau}|\bs_{1:\tau-1})}{ \base(s_{\tau}|\bs_{1:\tau-1})}
\right)^2 \right] \nonumber\\
= & \min \limits_{\thb} \mathbb{E}_{\samplingdist{}}\left[ \left(  \sum_{\tau=t-1}^{t+c-1} \log \reward{\tau} + \log \vartwist{t+c}(\bs_{1:t+c}) - \log \vartwist{t-1}(\bs_{1:t-1}) 
- \sum_{\tau=t}^{t+c} \log \frac{q(s_{\tau}|\bs_{1:\tau-1})}{ \base(s_{\tau}|\bs_{1:\tau-1})}
\right)^2 \right] \label{eq:pcl_multi_psi} \\
\eqrl & \min \limits_{\thb} \mathbb{E}_{\samplingdist{}}\left[ \left( \beta  \sum_{\tau=t}^{t+c} r_\tau(\bs_{1:\tau}) + \beta ~ V_{t+c}^{\thb}(\bs_{1:t+c}) - \beta ~ V_{t-1}^{\thb}(\bs_{1:t-1}) 
- \sum_{\tau=t}^{t+c} \log \frac{q(s_{\tau}|\bs_{1:\tau-1})}{ \base(s_{\tau}|\bs_{1:\tau-1})}
\right)^2 \right]  \nonumber %
\end{align}
\normalsize
where we write the $\vartwist{t}$ parameterization in \cref{eq:pcl_multi_psi} explicitly for use in \cref{app:mudgal_decoding}.
While PCL considers learned a proposal or policy $q$ with the goal of approximating the solution of a regularized MDP,  we leave joint learning of proposals $\{q^{\qparam}(s_t|\bs_{1:t-1})\}_{t=1}^T$ and SMC target twists $\{\vartwist{t}(\bs_{1:t})\}_{t=1}^T$ or $\{V_t^{\thb}(\bs_{1:t})\}_{t=1}^T$ to future work.

In \cref{app:prop}, we describe using PCL to learn the proposal \textit{only} \citep{guo2021efficient}, with the values $V_{Q_t}(\bs_{1:t})
$
induced from learned proposal twists $Q_t^{\qparam}(s_{t+1},\bs_{1:t})$
which define $\{q_{Q_t}^{\qparam}(s_{t+1}|\bs_{1:t})\}_{t=0}^{T-1}$ (in similar fashion to \cref{eq:twist_induced_rl_app}, but without reference to twisting targets).

\vheader
\subsection{Controlled Decoding Losses via Optimal Twist Identities \citep{mudgal2023controlled}}\label{app:mudgal_loss}
\vheader

In \cref{prop:opt_twists_int_reward} (or \cref{prop:optimal_twists} and \cref{eq:psi_recursion} in the main text), we noted that the optimal twists satisfy the following relationships
\begin{align}
 \optimaltwist{t}(\bs_{1:t}) = & c_t ~ \reward{t} \sum \limits_{\bs_{t+1:T}} \base(\bs_{t+1:T}|\bs_{1:t})  \prod \limits_{\tau=t+1}^{T} \reward{\tau}
 \phantom{\overset{(\phi_t=1)}{=}} =  \frac{c_t}{c_{t+1}} \reward{t} \sum \limits_{s_{t+1}} \base(s_{t+1}|\bs_{1:t})  
 \optimaltwist{t+1}(\bs_{1:t+1}) \nonumber \\
 \overset{(\phi_t=1)}{=} & c_t \sum \limits_{\bs_{t+1:T}} \base(\bs_{t+1:T}|\bs_{1:t})  \finaltwist \hphantom{\prod \limits_{\tau=t+1}= ~ \reward{t} \phi_t} \overset{(\phi_t=1)}{=}  \frac{c_t}{c_{t+1}} \sum \limits_{s_{t+1}} \base(s_{t+1}|\bs_{1:t}) 
 \optimaltwist{t+1}(\bs_{1:t+1})  
 \label{eq:app_recursions} 
\end{align}
We proceed to describe two `controlled decoding' (CD) losses from \citet{mudgal2023controlled} 
as using a \textit{squared error} loss to
enforce the optimality conditions in \cref{eq:app_recursions}, for settings with no intermediate \phishorts ($\reward{t} = 1$ for $t < T$).   \citet{mudgal2023controlled} also propose two ways to use the learned `twists' at inference time, which we discuss in relation to our proposed SMC framework in \cref{app:mudgal_decoding}.   %

\vheader
\paragraph{CD-Q} 
The CD-Q loss from \citet{mudgal2023controlled} corresponds to minimizing the one-step recursion in \cref{eq:app_recursions} using the expected squared error under a (possibly off-policy) sampling distribution $\samplingdist{}$.   
Assuming \textit{no intermediate reward} and an additional squared error loss to approximate the terminal \phishort $ \vartwist{T}(\bs_{1:T}) \approx \finaltwist $, we have
\begin{align}
\begin{split}
\hspace*{-.2cm} \min \limits_{\thb} \cL_{\textsc{cd-q}}(\thb) \coloneqq \min \limits_{\thb} \sum \limits_{t=1}^{T-1} \mathbb{E}_{\samplingdist(\cdot)}\Big[ \Big( \sum_{s_{t+1}} \base(s_{t+1}|\bs_{1:t}) \vartwist{t+1}(\bs_{1:t+1}) -  \vartwist{t}(\bs_{1:t}) \Big)^2 \Big]  
+  \mathbb{E}_{\samplingdist{(\cdot)}}\left[\left( \finaltwist -   \vartwist{T}(\bs_{1:T}) \right)^2 \right]  
    \end{split} \label{eq:cd_q} \tag{CD-Q}
\end{align}
\cref{eq:cd_q} enforces the same optimality condition as the \cref{eq:rl_baseline} loss (i.e. $\vartwist{t}(\bs_{1:t}) =\sum_{s_{t+1}} \base(s_{t+1}|\bs_{1:t}) \vartwist{t+1}(\bs_{1:t+1})$), without log scaling of each term inside the squared error.   
At optimality, we have zero-variance one-step importance weights ($w(\bs_{1:t}) = 1$ in \cref{eq:incremental_twisted}) for the {twist-induced proposal}.

At optimality, we in fact also have $\vartwist{t}(\bs_{1:t}) = \sum \limits_{\bs_{t+1:T}} \base(\bs_{t+1:T}|\bs_{1:t}) \reward{T}$ (as in \cref{eq:app_recursions} and the proof of \cref{prop:opt_twists_int_reward}).

\vheader
\paragraph{CD-FUDGE}
While we might naively like to consider the loss
$\mathbb{E}_{\samplingdist(\cdot)}\big[ \big(\vartwist{t}(\bs_{1:t}) -  \sum_{\bs_{t+1:T}} \base(\bs_{t+1:T}|\bs_{1:t})\finaltwist \big)^2 \big]$ to enforce \cref{prop:optimal_twists} or \cref{eq:app_recursions}, note that marginalization over multiple steps is not tractable in general.

Instead, the CD-FUDGE loss\footnote{Note, we reserve the naming convention FUDGE \citep{yang2021fudge} for a binary cross entropy loss described in \cref{app:fudge}, as opposed to the CD-FUDGE squared error loss from \citet{mudgal2023controlled}.} defined as 
\begin{align}
    \min \limits_{\thb} \cL_{\textsc{cd-fudge}}(\thb) \coloneqq \min \limits_{\thb} \sum \limits_{t=1}^{T} \mathbb{E}_{\samplingdist(\bs_{1:t})}\left[ \mathbb{E}_{\base(\bs_{t+1:T}|\bs_{1:t})} \left[ \Big(    \vartwist{t}(\bs_{1:t})  - \finaltwist \Big)^2 \right] \right] \label{eq:cd_fudge} \tag{CD-FUDGE}
\end{align} 
can be shown to have the same gradient as the desired (but intractable) squared error loss above \citep{mudgal2023controlled}.

Since the minimizer of the expected squared error (under $\base(\bs_{t+1:T}|\bs_{1:t})$) to a single function $\vartwist{t}(\bs_{1:t})$ (which is independent of $\bs_{t+1:T}$) is given by the conditional expectation \citep{banerjee2005optimality}, we can also see that \cref{eq:cd_fudge} has the desired minimum $\vartwist{t}(\bs_{1:t}) = \sum_{\bs_{t+1:T}} \base(\bs_{t+1:T}|\bs_{1:t})\finaltwist $. 
Note, it is crucial that the inner expectation samples rollouts under the base model $\base(\bs_{t+1:T}|\bs_{1:t})$ to obtain the desired conditional expectation as the minimizer.   While it appears that any prefix sampling distribution can be used, $\samplingdist = \base$ allows for losses to be calculated at all $t$ in a single sampling run.

\citet{mudgal2023controlled} also propose two decoding-time usages for the learned twist functions $\vartwist{t}$:  stochastic token-by-token sampling and argmax decoding of partial sequences.   {We discuss their inconsistencies with our SMC framework in \cref{app:mudgal_decoding_all}.}

\paragraph{CD-FUDGE for $\log \vartwist{t}$} 
We can also compare \cref{eq:cd_fudge} with the multi-step PCL loss in \cref{eq:pcl_multi_psi}, choosing $\phi_t=1$ for $t < T$ and the proposal equal to the base model $q = \base$ so that the proposal terms cancel.    Noting that $\approxtwist{T}(\bs_{1:T}) = \finaltwist$ is fixed to the exact terminal \phishort and choosing the $c= T-t+1$-step PCL loss for each $t$, note that \cref{eq:pcl_multi_psi} would reduce to $\sum_t \mathbb{E}[ \left( \log \finaltwist  +  0  - \log \vartwist{t}(\bs_{1:t}) - 0 \right)^2]$.
\citet{deng2023reward} optimize this loss with reweighting of terms based on timestep (higher weight for $t \approx T$).
\cref{eq:cd_fudge} optimizes the squared error of the difference \textit{without log scaling of each term}, under appropriate sampling of rollouts.  \footnote{Note the difference in choice of proposal between \cref{eq:cd_q} (twist-induced $q= q_t^\pi$) and \cref{eq:cd_fudge} (base $q=\base$).}

\vheader
\subsection{SIXO:  Smoothing Inference with Twisted Objectives \citep{lawson2022sixo}}
\label{app:sixo}
\vheader

\citet{lawson2022sixo} adopt a noise-contrastive estimation loss \citep{gutmann2010noise} to learn the target twist functions using binary classification.   For state space models, \citet{lawson2022sixo} 
adopt our setting in \cref{app:conditional_twist_expo}
with observation variables $o_t$ emitted based on the sampling state $\bs_{1:t}$ (or simply $s_t$) and a known likelihood $\phi_t(o_t, s_t) = \sigma(o_t|s_{t})$. %
As discussed in \cref{app:final_timestep}, in these settings with easily evaluable intermediate \phishortsnospace, it may be preferable to parameterize $\vartwistphitheta{t}(\bs_{1:t}, \boldsymbol{o}_{t+1:T})$ as in \cref{eq:filtering_app_phi}.   
\citet{lawson2022sixo} indeed use this parameterization (see their Eq. 5).

\newcommand{\clf}{p}

Recall from \cref{eq:filtering_intermediate_optimality_conditional} that the optimal twists or future values amount to conditional likelihoods,
\begin{align}
\optimaltwistphi{t}(\bs_{1:t}, \bo_{t+1:T}) \proptoo{\bo_{t+1:T}} \sigma(\bo_{t+1:T}|\bs_{1:t}\condzero)\,, \qquad \qquad  \optimaltwist{t}(\bs_{1:t},\bo_{t:T}) \proptoo{\bo_{t:T}} \sigma(\bo_{t:T}|\bs_{1:t}\condzero)\,, \label{eq:sixo_twist_lkd}
\end{align}
where $\proptoo{\bo}$ denotes proportionality
up to a constant which depends on $\bo$ only. Using Bayes rule, we have
\begin{align}
\sigma(\bo_{t+1:T}|\bs_{1:t}) =  \frac{\sigma(\bs_{1:t}|\bo_{t+1:T})\sigma(\bo_{t+1:T})}{\base(\bs_{1:t})}   \proptoo{\bo_{t+1:T}} \frac{\sigma(\bs_{1:t}|\bo_{t+1:T})}{\base(\bs_{1:t})} \,, \qquad
\sigma(\bo_{t:T}|\bs_{1:t}) \proptoo{\bo_{t:T}} \frac{\sigma(\bs_{1:t}|\bo_{t:T})}{\base(\bs_{1:t})} \,,
\end{align}
noting that $\sigma(\bo_{t+1:T})$ and $\base(\bs_{1:t})$ are marginals of $\sigma(\bs_{1:t}, \bo_{t+1:T})$ by definition.
The above reasoning suggests that we may learn the twists, or likelihood ratio $\optimaltwistphi{t}(\bs_{1:t}, \bo_{t+1:T}) \propto \sigmacond{t}(\bo_{t+1:T}|\bs_{1:t}) \propto \sigma(\bs_{1:t}|\bo_{t+1:T})/\base(\bs_{1:t})$, using a 
classifier which seeks to distinguish samples from 
$\sigma(\bs_{1:t}|\bo_{t+1:T})$ and $\base(\bs_{1:t})$ \citep{gutmann2010noise, lawson2022sixo}.   In particular, at each $t$, we classify the event $y=1$, indicating that $\bs_{1:t} \sim \sigma(\bs_{1:t}|\bo_{t+1:T})$, or $y=0$, indicating that $\bs_{1:t} \sim \base(\bs_{1:t})$.  

Consider a given $\bo_{t+1:T}$, which can be either $\bo_{t+1:T} = \bone$ in the unconditional case or $ \bo_{t+1:T} \sim \samplingdist(\bo_{t+1:T})$ drawn from a behavioral policy as discussed below.   
The \textsc{SIXO} loss becomes
\small 
\begin{align}
\begin{split}
\cL_{\text{SIXO}}( 
\bo_{1:T};
\thb) 
&= %
\sum \limits_{t=1}^{T-1} \mathbb{E}_{\sigmacond{t}(\bs_{1:t}|\bo_{t+1:T} \condzero)}\left[ \log \texttt{sigmoid}\big( \log \vartwistphitheta{t}(\bs_{1:t}, \bo_{t+1:T})\big)\right] + \mathbb{E}_{\base(\bs_{1:t}\condzero)}\left[ \log \big(1- \texttt{sigmoid}\big(\log \vartwistphitheta{t}(\bs_{1:t}, \bo_{t+1:T})\big)\big)\right] 
\\
&=  %
\sum \limits_{t=1}^{T} \mathbb{E}_{\sigmacond{t}(\bs_{1:t}|\bo_{t:T} \condzero)}\left[ \log \texttt{sigmoid}\big( \log \vartwist{t}(\bs_{1:t},\bo_{t:T})\big)\right] + \mathbb{E}_{\base(\bs_{1:t}\condzero)}\left[ \log \big(1- \texttt{sigmoid}\big(\log \vartwist{t}(\bs_{1:t},\bo_{t:T})\big)\big)\right]
\\
&=  %
\sum \limits_{t=1}^T \mathbb{E}_{\sigmacond{t}(\bs_{1:t}|\bo_{t:T} \condzero)}\left[ \log
\frac{\vartwist{t}(\bs_{1:t},\bo_{t:T})}{1+\vartwist{t}(\bs_{1:t},\bo_{t:T})}
\right] + \mathbb{E}_{\base(\bs_{1:t}\condzero)}\left[ \log \frac{1}{1+\vartwist{t}(\bs_{1:t},\bo_{t:T})}\right] 
\end{split} 
\label{eq:sixo_nce} \tag{SIXO}\raisetag{20pt}
\end{align}
\normalsize
Note that we can perform approximate positive sampling as in \cref{sec:learning} to estimate expectations in the first term.  

\vheader
\paragraph{Exact Conditional Sampling}
However, we can also use the BDMC trick in \cref{sec:conditional_twist} to obtain exact target samples for general observation variables.  In order to facilitate tractable sampling, we optimize the \cref{eq:sixo_nce} loss over a sampling distribution 
$\samplingdist(\bo_{1:T})  = \sigma(\bo_{1:T})$
for all $t$, such that the objective becomes
\begin{align}
\mathbb{E}_{\sigma(\bo_{1:T})}\left[ \cL_{\text{SIXO}}( \bo_{1:T} ; 
\thb) \right]  &=  %
\sum \limits_{t=1}^T \mathbb{E}_{\sigmacond{t}(\bs_{1:t}, \bo_{t+1:T} \condzero)}\left[ \log
\frac{\vartwist{t}(\bs_{1:t},\bo_{t:T})}{1+\vartwist{t}(\bs_{1:t},\bo_{t:T})}
\right] + \mathbb{E}_{\base(\bs_{1:t}\condzero)\sigma(\bo_{t+1:T})}\left[ \log \frac{1}{1+\vartwist{t}(\bs_{1:t},\bo_{t:T})}\right] \nonumber 
\end{align}
With this choice, note that we may sample once from $\sigma(\bs_{1:T}, \bo_{1:T})= \prod_{t=1}^T \base(s_t|\bs_{1:t-1}) \sigma(o_t|\bs_{1:t})$ using ancestral sampling and use the appropriate truncations for positive sampling terms involving $\sigmacond{t}(\bs_{1:t}, \bo_{t+1:T} \condzero)$.
By shuffling observation variables across a batch of $K$ samples, we may obtain samples from the product of marginals $\base(\bs_{1:T})\sigma(\bo_{1:T})$ or $\base(\bs_{1:t})\sigma(\bo_{t+1:T})$ in the negative sampling term.

In the main text, note that we condition on $o_T = 1$ or $o_{T} = \bs_{T+1:T+c}$ (for infilling).
\vheader
\paragraph{Gradient and Comparison with CTL} Proceeding with the $\vartwist{t}$ parameterization for the target $\target(\bs_{1:T}|o_T\condzero)= \sigma(\bs_{1:T})$ with fixed $o_T$ and unconditional twists $\vartwist{t}(\bs_{1:t})$, the gradient of \cref{eq:sixo_nce} with respect to $\thb$ is
\begin{align}
\nabla_{\thb} \cL_{\text{SIXO}}(\thb) &= \sum \limits_{t=1}^T \mathbb{E}_{\target(\bs_{1:t}\condzero)}\left[ \nabla_{\thb} \log \vartwist{t}(\bs_{1:t}) - \frac{\vartwist{t}(\bs_{1:t})}{1+\vartwist{t}(\bs_{1:t})} \nabla_{\thb} \log \vartwist{t}(\bs_{1:t})
\right] - \mathbb{E}_{\base(\bs_{1:t}\condzero)}\left[ \frac{\vartwist{t}(\bs_{1:t})}{1+\vartwist{t}(\bs_{1:t})} \nabla_{\thb} \log \vartwist{t}(\bs_{1:t})\right] \nonumber \\
&= \sum \limits_{t=1}^T \mathbb{E}_{\target(\bs_{1:t}\condzero)}\left[ \frac{1}{1+\vartwist{t}(\bs_{1:t})}\nabla_{\thb} \log \vartwist{t}(\bs_{1:t})
\right] - \mathbb{E}_{\base(\bs_{1:t}\condzero)}\left[ \frac{\vartwist{t}(\bs_{1:t})}{1+\vartwist{t}(\bs_{1:t})} \nabla_{\thb} \log \vartwist{t}(\bs_{1:t})\right]  
\tag{SIXO Gradient}
\end{align}
The SIXO gradient is superficially similar to our CTL gradient in \cref{sec:ebm_roger}, in that it involves $\nabla_{\thb} \log \vartwist{t}$ under positive and negatives samples. 
However, viewing $\tpitwisttheta{t}(\bs_{1:t}\condzero) = \base(\bs_{1:t}\condzero) \vartwist{t}(\bs_{1:t})$ as the unnormalized density of our intermediate twisting target, we can see that the second term in the \textsc{sixo} update includes $\tpitwisttheta{t}(\bs_{1:t}\condzero)$.   
Rewriting to highlight differences with our CTL gradient, we have
\begin{align}
\hspace*{-.2cm} 
\begin{split}
\nabla_\thb \cL_{\text{SIXO}}  &= \sum \limits_{t=1}^T \left( \sum_{\bs_{1:t}} \target(\bs_{1:t}\condzero)  \hl{ \frac{1}{1+\vartwist{t}(\bs_{1:t})} } \nabla_{\thb} \log \vartwist{t}(\bs_{1:t})
 - \sum_{\bs_{1:t}} \tpitwisttheta{t}(\bs_{1:t}) \hl{\frac{1}{1+\vartwist{t}(\bs_{1:t})}} \nabla_{\thb} \log \vartwist{t}(\bs_{1:t}) \right) \\
\nabla_\thb \cL_{\text{CTL}}  &= \sum \limits_{t=1}^T \left( \sum_{\bs_{1:t}} \target(\bs_{1:t}\condzero)  \phantom{ \frac{1}{1+\vartwist{t}(\bs_{1:t})} } \nabla_{\thb} \log \vartwist{t}(\bs_{1:t})
- \sum_{\bs_{1:t}} \tpitwisttheta{t}(\bs_{1:t}) ~~~\quad \hl{\frac{1}{\zfilter{t}}} \quad~~~   \nabla_{\thb} \log \vartwist{t}(\bs_{1:t}) \right) 
 \end{split}\tag{SIXO vs. CTL}
\end{align}
To compare the two, first note that the positive sampling gradient in SIXO is scaled by a factor of $\frac{1}{1+\vartwist{t}(\bs_{1:t})}$ factor (which reflects the misclassification probability under $\vartwist{t}$).   For the negative sampling terms, note that $\tpitwisttheta{t}(\bs_{1:t}\condzero)$ is divided by 
 a factor of
$\frac{1}{1+\vartwist{t}(\bs_{1:t})}$ in the SIXO gradient,
 instead of the true normalization constant $\zfilter{t}$ for the gradient of our CTL loss \cref{eq:grad_separate_ebm}.

\vheader
\subsection{FUDGE: Future Discriminators \citep{yang2021fudge}}\label{app:fudge}
\vheader
In contrast to \textsc{SIXO}, the FUDGE method from \citet{yang2021fudge} seeks to directly learn a discriminative classifier to match the conditional likelihood $\optimaltwist{t}(\bs_{1:t}, o_T) \propto \sigmacond{t}(o_T | \bs_{1:t})$ or $\optimaltwist{t}(\bs_{1:t}, \bo_{t:T}) \propto \sigmacond{t}(\bo_{t:T} | \bs_{1:t})$ (see \cref{app:conditional_twist_expo}).

As before, we define the joint distribution $\target(\bs_{1:T}, o_T) = \base(\bs_{1:T}\condzero) \sigmacond{t}(o_T|\bs_{1:T})$ with $\phi(\s_{1:T}, o_T) = \sigmacond{t}(o_T|\bs_{1:T})$.   
From \cref{eq:sixo_twist_lkd} above or \cref{app:conditional_twist_expo} \cref{eq:cond_twist_lkd}, we have
\begin{align}
\optimaltwist{t}(\bs_{1:t}, o_T) \propto
\sigma(o_T|\bs_{1:t}) \coloneqq \sum_{\bs_{t+1:T}} \base(\bs_{t+1:T}|\bs_{1:t}) \sigmacond{T}(o_T|\bs_{1:T})
\label{eq:future_discriminator}
\end{align}
\citet{yang2021fudge} consider training a `future discriminator' 
$
\vartwist{t}(\bs_{1:t}, o_T)
 \approx 
 \sigma(o_T|\bs_{1:t})$ 
 which, as in \cref{eq:future_discriminator} marginalizes over future tokens to predict the expected probability that a full sequence with prefix $\bs_{1:t}$ emits $o_T$ (e.g., let $o_T=a$ be the probability of a classifier for class $a$, or the probability that $\bs_{1:T}$ satisfies a desired attribute indicated by a boolean $o_T=1$).

In similar fashion to \textsc{SIXO} in the previous section, we define a binary random variable $y$ such that 
\small 
\begin{align}
    \sigma(y | \bs_{1:t}, o_T) = \begin{cases} \sigma(o_T |\bs_{1:t}) \phantom{1 - }~~ \qquad y=1 \\
    1-\sigma(o_T |\bs_{1:t}) \qquad y=0 
    \end{cases} \qquad \qquad 
    p_{\vartwist{t}}(y |\bs_{1:t}, o_T) = \begin{cases} \vartwist{t}(\bs_{1:t}, o_T) \phantom{1 - }~~ \qquad y=1 \\
    1-\vartwist{t}(\bs_{1:t}, o_T) \qquad y=0 
    \end{cases}\label{eq:clf_def}
\end{align}  
\normalsize
where we directly parameterize $p_{\vartwist{t}}(y |\bs_{1:t}, o_T) = \vartwist{t}(\bs_{1:t}, o_T)$ to be a probability distribution 
 (e.g. using a sigmoid or softmax activation).
 For a given observation random variable $o_T$ and partial sequence $\bs_{1:t}$, we can define the \textsc{FUDGE} loss
\small
\begin{align}
\sum \limits_{t=1}^T \cL_{\text{FUDGE}}(\bs_{1:t}, o_T;\thb) &\coloneqq \sum \limits_{t=1}^T %
\DKL\pV{
\sigmacond{t}(y|\bs_{1:t}, o_T)
}{ p_{\vartwist{t}}(y |\bs_{1:t}, o_T)
} 
\label{eq:fudge} \tag{FUDGE} \\
&=  \sum \limits_{t=1}^T 
 -\left[ 
 \sigmacond{T}(y=1 |\bs_{1:t}, o_T) \log 
 p_{\vartwist{t}}(y=1 |\bs_{1:t}, o_T)
 + \sigmacond{T}(y=0|\bs_{1:t}, o_T) \log 
 p_{\vartwist{t}}(y=0 |\bs_{1:t}, o_T)  
\right]+ \text{const} . \nonumber
\\ 
&=  \sum \limits_{t=1}^T 
 - %
\mathbb{E}_{\base(\bs_{t+1:T}|\bs_{1:t})
 } \bigg[ \sigmacond{T}(o_T |\bs_{1:T}) \log 
 \vartwist{t}(\bs_{1:t},o_T)
+
\Big(1-\sigmacond{T}(o_T|\bs_{1:T}) \Big)
\log \Big(1- \vartwist{t}(\bs_{1:t}, o_T)\Big)
\Big)
\bigg]  + \text{const} . \nonumber
\end{align}
\normalsize
where, in moving from the second to the third line, we have used the fact that $
\sigmacond{T}(y=1|\bs_{1:t}, o_T)= \sigma(o_T|\bs_{1:t}) 
= \sum_{\bs_{t+1:T}} \base(\bs_{t+1:T}|\bs_{1:t}) \sigma(o_T|\bs_{1:T})  $ from \cref{eq:future_discriminator} and \cref{eq:clf_def}.
At the optimum,  $ p_{\vartwist{t}}(y=1 |\bs_{1:t}, o_T) = \sigma(y = 1 | \bs_{1:t},o_T)$ implies $\vartwist{t}(\bs_{1:t}, o_T) = \sigma(o_T |\bs_{1:t})$, as desired.

While sampling may be done using an arbitrary distribution over prefixes $\bs_{1:t}$ and observation $o_T$,  \cref{eq:fudge} \textit{requires} that rollouts be sampled under the base model $\base(\bs_{t+1:T}|\bs_{1:t})$ 
in order to ensure sampling from the appropriate distribution 
$\sigma(y = 1 | \bs_{1:t},o_T)$. 
This restriction is similar to what we required in \cref{eq:cd_fudge}, although the loss in \cref{eq:fudge} is based on cross entropy classification rather than a squared error.   We discuss the choices made in our experiments below.

\vheader
\paragraph{\citet{yang2021fudge} Setting}  In the original \textsc{FUDGE} paper, \citet{yang2021fudge} consider learning from a dataset of labelled examples $(\bs_{1:T}, o_T)$ or $(\bs_{1:t}, o_T)$ for a binary observation variable $o_T = 1$ which defines the target distribution.

\vheader
\paragraph{Unconditional Twist Setting}  For the unconditional twist experiments in \cref{sec:toxclass}-\ref{sec:sent}, we sample under the base model proposal $\samplingdist(\bs_{1:t}) = \base(\bs_{1:t})$  where the target distribution conditions on $o_T = 1$ and $\sigma(o_T = 1 | \bs_{1:T}) = \finaltwist = \sigma(y=1|\bs_{1:T}, o_T=1)$.  In particular, we optimize %
\begin{align*}
\min \limits_{\theta} \sum \limits_{t=1}^T \mathbb{E}_{\base(\bs_{1:t})}\left[\cL_{\text{FUDGE}}(\bs_{1:t}, o_T = 1;\thb)\right]
\end{align*}
\normalsize
\vheader
\paragraph{Conditional Twist Setting} For conditional twist learning, we can consider amortizing learning the twists $\psi_t(\bs_{1:t},o_T)$ over some distribution of observation variables $\pi_{s}\left(\s_{1:t},o_{T}\right)$.
In particular, in our infilling experiments in \cref{sec:infilling}, we
consider sampling under the following joint distribution,
\begin{align}
\pi_{s}\left(\s_{1:t},o_{T}\right)=p_{0}\left(\s_{1:t}\right)\si\pv{o_{T}}{\s_{1:t}}\,,
\nonumber 
\end{align}
which we can sample from by first sampling from $p_{0}\left(\s_{1:T}\right)\si\pv{o_{T}}{\s_{1:T}}$ and then dropping $\s_{t+1:T}$ subsequence. 
Therefore, the overall objective becomes
\begin{align}
\min \limits_{\thb} & ~ \mathbb{E}_{\samplingdist(\bs_{1:t}, o_T)}\left[\cL_{\text{FUDGE}}(\bs_{1:t}, o_T ; \thb)\right]%
\\ &
= \min \limits_{\thb} \sum \limits_{t=1}^T 
 - %
\mathbb{E}_{p_{0}\left(\s_{1:T}\right)\si\pv{o_{T}}{\s_{1:t}}
 } \bigg[ \sigmacond{T}(o_T |\bs_{1:T}) \log 
 \vartwist{t}(\bs_{1:t},o_T)
+
\Big(1-\sigmacond{T}(o_T|\bs_{1:T}) \Big)
\log \Big(1- \vartwist{t}(\bs_{1:t}, o_T)\Big)
\Big)
\bigg]\,, \nonumber
\end{align}
where the expectation $\base(\bs_{1:T})$ includes the expectation under $\base(\bs_{t+1:T}|\bs_{1:t})$ from \cref{eq:fudge}.   Note that rollout of $\bs_{t+1:T}|\bs_{1:t}$ used to sample from $\base(\bs_{1:T})$ should be independent of the rollout used to sample from $\sigma(o_T|\bs_{1:t})$.

\vheader\section{Decoding Strategies using Learned Twists from \citet{mudgal2023controlled}}\label{app:mudgal_decoding_all}
\vheader
\subsection{Proposal Sampling in \citet{mudgal2023controlled}}\label{app:mudgal_decoding}
\vheader

\newcommand{\vartwistcd}[1]{\psi_{#1}^{\text{cd}\thb}}

As noted in \cref{app:mudgal_loss} (and in $\cL^*(\thb)$ in \citet{mudgal2023controlled}), the CD losses can be seen as enforcing the optimality conditions
\begin{align}
\psi^{\text{cd}*}_{t}
(\bs_{1:t}) = \sum_{\bs_{t+1:T}} \base(\bs_{t+1:T}|\bs_{1:t}) \finaltwist ,\qquad \quad \forall t . \label{eq:cd_opt}
\end{align}
In RL terms, we interpret the twists $\psi^{\text{cd}*}_{t}$ %
as performing \textit{policy evaluation} of the expected \textit{unregularized} `reward' $\finaltwist$ under a fixed policy $\base(\bs_{1:T})$.  
{The notation of \citet{mudgal2023controlled} (their Eq. (1), (5), our \cref{eq:cd_opt}) indeed corresponds to }
\begin{align}
    \finaltwist \eqqcolon r_{\text{cd}}(\bs_{1:T}).  \label{eq:cd_reward} \tag{CD reward}
\end{align}

However, \citet{mudgal2023controlled} propose to use the learned twist functions $\vartwist{t}$ to perform one-step sampling 
as
\begin{align}
    q_{t}^{\text{cd}}(s_t|\bs_{1:t-1}) \propto \base(s_t|\bs_{1:t-1}) e^{\beta ~ \vartwist{t}(\bs_{1:t})} \label{eq:mudgal_prop} \tag{CD proposal}
\end{align}
We proceed to explain that this scheme \textit{does not correspond to sampling from the twist-induced proposal} under two different definitions of the target $\sigma(\bs_{1:T})$ (or \phishort $\finaltwist$) in our SMC framework.   
\vheader
\paragraph{Comparison with Our
$\finaltwist= r_{\text{cd}}(\bs_{1:T})$ Case:
}  As we have argued above, the CD-Q and CD-FUDGE may be viewed as learning twist values $\vartwist{t}$ for a terminal \phishort $\finaltwist= r_{\text{cd}}(\bs_{1:T})$.   However, our twist-induced proposal which minimizes the variance of the one-step importance weights with these SMC targets $\{ \pitwisttheta{t} \}$ would yield
\begin{align}
q_{t}^{\pi}(s_t|\bs_{1:t-1}) \propto \base(s_t|\bs_{1:t-1}) \vartwist{t}(\bs_{1:t}),
\tag{Twist-Ind. proposal ($\finaltwistonly = r_{\text{cd}}$)}\label{eq:mudgal_twist_ind}
\end{align}
which, compared to \cref{eq:mudgal_prop} does not exponentiate or scale $\vartwist{t}$ and is directly proportional to the expected $r_\text{cd}$.

\vheader
\paragraph{Comparison with Our $\finaltwist= e^{\beta r_{\text{cd}}(\bs_{1:T})}$ Case (Soft RL):}
The stochastic sampling in \cref{eq:mudgal_prop} is also reminiscent of the twist-induced proposal in the soft RL case of our framework where, in contrast to \cref{eq:cd_reward}, the target is defined via $\finaltwist= e^{\beta r_{\text{cd}}(\bs_{1:T})}$.   As in \cref{app:soft_rl}, 
\begin{align}
    q_{t}^{\pi}(s_t|\bs_{1:t-1}) \propto \base(s_t|\bs_{1:t-1}) e^{\beta ~ V_t^{\thb}(\bs_{1:t})}   \label{eq:soft_rl_twisted} \tag{Twist-Ind. proposal ($\finaltwistonly = e^{\beta r_{\text{cd}}}$)}
\end{align}
We proceed to write both $q_{t}^{\text{cd}}$ and $q_t^\pi$ as the solution to a variational optimization, \hl{highlighting similarities in {blue}}, but noting the \textit{different definitions of $\finaltwistonly$ in terms of $r_{\text{cd}}$}.   We assume no intermediate \phishort or reward, and consider the optimal twists to emphasize the role of $r_\text{cd}$.
Using \citet{mudgal2023controlled} Eq. 2 and Thm 2.1 (for CD) and \cref{eq:soft_value_app} (for soft RL), we have
\begin{align}
 q_t^{\text{cd}^*}(s_t|\bs_{1:t-1}) &= 
 \hl{ \argmax \limits_{\optdist(s_t|\bs_{1:t-1}) } \mathbb{E}_{\optdist(s_t|\bs_{1:t-1})}} \Big[
 \underbrace{ \mathbb{E}_{\base(\bs_{t+1:T}|\bs_{1:t})}\big[ \hl{ r_{\text{cd}}(\bs_{1:T}) }\big]}_{
 \psi_t^{\text{cd}*}(\bs_{1:t})  \text{ ~~(for $\finaltwistonly = r_\text{cd}$)}} \Big]  \hl{- \frac{1}{\beta} \DKL\pV{\optdist(s_{t}|\bs_{1:t-1})}{\base(s_{t}|\bs_{1:t-1})}} \tag{CD proposal optimization} \label{eq:mudgal_opt} \raisetag{10pt} \\[2ex]
  q_{t}^{\pi^*}(s_t|\bs_{1:t-1}) 
  &=\hl{ \argmax \limits_{\optdist(s_t|\bs_{1:t-1}) } \mathbb{E}_{\optdist(s_t|\bs_{1:t-1})} }
    \Big[ \underbrace{ \frac{1}{\beta} \log \mathbb{E}_{\base(\bs_{t+1:T}|\bs_{1:t})}\big[ e^{\beta \hl{ r_{\text{cd}}(\bs_{1:T}) }}  \big]}_{V_t^{*}(\bs_{1:t}) \text{ ~~(for $\finaltwistonly = e^{\beta r_\text{cd}}$)}} \Big] \hl{ - \frac{1}{\beta} \DKL\pV{\optdist(s_{t}|\bs_{1:t-1})}{\base(s_{t}|\bs_{1:t-1})}  } 
   \tag{Soft RL proposal optimization}\label{eq:mudgal_soft_rl_opt}
\end{align}
The second terms of \cref{eq:mudgal_opt} and \cref{eq:mudgal_soft_rl_opt} match and correspond to one-step KL divergence regularization of the policy $q_t(s_t|\bs_{1:t-1})$.   However, the expectation terms differ as we now discuss.

\vheader
\paragraph{Soft Values Account for Future Regularization}
Using \cref{eq:soft_value_app} to expand the definition of the soft value function, we see that \cref{eq:mudgal_soft_rl_opt} also implicitly contains an expected terminal reward,
\begin{align}
   V_t^*(\bs_{1:t}) = \frac{1}{\beta} \log \mathbb{E}_{\base(\bs_{t+1:T}|\bs_{1:t})} e^{\beta \hl{r_{\text{cd}}(\bs_{1:T})}} = \max \limits_{\optdist(\bs_{t+1:T}|\bs_{1:t})}  \mathbb{E}_{\optdist(\bs_{t+1:T}|\bs_{1:t})}\big[ \hl{ r_{\text{cd}}(\bs_{1:T})} \big] - \frac{1}{\beta} \DKL\pV{\optdist(\bs_{t+1:T}|\bs_{1:t})}{\base(\bs_{t+1:T}|\bs_{1:t})} \label{eq:value_mudgal_sec}
\end{align}
As $\beta \rightarrow 0$ in \cref{eq:value_mudgal_sec}, this optimization strictly enforces $\optdist(\bs_{t+1:T}|\bs_{1:t}) = \base(\bs_{t+1:T}|\bs_{1:t})$, and the soft value function recovers the expected reward under the base model $\mathbb{E}_{\base(\bs_{t+1:T}|\bs_{1:t})}[r_\text{cd}(\bs_{1:T})] $, which appears in first term \cref{eq:mudgal_opt}.
On the other hand, the second term in \cref{eq:mudgal_opt} uses $\beta > 0$ for optimization of the proposal $q(s_{t}|\bs_{1:t-1}) $ at the current step.
This inconsistency in \cref{eq:mudgal_opt} (using $\beta=0$ in the first term and $\beta>0$ in the second term) arises from the fact that \cref{eq:mudgal_opt} does not consider the effect of \textit{future} regularization, while the MDP formulation in \cref{eq:mudgal_soft_rl_opt} does so via the optimization in \cref{eq:value_mudgal_sec} and the log-mean-exp form of the soft value function $V_t^*$. 

\vheader
\paragraph{On \citet{mudgal2023controlled}'s One-Step Proposal and SMC Interpretation}
As noted in \cref{eq:cd_opt}, the twists learned by \citet{mudgal2023controlled} correspond to policy evaluation for the reward $r_\text{cd}$ under the base model $\base$.    
However, we have argued that the one-step proposal in \cref{eq:mudgal_prop} (which considers one-step KL regularization of $q_t^{\text{cd}}$ to $\base$) does not immediately 
fit within our SMC framework.   In particular, it is not apparent that the composition of one-step proposals $q^{\text{cd}}(\bs_{1:T}) = \prod_{\tau=1}^t q_\tau^{\text{cd}}(s_\tau|\bs_{1:\tau-1})$ samples from the marginals $\sigma(\bs_{1:t})$ of a natural target distribution $\sigma(\bs_{1:T})$ at optimality.

\vheader
\paragraph{
Flexible Inference-Time $\beta$ Scaling }

The experiments in \citet{mudgal2023controlled} evaluate tradeoff curves between expected reward and $\DKL\pV{q^{\text{cd}}(\bs_{1:T})}{\base(\bs_{1:T})}$ for various values of regularization strength $\beta$. 
Since the twists learned by \citet{mudgal2023controlled} in \cref{eq:cd_opt} do not depend on $\beta$,  sampling according to \cref{eq:mudgal_prop} or \cref{eq:mudgal_opt} has the benefit of allowing {flexible} tempering or $\beta$-scaling at inference time {without} additional learning.

Such tradeoff curves are also natural from the perspective of soft-RL (c.f. \cref{eq:variational_post} and \cref{eq:elbo_gap}).   While \cref{eq:value_mudgal_sec} appears to require {separate} twist-learning procedures for each $\beta$, flexible inference-time $\beta$ scaling 
could be achieved with a single training run in our framework by learning a conditional twist network
$\vartwist{t}(\bs_{1:t}, \beta)$ which considers $\beta$ 
in its input and training loss, or adapting the methods of \citep{bae2022multi} proposed in the context of rate-distortion optimization.

  \paragraph{Comparison with \citet{khanov2024args}}
 \citet{khanov2024args} consider softmax decoding similar to \cref{eq:mudgal_twist_ind}.   However, instead of $ V_t^\theta(\bs_{1:t})$ as the logit, they use a reward model $r_T(\bs_{1:T})$ which is trained from full sequences ($\finaltwist = e^{\beta r_T(\bs_{1:T})}$), but applied to partial sequences without modification, $r_T(\bs_{1:t})$.   This clearly does not correspond to a twist or soft value function $V_t^*(\bs_{1:t})=  \frac{1}{\beta} \log \sum_{\bs_{t+1:T}} \base(\bs_{t+1:T}|\bs_{1:t}) e^{\beta r_T(\bs_{1:T})} \neq r_T(\bs_{1:t})$.

\vheader
\subsection{Blockwise Greedy Decoding in \citet{mudgal2023controlled}}
\vheader
As an alternative use of the twist functions at inference time and a generalization of best-of-$K$ decoding to partial sequences, \citet{mudgal2023controlled} also consider 
a `blockwise' decoding scheme using the learned twist functions $\vartwist{t}$. 
In particular, for $K$ 
partial completions of length $M$ (from a prefix $\bs_{1:t}$),
sampled from the base model, $\bs_{t+1:t+M}^{(k)} \sim \base( \bs_{t+1:t+M} | \bs_{1:t})$, \citet{mudgal2023controlled} propose to choose 
\begin{align}
\bs_{t+1:t+M}^{\sind} = \argmax \limits_{k} \vartwist{t+M}(\bs_{1:t+M}^{k})
\end{align}
and proceed with sampling $K$ further continuations with prefix $\bs_{1:t+M}^{\sind}$ until the next resampling step or an end-of-string token is reached.
The $\argmax$ selection strategy may seem natural from the {unregularized} RL (as $\beta \rightarrow \infty$) or expected future reward perspective in \cref{app:mudgal_decoding}, but does not yield samples from $\sigma(\bs_{1:T})$ with the corresponding optimal twists.

Our SMC framework instead would advocate \textit{probabilistic} resampling based on the approximate twist functions using the ($c$- or $M$-step) importance weights 
in \cref{sec:twist_smc} in order to match the desired target distribution.  %

Finally, \citet{khanov2024args} also consider $\argmax$ decoding of next tokens using the unmodified $r_T(\bs_{1:t})$ described above.
\section{Proposal Learning Methods}\label{app:prop}
We next describe methods for learning variational policies or proposals $q^\qparam(\bs_{1:T}\condzero) = \prod_{t=1}^T q_t^\qparam(s_{t}|\bs_{1:t-1})$
\normalsize
parameterized by $\qparam$, 
which can be used for \gls{SMC} sampling with intermediate targets %
$\pitwisttheta{t}(\bs_{1:t}\condzero)$ and learned twists $\vartwist{t}(\bs_{1:t})$ or $V_t^{\thb}(\bs_{1:t})$ parameterized by $\thb$.
Alternatively, such proposals may be used directly in the \gls{IWAE} bounds on $\log \cZ_\sigma$, which rely on simple importance sampling over full sequences as in \cref{sec:simple_is} and \textit{do not require the definition of intermediate targets} $\pitwist{t}$.

In \cref{app:dpg}, we provide a detailed description of the DPG policy gradient method, which can be interpreted as a maximum likelihood objective for a sequential energy-based model.   To distinguish this EBM approach from our  CTL method for twist learning, we emphasize issues which can arise from naive use of a proposal-learning objective to define intermediate twisting targets for SMC in \cref{app:twist_from_proposal}.   %

\subsection{Path Consistency Learning for Controlled Generation}\label{app:pcl}
\citet{guo2021efficient} consider learning $Q$-values to obtain a fine-tuned variational policy which can be directly used as a sampling distribution for controlled generation.   
Building on the \gls{PCL} loss in 
\citet{nachum2017bridging} and \cref{app:pcl_twist_learning},
\citet{guo2021efficient} consider parameterizing the proposal using $Q_t^\qparam(s_{t},\bs_{1:t-1})$, 
\begin{align}
 \optdist_{t}^{\qparam}(s_{t}|\bs_{1:t-1}) =  \base(s_{t}|\bs_{1:t-1}) \expof{\beta Q_t^\qparam(s_t, \bs_{1:t-1}) - \beta V_{Q^\qparam}(\bs_{1:t-1})}
 \label{eq:proposal_qvalues1}
\end{align}
where $V_{Q^\qparam}(\bs_{1:t-1}) = \frac{1}{\beta} \log \sum_{s_t} \base(s_t|\bs_{1:t-1}) e^{\beta Q_t^\qparam }$ enforces normalization.

\citet{guo2021efficient} define the targets using $\bar{Q}_t^\qparam(s_{t}, \bs_{1:t-1})$, a slowly-updated target network based on $Q_t^\qparam$.   Using the implied form of the soft value $\bar{V}(\bs_{1:t-1}) \coloneqq \frac{1}{\beta} \log \sum_{s_{t}} \base(s_{t}|\bs_{1:t-1}) e^{\beta \bar{Q}_t^\qparam(s_{t}, \bs_{1:t-1})}$, %
the single-step \gls{PCL} loss becomes
\begin{align}
    \cL_{\textsc{pcl}-\textsc{q}}({\qparam})     &= \min \limits_{\qparam} \sum \limits_{t=1}^{T} \mathbb{E}_{\samplingdist(\bs_{1:t}\condzero)}\bigg[ \Big(  r_{t}(\bs_{1:t}) +  \texttt{sg}(\bar{V}_{t}(\bs_{1:t})) -  \texttt{sg}(\bar{V}_{t-1}(\bs_{1:t-1}))
    - Q_t^\qparam(s_{t},\bs_{1:t-1}) + V_{Q^\qparam}(\bs_{1:t-1}) \Big)^2 \bigg] \label{eq:pcl_guo}
\end{align}
where $\texttt{sg}(\cdot)$ indicates stop gradient. 
Building on the interpretation in \cref{sec:consistency_twist}, we view $ \bar{V}_t(\bs_{1:t})$ and $\bar{V}_{t-1}(\bs_{1:t-1})$ as the twisting targets, with a learned proposal parameterized by $Q_t^\qparam$ as in \cref{eq:proposal_qvalues1} (or \cref{app:final_timestep}).
While the loss in \cref{eq:pcl_guo} is similar in practice to the soft Q-learning loss 
in \cref{app:soft_q}, we emphasize that the latter is motivated from the SMC perspective with the twisting targets as the primary object of interest and flexibility in the choice of proposal.   By contrast, \citet{guo2021efficient} are interested in learning a proposal policy and do not consider, for example, resampling according to $\bar{V}_t$.

\citet{guo2021efficient, nachum2017bridging} also consider `multi-step' \gls{PCL} losses (\cref{eq:pcl_multi}) which
use observed reward during rollouts of length $\lambda$ to limit reliance on estimated intermediate values $\bar{V}_{t}(\bs_{1:t})$.
The objective in \citet{hu2023amortizing} also corresponds to a \gls{PCL} objective.

\vheader 
\subsection{Policy Gradient Methods}
\vheader
Traditional \gls{RLHF} pipelines use a policy gradient method such as \gls{PPO} to optimize the objective in \cref{eq:variational_post},
\begin{align}
\mathcal{L}_{\textsc{elbo}}(\qparam) = \max \limits_{\qparam} ~ \mathbb{E}_{q^{\qparam}(\bs_{1:T}\condzero)}\left[ r_T(\bs_{1:T}) \right] - \frac{1}{\beta}\DKL\pV{q^{\qparam}(\bs_{1:T}\condzero)}{\base(\bs_{1:T}\condzero) } = \min \limits_{\qparam} \DKL \pV{q^\qparam(\bs_{1:T}\condzero) }{ \target(\bs_{1:T}\condzero) }
\end{align}
where $r_T(\bs_{1:T}) = \frac{1}{\beta} \log \finaltwist$ corresponds to our final twist.  As in \cref{eq:elbo_gap}, the gap in this optimization is the mode-seeking \textsc{KL} divergence $\DKL\pV{\prop^\qparam(\bs_{1:T})}{\sigma(\bs_{1:T})}$.

Notably, this objective does not make use of exact target samples from $\target(\bs_{1:T}\condzero)$ when they are available.  Further, the mode-seeking behavior has been shown to reduce diversity of fine-tuned models \citep{stiennon2020learning, go2023aligning}.
To combat this, \citet{go2023aligning} derive policy gradient methods to optimize arbitrary $f$-divergences  $D_f\pV{q^\qparam(\bs_{1:T}\condzero) }{ \target(\bs_{1:T}\condzero)} $ between the learned variational policy $q^{\qparam}$ and target $\target$.    

\vheader\subsection{Policy Gradient with Mass-Covering / Maximum Likelihood KL Divergence}\label{app:dpg}
\vheader
We focus on the case of minimizing the mass-covering \textsc{kl} divergence $\DKL\pV{ \target(\bs_{1:T}\condzero) }{ q^\qparam(\bs_{1:T}\condzero)}$ to train $q_{\qparam}$, which constitutes the distributional policy gradients (\textsc{dpg}) method for language model finetuning \citep{parshakova2019distributional, khalifa2020distributional, korbak2022controlling, go2023aligning} and has been used to learn \gls{SMC} proposals in state-space models in \citep{gu2015neural}.

In particular, the gradient of $\DKL\pV{ \target(\bs_{1:T}) }{ q^\qparam(\bs_{1:T})} = \mathbb{E}_{\target(\bs_{1:T})}[ \log {\target(\bs_{1:T})} - \log{ q^\qparam(\bs_{1:T})}] $
is 
\begin{align}
\begin{split}
\nabla_{\qparam} \DKL\pV{\target(\bs_{1:T}\condzero)}{ q^\qparam(\bs_{1:T}\condzero)} = - \mathbb{E}_{\target(\bs_{1:T}\condzero)}\left[ \nabla_{\qparam} \log q^\qparam(\bs_{1:T}\condzero)\right]&= -\mathbb{E}_{ q^\qparam(\bs_{1:T}\condzero)}\left[ \frac{\target(\bs_{1:T}\condzero)}{q^\qparam(\bs_{1:T}\condzero)} \nabla_{\qparam} \log q^\qparam(\bs_{1:T}\condzero) \right] \\
&= -\mathbb{E}_{ q^\qparam(\bs_{1:T}\condzero)}\left[\frac{1}{\cZ_{\sigma}} \frac{\ttarget(\bs_{1:T}\condzero)}{q^\qparam(\bs_{1:T}\condzero)} \nabla_{\qparam} \log q^\qparam(\bs_{1:T}\condzero) \right]
\label{eq:kl_grad} 
\end{split}
\end{align}
We recognize the importance weights $ w(\bs_{1:T}) = \frac{\ttarget(\bs_{1:T}\condzero)}{q^\qparam(\bs_{1:T})} $ from \cref{eq:single_sample_w_ali}.
\citet{go2023aligning} consider estimating \cref{eq:kl_grad} using a moving average estimate of the partition function $\hat{Z}_\sigma$
\begin{align}
\nabla_{\qparam} \DKL\pV{ \target(\bs_{1:T}\condzero) }{ q^\qparam(\bs_{1:T}\condzero)}~ \approx ~ \sum_{k=1}^K \frac{1}{\hat{Z}_\sigma
} w(\bs_{1:T}^{(k)})  \nabla_{\qparam} \log q^\qparam(\bs_{1:T}^{(k)}\condzero) \label{eq:dpg} \tag{DPG (general $\hat{Z}_\sigma$)},
\end{align}
Alternatively, the expectation may thus be estimated using 
\gls{SIS} with the variational policy $q^\qparam(\bs_{1:T}\condzero)$. 
Using \gls{SNIS} to estimate \cref{eq:kl_grad} as in \cref{eq:snis} %
corresponds to $\hat{Z}_\sigma = \sum_{j=1}^K w(\bs_{1:T}^{(k)})$, with
\begin{align}
\nabla_{\qparam} \DKL\pV{\target(\bs_{1:T}) }{ q^\qparam(\bs_{1:T})}~ &\approx ~ \sum_{k=1}^K \frac{w(\bs_{1:T}^{(k)})}{\sum_{j=1}^K w(\bs_{1:T}^{(j)})}\nabla_{\qparam} \log q^\qparam(\bs_{1:T}^{(k)}\condzero). 
 \label{eq:dpg_snis}
\end{align}
\normalsize
We use this gradient for DPG proposal learning in the main text experiments, 
 although we use the parameterization 
described in \cref{eq:snis_grad} below.

\paragraph{{DPG as Sequential Maximum Likelihood Objective}}
We now show \cref{eq:dpg_snis} is equivalent to
a {sequential maximum likelihood \gls{EBM} objective}.   Consider minimizing the KL divergence,
\begin{align}
\klof{\target(\bs_{1:T}\condzero)}{q^\qparam{}(\bs_{1:T}\condzero)} =  \sum \limits_{t=1}^{T} &~ \mathbb{E}_{\target(\bs_{1:t-1}\condzero)} \klof{\target(s_t|\bs_{1:t-1})}{ 
q_t^\qparam(s_t|\bs_{1:t-1})
} \label{eq:full_ebm} \tag{EBM proposal learning} \\
\text{where} \qquad 
q_t^\qparam(s_{t}|\bs_{1:t-1}) &= \frac{\base(s_{t}|\bs_{1:t-1}) \vartwistq{t}(\bs_{1:t})}{\sum_{s_{t}} \base(s_{t}|\bs_{1:t-1}) \vartwistq{t}(\bs_{1:t}) }. \label{eq:qprop}
\end{align}
\normalsize
{
While this is reminscent of the twist-induced proposal in \cref{prop:transition}, we emphasize distinctions between energy-based learning of the proposal (DPG) versus energy-based learning of twist functions (CTL) in  \cref{app:twist_from_proposal}.}

The gradient of \cref{eq:full_ebm} becomes
\small 
\begin{align}
\nabla_{\qparam} \klof{\target(\bs_{1:T})}{q^\qparam(\bs_{1:T})} &= \sum \limits_{t=1}^{T} \mathbb{E}_{\target(\bs_{1:t-1}\condzero)}\Big[ \mathbb{E}_{\target(s_{t}|\bs_{1:t-1})}\big[ \nabla_\qparam
\log \vartwistq{t}(\bs_{1:t}) \big]
- \mathbb{E}_{
q^{\qparam}_t(s_t|\bs_{1:t-1})}
\big[ \nabla_\qparam 
\log \vartwistq{t}(\bs_{1:t}) 
\big] \Big] ~ . \label{eq:seq_ebm}
\end{align}
\normalsize
Starting from \cref{eq:dpg_snis}, we now seek to recover \cref{eq:seq_ebm}.   
Using \cref{eq:qprop}, we can write 
\small \begin{align*}
\log \optdist^\qparam(\bs_{1:T}^{(k)}\condzero)  &= \sum_{t=1}^{T}
\big( \log \base(s_{t}^{(k)}|\bs_{1:t-1}^{(k)}) + \log \vartwistq{t}(\bs_{1:t}^{(k)}) - \log \sum_{s_{t}} \base(s_{t} |\bs_{1:t-1}^{(k)}) \vartwistq{t}(s_t,\bs_{1:t-1}^{(k)}) \big) \\
\nabla_\qparam \log \optdist^\qparam(\bs_{1:T}^{(k)}\condzero)&= \sum_{t=1}^{T} \left( \nabla_\qparam \log  \vartwistq{t}(\bs_{1:t}^{(k)}) - \mathbb{E}_{q^{\qparam}_t(s_{t}|\bs_{1:t-1}^{(k)})}\left[ \nabla_\qparam \log 
\vartwistq{t}(s_t,\bs_{1:t-1}^{(k)})\right]\right)
\end{align*}
\normalsize
Substituting into \cref{eq:dpg_snis}, we recover
\begin{align}
\hspace*{-.22cm} \nabla_{\qparam} \DKL\pV{\target(\bs_{1:T}) }{ \optdist^\qparam(\bs_{1:T})}~ &\approx ~ \sum_{k=1}^K \frac{w(\bs_{1:T}^{(k)})}{\sum_{j=1}^K w(\bs_{1:T}^{(k)})}\sum \limits_{t=1}^{T} \left(  \nabla_{\qparam} \log \vartwistq{t}(\bs_{1:t}^{(k)}) - 
\mathbb{E}_{q^{\qparam}_t(s_{t}|\bs_{1:t-1}^{(k)})}\left[  \nabla_{\qparam} \log
\vartwistq{t}(s_t,\bs_{1:t-1}^{(k)}) \right]
\right) \label{eq:snis_grad} \tag{DPG}
\end{align}
\normalsize
which is an SNIS estimate of the maximum likelihood EBM gradient in \cref{eq:seq_ebm}, as desired.   Note that the expectation over $q^{\qparam}_t(s_{t}|\bs_{1:t-1}^{(k)})$ can be calculated exactly.

\paragraph{Comparison with CTL Objective}   The gradient in \cref{eq:snis_grad} above appears similar to our \gls{CTL} objective and gradient in \cref{sec:ctl}.    However, the DPG loss in \cref{eq:full_ebm} is a single (joint) KL divergence over the entire sequence, whereas CTL optimizes $T$ separate KL divergences for each intermediate marginal. %

For the DPG gradient in \cref{eq:seq_ebm}, negative sampling is performed using a `positive' prefix $\bs_{1:t-1}^{(k)} \sim \target(\bs_{1:t-1})$ and an \textit{exact} `negative' sample from the one-step-ahead $q_t^{\qparam}(s_{t}|\bs_{1:t-1}^{(k)})$ (\cref{eq:qprop}, which we have assumed to be tractable).   In practice, we obtain the prefixes using the truncation of exact samples or approximate positive sampling with the final target weights as in \cref{eq:snis_grad}.
By contrast, the CTL gradient in \cref{eq:grad_separate_ebm}
involves \textit{approximate} negative sampling under each $\pitwist{t}(\bs_{1:t})$.

\newcommand{\vartwistqo}[1]{\psi_{#1}^{\qparam*}}
\newcommand{\vartwistqc}[1]{\psi_{#1}^{\qparam c}}
\newcommand{\zonestepqq}[2]{Z^{\qparam}_{#1}(\bs_{1:#2})}

\vheader
\subsubsection{Naive Use of Proposal Learning to define Twisted SMC Targets}\label{app:twist_from_proposal} %
\vheader
While we have shown in 
\cref{prop:transition} how one-step proposals $\{ \proptwist{t}(s_t|\bs_{1:t-1}) \}_{t=1}^{T}$ can be induced from a given set of twist functions $\{ \approxtwist{t}(\bs_{1:t}) \}_{t=1}^{T}$ or target distributions $\{ \pitwist{t}(\bs_{1:t}) \}_{t=1}^T$, we now emphasize that moving the other direction (inducing intermediate twisting targets from a proposal learning scheme parameterized by $\{\vartwistq{t}\}_{t=1}^T$) does not yield the correct intermediate targets for resampling (\cref{app:optimality}), even at optimality in the proposal learning objective.

We focus our arguments on learning with the EBM maximum likelihood objective in \cref{eq:full_ebm} as an example.
The proposal energies $\vartwistq{t}(\bs_{1:t})$ appear to play a role analogous to the twist function $\approxtwist{t}(\bs_{1:t})$ in the one-step proposal induced from twist targets $\{\pitwist{t} \}_{t=1}^T$ in \cref{sec:twist_smc}.  

However, we proceed to show in \cref{prop:twist_issue} that naive use of $\vartwistq{t}$ to define twisting targets using \footnote{We assume no intermediate \phishorts in this section, as in the main text.}
\begin{align}
\hspace*{-.25cm}
\begin{split}
\pitwist{t}^{\qparam} (\bs_{1:t}\condzero) = \begin{cases} 
\frac{1}{\zfilter{t}}~  p_0(\bs_{1:t}\condzero) ~
\vartwistq{t}(\bs_{1:t}) \qquad  \hfill t \neq T\\
\frac{1}{\cZ_\sigma} ~ p_0(\bs_{1:T}\condzero)~ 
\finaltwist
\hfill t= T
\end{cases}
\end{split} \label{eq:filtering_q} \raisetag{20pt}
\end{align}
{need not lead to an SMC procedure for which $\pitwist{t}^{\qparam}(\bs_{1:t}) = \sigma(\bs_{1:t})$, even if $ 
\prop^{\qparam}_t(s_t|\bs_{1:t-1}) = \target(s_t|\bs_{1:t-1})$ for all $t$.     We thus argue that $\vartwistq{t}$ learned using \cref{eq:full_ebm} \textit{should not be used} as target twists in \cref{eq:filtering_q}, since they do not yield the optimal interemdiate target distributions at optimality (\cref{app:optimality}).}

We begin by showing a simple lemma for the one-step conditionals in \cref{eq:full_ebm}.
\begin{lemma}\label{lemma:opt_prop}
Any twist induced proposal $\prop^{\qparam}_t(s_t|\bs_{1:t-1})$ (induced by $\vartwistq{t}(\bs_{1:t})$) is invariant to rescaling $\vartwistq{t}(\bs_{1:t})$ by an arbitrary constant $c(\bs_{1:t-1})$ with respect to $\bs_{1:t-1}$,
\begin{align}
\vartwistqc{t}(\bs_{1:t}) \coloneqq c(\bs_{1:t-1}) \vartwistq{t}(\bs_{1:t}) \label{eq:opt_twists_q}
\end{align}
\end{lemma}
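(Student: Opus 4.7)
The plan is to observe that \cref{lemma:opt_prop} is an immediate consequence of the fact that the twist-induced proposal (as in \cref{prop:transition} or \cref{eq:prop_twist}) is defined by normalizing $\base(s_t|\bs_{1:t-1})\vartwistq{t}(\bs_{1:t})$ over $s_t$, so any factor depending only on $\bs_{1:t-1}$ can be pulled out of both the numerator and denominator and cancelled. I will just compute this directly.

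Concretely, I would substitute $\vartwistqc{t}(\bs_{1:t}) = c(\bs_{1:t-1}) \vartwistq{t}(\bs_{1:t})$ into the twist-induced proposal formula:
\begin{align*}
\prop^{\qparam c}_t(s_t|\bs_{1:t-1})
&= \frac{\base(s_t|\bs_{1:t-1}) \vartwistqc{t}(\bs_{1:t})}{\sum_{s_t} \base(s_t|\bs_{1:t-1}) \vartwistqc{t}(\bs_{1:t})} \\
&= \frac{c(\bs_{1:t-1}) \, \base(s_t|\bs_{1:t-1}) \vartwistq{t}(\bs_{1:t})}{c(\bs_{1:t-1}) \sum_{s_t} \base(s_t|\bs_{1:t-1}) \vartwistq{t}(\bs_{1:t})}
= \prop^{\qparam}_t(s_t|\bs_{1:t-1}),
\end{align*}
where the second equality uses that $c(\bs_{1:t-1})$ is independent of the summation variable $s_t$.

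There is no real obstacle here; the lemma is essentially definitional. Its importance lies in what it is being set up to do in \cref{prop:twist_issue}: it shows that the proposal-learning objective cannot pin down $\vartwistq{t}$ beyond its equivalence class modulo multiplication by prefix-dependent constants, which is exactly the slack needed to argue that using $\vartwistq{t}$ directly as a twist target in \cref{eq:filtering_q} gives intermediate marginals $\pitwist{t}^\qparam(\bs_{1:t})$ that generally differ from the true $\sigma(\bs_{1:t})$. So my ``proof'' would be a two-line display, followed by a brief remark emphasizing that any reweighting $c(\bs_{1:t-1})$ leaves the one-step proposal unchanged but alters $\pitwist{t}^\qparam(\bs_{1:t}) \propto \base(\bs_{1:t})\vartwistqc{t}(\bs_{1:t})$ nontrivially, foreshadowing the distinction developed in \cref{prop:twist_issue}.
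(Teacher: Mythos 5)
Your proof is correct and is essentially identical to the paper's: both substitute $\vartwistqc{t} = c(\bs_{1:t-1})\vartwistq{t}$ into the normalized proposal, pull the prefix-dependent constant out of the sum over $s_t$, and cancel it. The additional remark foreshadowing \cref{prop:twist_issue} accurately captures the lemma's role in the surrounding argument.
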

\begin{proof}
\small 
\begin{align*}
\prop^{\qparam c}_t(s_{t}|\bs_{1:t-1})
=
\frac{\base(s_{t}|\bs_{1:t-1}) \vartwistqc{t}(\bs_{1:t})}{\sum_{s_{t}} \base(s_{t}|\bs_{1:t-1}) \vartwistqc{t}(\bs_{1:t}) } 
&= 
\frac{\base(s_{t}|\bs_{1:t-1}) c(\bs_{1:t-1})\vartwistq{t}(\bs_{1:t})}{\sum_{s_{t}} \base(s_{t}|\bs_{1:t-1}) c(\bs_{1:t-1})\vartwistq{t}(\bs_{1:t}) }
= 
\frac{\base(s_{t}|\bs_{1:t-1}) \vartwistq{t}(\bs_{1:t})}{\sum_{s_{t}} \base(s_{t}|\bs_{1:t-1}) \vartwistq{t}(\bs_{1:t}) }
= 
\prop^{\qparam}_t(s_{t}|\bs_{1:t-1})
\,.\nonumber 
\end{align*}
\normalsize
\end{proof}

\vheader 
\begin{proposition}\label{prop:twist_issue}
There exist $\{ \vartwistqo{t} \}_{t=1}^T$ such that (i) $\prop^{\qparam*}_t(s_t|\bs_{1:t-1}) = \sigma(s_t|\bs_{1:t-1})$ and (ii) the SMC targets $\{ \pitwist{t}^{\qparam*}(\bs_{1:t}) \}_{t=1}^T$ induced by $\{ \vartwistqo{t} \}_{t=1}^T$ via \cref{eq:filtering_q} are different from $\sigma(\bs_{1:t})$.
\end{proposition}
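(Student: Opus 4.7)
The plan is to exploit the invariance established in \cref{lemma:opt_prop}: any prefix-dependent rescaling $\vartwistqo{t}(\bs_{1:t}) \gets c(\bs_{1:t-1})\vartwistqo{t}(\bs_{1:t})$ leaves the induced one-step proposal unchanged, but \emph{does} change the unnormalized intermediate target $\pitwist{t}^{\qparam*}(\bs_{1:t})\propto \base(\bs_{1:t})\vartwistqo{t}(\bs_{1:t})$ in \cref{eq:filtering_q}. So we can start from the optimal twists $\optimaltwist{t}(\bs_{1:t})\propto \sigma(\bs_{1:t})/\base(\bs_{1:t})$ (which satisfy both (i) and the negation of (ii)) and rescale by a non-trivial function of $\bs_{1:t-1}$ to keep (i) while breaking (ii).

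Concretely, I would define
\begin{align*}
\vartwistqo{t}(\bs_{1:t}) \;\coloneqq\; \frac{\sigma(s_t\mid \bs_{1:t-1})}{\base(s_t\mid \bs_{1:t-1})}\,,
\end{align*}
which equals $\optimaltwist{t}$ rescaled by $c(\bs_{1:t-1})=\prod_{\tau<t}\base(s_\tau|\bs_{1:\tau-1})/\sigma(s_\tau|\bs_{1:\tau-1})$, and then verify the two claims in order. For (i), a one-line computation from \cref{eq:qprop} shows $\prop^{\qparam*}_t(s_t|\bs_{1:t-1})\propto \base(s_t|\bs_{1:t-1})\cdot \sigma(s_t|\bs_{1:t-1})/\base(s_t|\bs_{1:t-1}) = \sigma(s_t|\bs_{1:t-1})$, which is already normalized in $s_t$. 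For (ii), substituting into \cref{eq:filtering_q} yields
\begin{align*}
\pitwist{t}^{\qparam*}(\bs_{1:t}) \;\propto\; \base(\bs_{1:t})\,\frac{\sigma(s_t\mid \bs_{1:t-1})}{\base(s_t\mid \bs_{1:t-1})} \;=\; \base(\bs_{1:t-1})\,\sigma(s_t\mid \bs_{1:t-1})\,,
\end{align*}
whereas the target marginal factorizes as $\sigma(\bs_{1:t}) = \sigma(\bs_{1:t-1})\,\sigma(s_t\mid\bs_{1:t-1})$. Thus $\pitwist{t}^{\qparam*}(\bs_{1:t}) = \sigma(\bs_{1:t})$ would force $\base(\bs_{1:t-1})\propto \sigma(\bs_{1:t-1})$, which fails in general.

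To close the proof I would supply a minimal explicit instance showing the prefix marginals can disagree. A clean choice is $T=3$ with binary tokens and $\phi(\bs_{1:3})$ depending on $s_1$ alone (say $\phi(\bs_{1:3}) = \mathbb{I}[s_1 = 1]$) while $\base(s_1=1) \in (0,1)$; then $\sigma(s_1=1)=1 \neq \base(s_1=1)$, so at $t=2$ we have $\pitwist{2}^{\qparam*}(\bs_{1:2}) \propto \base(s_1)\sigma(s_2|s_1) \not\propto \sigma(\bs_{1:2})$. I do not expect any real obstacle here; the content is essentially the observation that the sequential-EBM (DPG) objective constrains the proposal $\prop^{\qparam}_t$ but is blind to per-prefix rescalings of $\vartwistq{t}$, whereas the SMC targets in \cref{eq:filtering_q} depend on precisely those rescalings. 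The only care needed is to pick a rescaling (and supporting example) that makes the failure of marginal-matching unambiguous.
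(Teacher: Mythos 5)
Your proof is correct and uses essentially the same mechanism as the paper's: both exploit \cref{lemma:opt_prop} (invariance of the induced proposal under prefix-dependent rescalings of $\vartwistq{t}$) to keep condition (i) while breaking (ii); the paper rescales the optimal twist at a single timestep by an arbitrary $c(\bs_{1:t-1})$, whereas you pick the specific rescaling $\vartwistqo{t}=\sigma(s_t|\bs_{1:t-1})/\base(s_t|\bs_{1:t-1})$, which yields the clean induced target $\base(\bs_{1:t-1})\sigma(s_t|\bs_{1:t-1})\neq\sigma(\bs_{1:t})$. One small fix: in your explicit instance, $\phi(\bs_{1:3})=\mathbb{I}[s_1=1]$ makes $\sigma(s_1=0)=0$, so $\sigma(s_2|s_1=0)$ and hence your $\vartwistqo{2}$ are undefined on part of the support of $\base$; taking a strictly positive potential such as $\phi(\bs_{1:3})=2^{\mathbb{I}[s_1=1]}$ (so $\sigma(s_1=1)\neq\base(s_1=1)$ but all conditionals exist) repairs the example without changing the argument.
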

\begin{proof}
To satisfy condition (i) of the current proposition, we define
\begin{align}
\vartwistqo{\tau}(\bs_{1:\tau}) \coloneqq \begin{cases}
\phantom{c(\bs_{1:\tau-1})} \sum_{\bs_{\tau+1:T}} \base(\bs_{\tau+1:T}|\bs_{1:\tau}) \finaltwist \hfill \tau \neq t \\ c(\bs_{1:t-1}) ~ \sum_{\bs_{t+1:T}} \base(\bs_{t+1:T}|\bs_{1:t}) \finaltwist \qquad \hfill \tau = t
\end{cases}\label{eq:opt_twists_q}
\end{align}
which for all $\tau$, yields optimal proposals:  $(i)$ $\prop^{\qparam*}(s_{\tau}|\bs_{1:\tau-1}) = \sigma(s_{\tau}|\bs_{1:\tau-1}) \propto \base(s_\tau|\bs_{1:\tau-1})\vartwistqo{\tau}(\bs_{1:\tau}) $ via \cref{lemma:opt_prop}.
However, it is clear that $c(\bs_{1:t-1}) \neq 1$ can break the necessary condition for optimality of SMC sampling that $\pi_t(\bs_{1:t}) = \sigma(\bs_{1:t})$ (\cref{prop:opt_intermediate}).   In particular, consider 
\begin{align}
\pitwist{t}^{\qparam*} (\bs_{1:t}\condzero) = \frac{1}{\zfilter{t}}~  p_0(\bs_{1:t}\condzero) ~
\vartwistqo{t}(\bs_{1:t})  &= \frac{1}{\zfilter{t}}~ c(\bs_{1:t-1}) p_0(\bs_{1:t}\condzero) ~ \sum_{\bs_{t+1:T}} \base(\bs_{t+1:T}|\bs_{1:t}) \finaltwist \nonumber
\\ &= 
\frac{1}{\zfilter{t}} c(\bs_{1:t-1}) \tilde{\sigma}(\bs_{1:t}) \neq \sigma(\bs_{1:t}) 
\end{align}
\normalsize
for $c(\bs_{1:t-1}) \neq 1$, which introduces an additional factor which depends on $\bs_{1:t}$.   Thus, the twist target $\pitwist{t}^{\qparam*} (\bs_{1:t}\condzero)$ induced from $\vartwistqo{t}(\bs_{1:t})$ in \cref{eq:opt_twists_q} is not equal to the desired marginal $\sigma(\bs_{1:t})$, despite the fact that all proposals are optimal.
\end{proof}

We indeed observed experimentally that resampling based on \cref{eq:filtering_q} after training using \cref{eq:full_ebm} could lead to \textit{worse} SMC $\log \cZ_\sigma$ bounds than simply calculating the SIS or IWAE bound with $\prod_{t=1}^T q_t^{\qparam}(s_t|\bs_{1:t-1})$.%

\paragraph{Optimality in CTL Objective implies Optimal Twisted SMC}
In contrast to \cref{prop:twist_issue}, note that the
global optimum of our CTL objective $\min \sum_{t=1}^{T} \DKL\pV{\sigma(\bs_{1:t})}{\pitwist{t}^{\psi}(\bs_{1:t})}$ 
(which occurs for the optimal twists $\{\optimaltwist{t}\}_{t=1}^{T-1}$ in \cref{prop:optimal_twists}), results in both the twist-induced proposal $\propopttwist{t}(s_t|\bs_{1:t-1}) = \sigma(s_t|\bs_{1:t-1})$ and the twisting targets $\pitwist{t}^*(\bs_{1:t}) = \sigma(\bs_{1:t})$
satisfying the necessary and sufficient conditions for optimality outlined in \cref{app:optimality} \cref{prop:optimality_conditions}.

\newcommand{\vartwistqon}[1]{\bar{\psi_{#1}^{*}}}
\vheader
\subsubsection{
SMC with Normalized Targets Induced by Learned Proposal 
Leads to Uniform Weights
}\label{app:normalized_weights_fix}

\vheader
The issue in \cref{prop:twist_issue} arises from the degree of freedom $c(\bs_{1:t-1})$ in the normalization constant of the one-step proposal.   
To avoid this, we can instead define \textit{normalized} twisted intermediate targets using
\begin{align}
\begin{split}
\tpitwist{t}^{\qparam} (\bs_{1:t}\condzero) &= \begin{cases}  
 p_0(\bs_{1:t}\condzero) ~ \prod \limits_{\tau=1}^t \frac{\vartwistq{\tau}(\bs_{1:\tau}) }{\zonestepqq{\tau}{\tau-1}} ~~= ~{\prod\limits_{\tau=1}^t \prop^{\qparam}_\tau(s_\tau | \bs_{1:\tau-1}) }
\qquad  \hfill t \neq T\\
p_0(\bs_{1:T}\condzero)~ 
\finaltwist 
\hfill t= T
\end{cases}
\end{split} \label{eq:filtering_q2} \raisetag{20pt}
\end{align}
where $\zonestepqq{t}{t-1}$ arises from the proposal $%
\prop^{\qparam}_t(s_t|\bs_{1:t-1})\coloneqq \frac{1}{\zonestepqq{t}{t-1}}
\base(s_{t}|\bs_{1:t-1})\vartwistq{t}(\bs_{1:t})$ learned according to \cref{eq:full_ebm}.   

Crucially, $\tpitwist{t}^{\qparam}$ in \cref{eq:filtering_q2} are automatically normalized for $t \neq T$, as the product of normalized proposals.
In this case, SMC resampling with $\prop^{\qparam}$ or the twist-induced proposal 
yields uniform resampling weights,
\begin{equation}
\hspace*{-.22cm} \resizebox{.95\textwidth}{!}{\ensuremath{
(\text{for } t < T):\,\,  w_t(\bs_{1:t}) = 
    \frac{\tpitwist{t}^{\qparam}(\bs_{1:t}\condzero) }{ \tpitwist{t-1}^{\qparam}(\bs_{1:t-1}\condzero) \prop^{\qparam}(s_{t}|\bs_{1:t-1})} = 
    \frac{\base(\bs_{1:t})\prod \limits_{\tau=1}^t \frac{\vartwistq{\tau}(\bs_{1:\tau}) }{\zonestepqq{\tau}{\tau-1}} }{\base(\bs_{1:t-1}) \left( \prod \limits_{\tau=1}^{t-1} \frac{\vartwistq{\tau}(\bs_{1:\tau}) }{\zonestepqq{\tau}{\tau-1}} \right) \frac{1}{\zonestepqq{t}{t-1} } \base(s_{t}|\bs_{1:t-1}) \vartwistq{t}(\bs_{1:t})}
    =
   1 
   }}\label{eq:weights_equal_1}
\end{equation}
Although we were able to construct well-behaved intermediate twisting targets from a proposal-learning scheme $\prop^{\qparam}_t(s_t|\bs_{1:t-1}) \propto \base(s_{t}|\bs_{1:t-1})\vartwistq{t}(\bs_{1:t})$, 
\cref{eq:weights_equal_1} shows that this \textit{does not 
lead to meaningful intermediate SMC resampling}. 
{In other words, for $t < T$, the marginal distributions of SMC samples $\s_{1:t}^k$ with 
this scheme
are simply $\prop^{\qparam}(\bs_{1:t})$,
the same as we would obtain with no resampling (SIS/IWAE).
}
\vheader
\section{Bidirectional SMC}\label{app:bounds_smc}\label{app:smc}
\vheader

\newcommand{\mys}{s}%
\newcommand{\bom}{\boldsymbol{h}}%
\newcommand{\barbom}{\boldsymbol{\bar{h}}}
\newcommand{\listl}{\pmb{[} }%
\newcommand{\listr}{\pmb{]} }%

In this section, we recall the extended state-space probabilistic interpretation of \gls{SMC} from \citep{maddison2017filtering, andrieu2010particle}.     
The idea is to define an unnormalized target distribution $\smctgt(\boldsymbol{S})$ and normalized proposal $\smcprop(\boldsymbol{S})$ over an extended state space $\boldsymbol{S}$ containing all random variables relevant to SMC sampling and importance weighting with $K$ sequences of length $T$.
Defining $\tsmctgt(\boldsymbol{S})$ such that its normalization constant matches $\cZ_\sigma$, 
we can use 
simple importance sampling (SIS) in this extended state space to show that $K$-sequence SMC sampling yields an unbiased estimator of $\cZ_\sigma$, for example $\cZ_\sigma = \mathbb{E}_{\smcprop(\boldsymbol{S}) }[\frac{\tsmctgt(\boldsymbol{S})}{\smcprop(\boldsymbol{S})}]$ (as in \cref{eq:smc_ess}).
Our end goal is to use this probabilistic interpretation to derive the lower and upper bounds on $\log \cZ_\sigma$ in \cref{prop:smc_proposition}, following \citet{brekelmans2022improving} App. A.    

We define the extended state space proposal and target distributions below, noting that our bounds will require sampling from normalized $\smctgt(\boldsymbol{S})$ or $\smcprop(\boldsymbol{S})$, and evaluating $\tsmctgt(\boldsymbol{S})$ and $\smcprop(\boldsymbol{S})$. %
We summarize the algorithm for sampling $\smctgt(\boldsymbol{S})$ in \cref{alg:smc_target}, using concatenation notation for simplicity instead of the 
probabilistic interpretation using index histories in the text.%

\vheader
\paragraph{Single-Sequence Target and Proposal}
We construct our importance sampling bounds with the goal of estimating the (log) partition function and sampling from a target distribution $\sigma(\bs_{1:T}\condzero) = \ttarget(\bs_{1:T}\condzero) / \cZ_\sigma$.    
We leverage a sequence of intermediate target distributions, $
\{ \pitwist{t}(\bs_{1:t})  = \frac{1}{\cZ_t} \tpitwist{t}(\bs_{1:t}) \} _{t=1}^{T}$ over partial sequences, with the final target $\pitwist{T}(\bs_{1:T}) = \sigma(\bs_{1:T})$ and $\cZ_T= \cZ_\sigma$.   We assume $\tilde{\pi}_{0}(\bs_0)=1$ for all prompts with $\cZ_0 = 1$.
Finally, our bounds and sampling procedures also depend on a given set of proposal distribution $\{q\pv{s_{t}}{\s_{1:t-1}}\}_{t=1}^T$, as in \cref{sec:smc_bg}. 

\vheader
\paragraph{Extended State Space Random Variables}
Consider an extended state space $\boldsymbol{S}$ containing $KT$ tokens  $ \{ \mys_{t}^{k} \}_{t=1,k=1}^{T,K}$ with $\mys_t^k \in \cV$ and $KT$ indexing random variables
$\{\omega_t^{k} \}_{t=1,k=1}^{T,K}$ with $\omega_t^{k} \in [1,K]$, 
to represent the results of resampling (\cref{eq:incremental_resample}),
\begin{align}
\boldsymbol{S} \coloneqq \big\{ \mys_{t}^{k}, \omega_t^{k} \big\}_{t=1,k=1}^{T,K}
\end{align}
For ease of notation (and similarly to \citet{maddison2017filtering, andrieu2010particle}), we call attention to our use of recursive backtracking index 
operations to collect sequences $\{\bs_{1:t} \}$ based on the results of resampling $\{\omega_t^{k}\}$.   
We use \textit{lists} of index histories to construct sequences of tokens, with two recursive definitions of histories.  
Letting {$+$} %
indicate 
appending of lists,%
\def\rddots#1{\cdot^{\cdot^{\cdot^{#1}}}}
\begin{align}
\qquad \qquad \begin{split}
\bom_{0}^{k} \coloneqq \listl ~ 
\listr ~~~ \forall k,
\qquad %
\bom_{t}^{k} &\coloneqq \bom_{t-1}^{\omega_t^k}  + \listl \omega_t^k \listr
\\
\barbom_{0}^{k} \coloneqq \listl ~ 
\listr ~~~ \forall k,
\qquad  \barbom_{t}^k &\coloneqq \bom_{t-1}^{k} + \listl k \listr \qquad \phantom{\omega_t^k} 
\end{split}\label{eq:index} \tag{Index Notation} \raisetag{15pt}
\\[2ex]
\intertext{
For example, the history $\bom_{t-1}^{k}$ will be used to construct prefix sequences $\bs_{1:t-1}^{\bom_{t-1}^{k}}$ (i.e. lists of tokens) for sampling a next token $s_t^k$.  
We denote sequences of tokens with the index history in the superscript and also expand the definition for clarity,
} 
\begin{split}
\qquad \qquad \bs_{1:t}^{\bom_{t}^{k}} &\coloneqq \bs_{1:t-1}^{\bom_{t-1}^{\omega_t^k}} + \listl s_{t}^{\omega_{t}^k}\listr  ~~\quad = \listl s_1^{\bom_{t-1}^{\omega_{t}^{k}}[1]},   ... , s_{t-1}^{\bom_{t-1}^{\omega_{t}^{k}}[t-1]}, s_{t}^{\omega_{t}^{k}} \listr  = \listl s_{1}^{\omega_{1}^{\rddots{\omega_{t}^{k}}}},   ... , s_{t-2}^{\omega_{t-2}^{\omega_{t-1}^{\omega_{t}^{k}}}}, s_{t-1}^{\omega_{t-1}^{\omega_{t}^{k}}}, s_{t}^{\omega_{t}^{k}} \listr 
 \\[1.25ex]
\bs_{1:t}^{\barbom_{t}^{k}} 
&\coloneqq \bs_{1:t-1}^{\bom_{t-1}^{k}} + \listl s_t^k \listr 
\end{split}
\label{eq:sequence}\raisetag{10pt} \tag{Sequence Notations} 
\end{align}
In the second line, we define $\bs_{1:t}^{\barbom_{t}^{k}}$ as a sequence of length $t$ which concatenates the  prefix $\bs_{1:t}^{\bom_{t-1}^{k}}$ with next token $s_t^k$.  The notation $\bs_{1:t}^{\barbom_{t}^{k}}$ represents partial sequences \textit{before} resampling.
By contrast, we will use the notation $\bs_{1:t}^{\bom_{t}^{k}}$ in the first line of \cref{eq:sequence} to refer to sequences \textit{after} resampling. 

Consider the sequence $\bs_{1:t}^{\barbom_{t}^{i}}$ in a particular index $i \in [1,K]$ \textit{before} resampling.
Resampling at time $t$ may
result in choosing $\omega_t^k = i$ for some $k$.   Using the first line, we see that $\bs_{1:t}^{\bom_t^k} = \bs_{1:t-1}^{\bom_{t-1}^{\omega_{t}^k}} + \listl s_{t}^{\omega_{t}^k}\listr = \bs_{1:t-1}^{\bom_{t-1}^{i}} + \listl s_{t}^{i}\listr$ 
for those indices such that $\omega_{t}^k = i$.   Indeed, this matches the definition of $\bs_{1:t}^{\barbom_{t}^{i}} = \bs_{1:t-1}^{\bom_{t-1}^{i}} + \listl s_{t}^{i}\listr$ in the second line (before resampling).   Thus, the indexing notation in \cref{eq:sequence} reflects resampling or cloning of sequences $\bs_{1:t}^{\barbom_{t}^{i}}$ into the indices such that $\omega_t^k = i$, which yields prefixes $\bs_{1:t}^{\bom_{t}^k}$ for the next step of sampling ($t+1$) in each index $k \in [1,K]$.

\begin{figure*}[!t]
\begin{minipage}{.38\textwidth}
    \centering
    \includegraphics[width=1\textwidth]{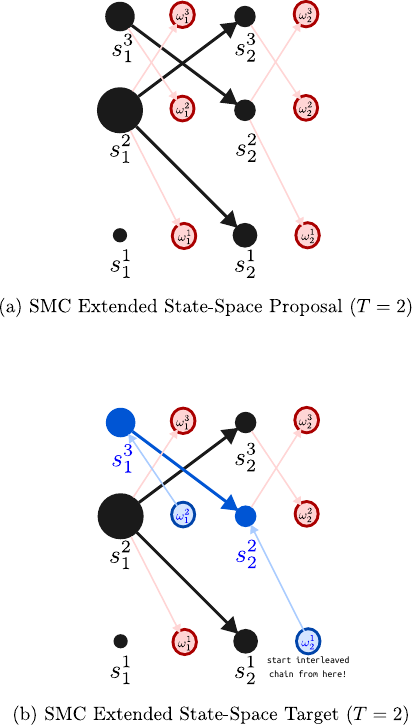}
    \caption{Graphical Models for extended state-space proposal and target distributions which result in the bidirectional SMC bounds.
    We show density evaluation in the proposal and target for a fixed set of $\{ s_t^k, \omega_t^k \}_{k=1, t= 1}^{3,2}$.  We let the size of the circles reflect the (hypothetical) importance weights of sequences $\bs_{1:t}^{\barbom_{t}^k}$ and $\omega_t^k$ reflect the (hypothetical) results of resampling with these weights.   In $(b)$, we assume fixed $j_{T+1}= j_3 = 1$ as in the text, with $\omega_2^1 = 2$.}
\label{fig:smc_bounds_app}
\vspace*{-.3cm}
\end{minipage}
\hfill
\begin{minipage}{.61\textwidth}
\begin{algorithm}[H]
\caption{(Twisted) SMC Target Sampling ($\smctgt$)\\(blue indicates changes from SMC proposal algorithm; $\ssb_{1:T}$ is an exact posterior sample)}
  \begin{algorithmic}%
    \STATE \textbf{SMC-TARGET}$\left(\base,q,\left\{ \psi_{t}\right\} _{t=1}^{T-1},\phi,K, \left\{ t_{r}\right\} _{t=1}^{R-1}, t_{0}=0,t_{R}=T,  {\color{blue}\ssb_{1:T}}\right)$:{\color{blue} 
    \STATE Initialize $j\sim\mathtt{uniform}\left(\left\{ 1,\ldots,K\right\}\right)$}
    \FOR{$t = 1,...,T$} 
        \FOR{$k = 1,...,K$}
            {\color{blue} 
            \IF{$k=j$}
                \STATE $s_{t}^{k}\leftarrow\ss_{t}$
            \ELSE
                {\normalcolor \STATE Sample $s_{t}^{k}\sim q\pv{s_{t}}{\s_{1:t-1}^{k}}$}
            \ENDIF}             
            \STATE $\s_{1:t}^{k}\leftarrow\mathtt{concat}\left(\s_{1:t-1}^{k},s_{t}^{k}\right)$
            \IF{$t < T$}
                \STATE $w_t^k \leftarrow \fr{\base\pv{s_{t}^{k}}{\bs_{1:t-1}^{k}}}{q\pv{s_{t}^{k}}{\bs_{1:t-1}^{k}}}\fr{\psi_{t}\left(\bs_{1:t}^{k}\right)}{\psi_{t-1}\left(\bs_{1:t-1}^{k}\right)}$ 
            \ELSE
                \STATE $w_t^k \leftarrow \fr{\base\pv{s_{t}^{k}}{\bs_{1:t-1}^{k}}}{q\pv{s_{t}^{k}}{\bs_{1:t-1}^{k}}}\fr{\phi\left(\bs_{1:t}^{k}\right)}{\psi_{t-1}\left(\bs_{1:t-1}^{k}\right)}$             
            \ENDIF            
        \ENDFOR
        \IF{$t \in \left\{ t_{r}\right\} _{r=1}^{R-1}$}{\color{blue}
            \STATE Sample $j\sim\mathtt{uniform}\left(\left\{ 1,\ldots,K\right\}\right)$}
            \FOR{$k = 1,...,K$}{\color{blue}
                \IF{$k=j$}
                    \STATE $\s_{1:t}^{k}\leftarrow\ssb_{1:t}$
                \ELSE
                    {\normalcolor \STATE  $\om_{t}^{k}\sim\mathtt{cat}\left({\bigg\{ {\fr{  \prod_{t = t_{r-1}+1}^{t_r} w_{t}^{i}}{\sum_{j=1}^{K}\prod_{t = t_{r-1}+1}^{t_r} w_{t}^{j}}}\bigg\} _{i=1}^{K}}\right)$
                    \STATE $\s_{1:t}^{k}\leftarrow\s_{1:t}^{\om_{t}^{k}}$}
                \ENDIF}    
            \ENDFOR     
        \ENDIF
    \ENDFOR
    \STATE \textbf{return}  {$\left\{ {\normalcolor \s_{1:T}^{k},  \prod_{t = t_{R-1}+1}^{T} w_{t}^{k}}\right\} _{k=1}^{K}$} \\
    \STATE \phantom{\textbf{return}} $\check{\cZ}^{\textsc{smc}}_{\sigma} =  \prod_{r=1}^R \frac{1}{K} \sum_{k=1}^K \prod_{t = t_{r-1}+1}^{t_r} w_t^k$
\label{alg:smc_target}
\end{algorithmic}
\end{algorithm}%
\end{minipage} 
\end{figure*}

\vheader
\paragraph{Extended State Space Proposal Distribution}
Sampling from the extended state space proposal corresponds to the procedure described in \cref{sec:smc_bg} and 
Alg. 1,
which we write as\footnote{Note that $\bom_{t}^{k}$, $\bs_{1:t}^{\barbom_{t}^{k}}$, and $\bs_{1:t}^{\bom_{t}^{k}}$ are deterministically constructed from $\{ s_t^k, \omega_t^k \}_{t=1,k=1}^{T,K} $ during sampling, and simply track the quantities to be calculated when evaluating densities.}
\begin{align}
\smcprop\left(\{ s_t^k, \omega_t^k \}_{t=1,k=1}^{T,K} \right) &\coloneqq
\prod_{t=1}^{T}\bh{\prod_{k=1}^K
q\pv{s_{t}^{k}}{\bs_{1:t-1}^{\bom_{t-1}^{k}}}\prod_{k=1}^K
q\pv{\om_{t}^{k}}{\s_{1:t}^{1:K}} } \label{eq:smc_prop} \tag{SMC Extended Proposal} \\
\text{where  $\forall ~ k$, }  ~~~ & q\pv{\om_{t}^{k}=i}{\s_{1:t}^{1:K}} \coloneqq \frac{\fr{\tilde{\pi}_{t}\left(\bs_{1:t}^{\barbom_{t}^{i}}\right)}{\tilde{\pi}_{t-1}\left(\bs_{1:t-1}^{\bom_{t-1}^{i}}\right)q\pv{s_{t}^{i}}{\bs_{1:t-1}^{\bom_{t-1}^{i}} }}}{\sum \limits_{\kappa=1}^{K}~\fr{\tilde{\pi}_{t}\left(\bs_{1:t}^{\barbom_{t}^{\kappa}}\right)}{\tilde{\pi}_{t-1}\left(\bs_{1:t-1}^{\bom_{t-1}^{\kappa}}\right)q\pv{s_{t}^{\kappa}}{\bs_{1:t-1}^{\bom_{t-1}^{\kappa}} }}} =\fr{w_{t}\left(\bs_{1:t}^{\barbom_{t}^{i}}\right)}{\sum_{\kappa=1}^{K}w_{t}\left(\bs_{1:t}^{\barbom_{t}^{\kappa}}\right)} \label{eq:omega_weights}
\end{align}
To recount the description above, note that the next token $s_t^i$ in index $i$ is sampled from the proposal, conditioned on the prefix $\bs_{1:t-1}^{\bom_{t-1}^{i}}$. 
We concatenate these tokens $\bs_{1:t}^{\barbom_{t}^{i}}=\bs_{1:t-1}^{\bom_{t-1}^{i}} + \listl s_t^i \listr $ ( \cref{eq:sequence}) and calculate importance weights.   We perform  resampling in each index $k$ according to
$q(\omega_t^k|\bs_{1:t}^{1:K})$, or \gls{SNIS} with the calculated weights (as in \cref{eq:incremental_resample}).
Finally, after resampling, we clone the sequence in the chosen index $\omega_t^k$ into index $k$ and proceed to sample $s_{t+1}^k$ with an prefix defined by the indices $\bom_{t}^{k} = \bom_{t-1}^{\omega_t^k } + \listl \omega_t^k \listr$.   

\textit{Worked Example: } To make this more concrete, we provide a worked example of the procedure in \cref{fig:smc_bounds_app} (a).  At step $t =1$, we 
resample the token $s_{t=1}^{k=2}$ twice (for indices $k=1,3$), with $\omega_1^1=\omega_1^3=2$ (and in index $2$, set $\omega_1^2 = 3$ to sample $s_1^3$).
We record the prefix history as, for example, $\bom_1^{1}= \bom_1^{3} = \listl \omega_1^1 \listr = \listl 2\listr $, which corresponds to $\bs_{1}^{\bom_1^{1}} = s_1^2$.

At step 2 in (a), we proceed to sample $s_{2}^{1} \sim q(s_{2}|\bs_{1}^{\bom_1^{1}} = \listl s_{1}^{2}\listr)$ (and similarly $s_2^3 \sim q(s_2|\bs_{1}^{\bom_1^{3}} = \listl s_1^2\listr )$), whereas $s_2^2 \sim q(s_2|\bs_{1}^{\bom_1^{1}} = \listl s_1^3\listr )$.  
We next evaluate the importance weights over three concatenated sequences: $\bs_{1}^{\barbom_1^{1}} = \listl s_1^2\listr+\listl s_2^1\listr $, $\bs_{1}^{\barbom_1^{2}} = \listl s_1^3\listr+\listl s_2^2\listr$, and $\bs_{1}^{\barbom_1^{3}} = \listl s_1^2\listr+\listl s_2^3\listr $, emphasizing that $s_2^k$ is the final token in each index.   Shown in the red circles, we proceed to resample $\omega_2^1 = 2, \omega_2^2=3, $ and $ \omega_2^3=2$ at step $t=2$.

Finally, we need to backtrack to obtain the history of the indices for the sequence 
to be cloned in resampling.  Namely, for index $1$ where $\omega_{t=2}^{k=1} = 2$,  %
we concatenate $\bom_{1}^{\omega_{2}^{1}} + \listl\omega_{2}^{1}\listr  = \bom_{1}^{2} + \listl2\listr = \listl 3, 2\listr \eqqcolon \bom_{2}^{1}$ (i.e. the history for  time 2, index 1).   This list of indices specifies the prefix $\bs_{1:2}^{\bom_{2}^{1}} = \listl s_1^3, s_2^2 \listr $ at step $t=3$, index $k=1$.
Similar reasoning applies for other indices.

\vheader
\paragraph{Extended State Space Target}
We are finally ready to specify the extended state space target distribution.   The crucial difference is to identify a single sequence $\bs_{1:T}^{\bom_{T}^{1}}$ of length $T$ (the choice of index 1 is arbitrary).   This sequence $\bs_{1:T}^{\bom_{T}^{1}}$ will be evaluated under the unnormalized target distribution $\tpitwist{T}(\s_{1:T}) = \tilde{\sigma}(\s_{1:T})$ or exactly sampled from the target $\bs_{1:T}^{\bom_{T}^{1}} \sim \sigma(\bs_{1:T})$ in the extended state space target distribution.

In particular, we begin by sampling a {full} sequence of indices $\{j_t\}_{t=1}^T$ uniformly at random
$\text{Pr}(  j_1, j_2, ... j_T ) = (1/K)^T$.  
Setting $\omega_T^1 = j_T$, we let $\omega_{t-1}^{j_{t}} = j_{t-1}$ for all $t$.   This implies the following, 
\begin{align}
\omega_T^1 = j_T, ~ \omega_{t-1}^{j_{t}} = j_{t-1}  \quad \implies \quad 
\bom_{T}^{1} &= \listl  j_1, j_2, ... j_T\listr, \qquad  \bom_{t-1}^{j_t} = \listl  j_1, j_2, ... j_{t-1} \listr, \label{eq:indices_special}   \\[1.5ex]
\text{and} \qquad 
\barbom_{t}^{j_t} &= \bom_{t}^{j_{t+1}} \label{eq:omega_arithmetic}
\end{align}
To show these identities, note that $\omega_{t-1}^{j_{t}} = j_{t-1}$ and \cref{eq:index} imply $\bom_{t-1}^{j_t} = \bom_{t-2}^{\omega_{t-1}^{j_t}} +\listl \omega_{t-1}^{j_t}\listr = \bom_{t-2}^{j_{t-1}} +\listl j_{t-1} \listr = \barbom_{t-1}^{j_{t-1}}$, which matches \cref{eq:omega_arithmetic}.  Applying this recursion again yields $\bom_{t-1}^{j_t} =  \bom_{t-3}^{j_{t-2}} + \listl j_{t-2} , j_{t-1} \listr ... = \listl j_1, j_2, ... j_{t-1} \listr$.   
Taken together, these notations allow us to interleave a true target sample in particular indices $\{ j_t \}$, guaranteeing that 
at least one target samples appears at each step. 

The extended state space target distribution differs from $\smcprop$ 
in its handling of this sequence, 
which identified as $\bs_{1:T}^{\bom_{T}^{1}}$ with prefixes $\bs_{1:t-1}^{\bom_{t-1}^{j_{t}}}$ using \cref{eq:indices_special}.   Noting that sampling $\{j_t\}_{t=1}^T$ amounts to specifying a particular set of $\omega_t^{k}$ as in \cref{eq:indices_special}-(\ref{eq:omega_arithmetic}), 
\begin{align}
\hspace*{-.2cm}
\tsmctgt\left(\{ s_t^k, \omega_t^k \}_{t=1,k=1}^{T,K} \right) =
\underbrace{
\text{Pr}( j_1, j_2, ... j_T ) 
}_{\left(\fr 1K\right)^{T}}~  
\tilde{\pi}_{T}\left(\s_{1:T}^{
\bom_{T}^{1}
}\right)
\prod_{t=1}^{T}\bh{\prod_{\substack{k=1\\
k\ne j_{t}
}
}^{K}
q\pv{s_{t}^{k}}{\bs_{1:t-1}^{\bom_{t-1}^{k}}}
\prod_{\substack{k=1\\
k\ne j_{t+1}
}
}^{K}q\pv{\om_{t}^{k}}{\s_{1:t}^{1:K}} } .
\raisetag{5pt}
\tag{SMC Extended Target}\label{eq:smc_ess_tgt}
\end{align}
 Note, the normalization constant
of $\tsmctgt(\boldsymbol{S})$ is equal to $\cZ_\sigma$ since only $\tilde{\pi}_T(\bs_{1:T}) = \tilde{\sigma}(\bs_{1:T})%
$ is unnormalized.

To describe ancestral sampling from \cref{eq:smc_ess_tgt},
we first sample $\{j_t\}_{t=1}^T$ uniformly as above, and place an exact target sequence in indices $\bs_{1:T}^{\bom_{T}^{1}}$ (or, equivalently, sequentially sample $s_t^{j_t} \sim \pi_t(s_t | \bs_{1:t-1}^{\bom_{t-1}^{j_t}})$.
At each step, the remaining $K-1$ indices $k \neq j_t$ are sampled from the proposal.    For resampling, we fix index $j_t$ to hold the exact sample and resample the remaining $K-1$ indices.    Note that the resampling weights $q\pv{\om_{t}^{k}}{\s_{1:t}^{1:K}} $ in \cref{eq:omega_weights} \textit{include} the exact sample, which may be cloned additional times into indices other than $j_t$ if its importance weights are high.  The procedure above 
simply ensures that \textit{at least} one exact sequence is sampled. See \cref{alg:smc_target} for the pseudocode of the algorithm. 

Note that \citet[Alg. 2]{maddison2017filtering} presents a different SMC extended state space target distribution than ours. In their work, $j_1=1$ and they sample $\mathbf{j}_{2:T+1}$, while in ours $j_{T+1}=1$ and we sample $\mathbf{j}_{1:T}$. However, both targets result in the same log partition function bounds.

\textit{Worked Example:}   In \cref{fig:is_smc} (c), we use blue circles and arrows to highlight the exact-sample indices $\bom_{T}^{1} = \listl j_1, j_2 \listr = \listl 3, 2 \listr $ and the target sequence $\bs_{1:T}^{\bom_{T}^{1}} = \listl s_1^3, s_2^2 \listr$.   %
Using the recursion $\omega_{t-1}^{j_t} = j_{t-1}$ with $j_{T+1} = j_3 = 1$ fixed, we may also express $\bom_{T}^{1} = \listl j_1, j_2 \listr = \listl 3, 2 \listr = \listl \omega_1^{2}, \omega_2^{1} \listr$.   At step 2, note the target sequence is sampled/evaluated an additional time in index 3. %

\paragraph{Importance Weights in the Extended State Space}

Assume we are given a fixed set of $\{ s_t^k, \omega_t^k \}_{t=1,k=1}^{T,K}$, which may be sampled from either $\smctgt(\boldsymbol{S})$ or $\smcprop(\boldsymbol{S})$.  
We proceed to show that the unnormalized importance weights in the extended state space simplify as follows.

\begin{lemma}\label{lemma:smc_weights}
For the extended state space target $\tsmctgt$ and proposal $\smcprop$ above, the simple importance weights in the extended state space become
\begin{align}
\frac{\tsmctgt}{\smcprop}\left(\{ s_t^k, \omega_t^k \}_{t=1,k=1}^{T,K} \right) &= \prod_{t=1}^{T} \frac{1}{K}  \sum \limits_{k=1}^{K}~ \fr{\tilde{\pi}_{t}\left(\bs_{1:t}^{\barbom_{t}^{k}}\right)}{\tilde{\pi}_{t-1}\left(\bs_{1:t-1}^{\bom_{t-1}^{k}}\right)q\pv{s_{t}^{k}}{\bs_{1:t-1}^{\bom_{t-1}^{k}} }}
= \prod_{t=1}^{T} \frac{1}{K} \sum_{k=1}^{K} w_t\left(\bs_{1:t}^{\barbom_{t}^{k}}\right)
\eqqcolon \prod_{t=1}^{T} \frac{1}{K} \sum_{k=1}^{K} w_t(\bs_{1:t}^{k})
\end{align}
which can be used to obtain unbiased $\cZ_\sigma$ estimators (\cref{eq:smc_ess}) or bounds on $\log \cZ_\sigma$ (\cref{prop:smc_proposition}, with proof below).
\end{lemma}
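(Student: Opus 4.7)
The plan is to compute the ratio $\tsmctgt/\smcprop$ directly by cancelling common factors, substituting the explicit form of the resampling distribution, and then identifying a telescoping product along the distinguished indices $\{j_t\}$.

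First I would form the ratio and cancel factors. The target $\tsmctgt$ and proposal $\smcprop$ share the same proposal densities $q(s_t^k\mid \bs_{1:t-1}^{\bom_{t-1}^k})$ for $k\neq j_t$ and the same resampling densities $q(\om_t^k \mid \bs_{1:t}^{1:K})$ for $k\neq j_{t+1}$, so these cancel. What remains is
\begin{equation*}
\frac{\tsmctgt}{\smcprop}
=\left(\tfrac1K\right)^T \tilde{\pi}_T\!\left(\bs_{1:T}^{\bom_T^{1}}\right)
\prod_{t=1}^{T}\frac{1}{q(s_t^{j_t}\mid\bs_{1:t-1}^{\bom_{t-1}^{j_t}})}\cdot\frac{1}{q(\om_t^{j_{t+1}}\mid\bs_{1:t}^{1:K})}\,.
\end{equation*}
By construction of $\tsmctgt$ we have $\om_t^{j_{t+1}}=j_t$, so substituting \cref{eq:omega_weights} gives
$1/q(\om_t^{j_{t+1}}\mid\bs_{1:t}^{1:K})=\bigl(\sum_{k=1}^K w_t(\bs_{1:t}^{\barbom_t^k})\bigr)/w_t(\bs_{1:t}^{\barbom_t^{j_t}})$.

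Next I would combine the denominator factors along index $j_t$ by invoking the definition of $w_t$. Using $w_t(\bs_{1:t}^{\barbom_t^{j_t}}) = \tilde\pi_t(\bs_{1:t}^{\barbom_t^{j_t}})/[\tilde\pi_{t-1}(\bs_{1:t-1}^{\bom_{t-1}^{j_t}}) q(s_t^{j_t}\mid\bs_{1:t-1}^{\bom_{t-1}^{j_t}})]$, the awkward proposal term cancels and each step contributes a ratio $\tilde\pi_{t-1}(\bs_{1:t-1}^{\bom_{t-1}^{j_t}})/\tilde\pi_t(\bs_{1:t}^{\barbom_t^{j_t}})$. The crucial step is to apply the identity $\barbom_t^{j_t}=\bom_t^{j_{t+1}}$ from \cref{eq:omega_arithmetic} so that the numerator at time $t$ and the denominator at time $t-1$ refer to the same sequence $\bs_{1:t-1}^{\bom_{t-1}^{j_t}}=\bs_{1:t-1}^{\barbom_{t-1}^{j_{t-1}}}$, setting up a telescoping cancellation.

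Finally, the telescoped product collapses to $1/\tilde\pi_T(\bs_{1:T}^{\bom_T^{j_{T+1}}})$; using the convention $j_{T+1}=1$ (and $\tilde\pi_0\equiv 1$ at the other end) this equals $1/\tilde\pi_T(\bs_{1:T}^{\bom_T^{1}})$, exactly cancelling the explicit $\tilde\pi_T$ factor we pulled out from $\tsmctgt$. What is left is $\prod_{t=1}^T \tfrac1K\sum_{k=1}^K w_t(\bs_{1:t}^{\barbom_t^k})$, as desired. I expect the only real obstacle to be bookkeeping with the index histories: verifying carefully that $\barbom_t^{j_t}=\bom_t^{j_{t+1}}$ holds at every $t$ (including the boundary cases $t=1$ and $t=T$ with the $j_{T+1}=1$ convention), and keeping straight which factors are shared between $\tsmctgt$ and $\smcprop$ versus which are unique to the target (the $\tilde\pi_T$ term and the missing proposal/resampling factors at the $j_t$ indices).
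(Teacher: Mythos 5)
Your proposal is correct and follows essentially the same route as the paper's proof: cancel the shared proposal and resampling factors, substitute the self-normalized resampling probability at index $j_t$ to produce the $\frac{1}{K}\sum_k w_t$ factors, cancel the proposal term via the definition of $w_t(\bs_{1:t}^{\barbom_t^{j_t}})$, and telescope the remaining $\tilde\pi_{t-1}/\tilde\pi_t$ ratios using $\barbom_t^{j_t}=\bom_t^{j_{t+1}}$ against the explicit $\tilde\pi_T(\bs_{1:T}^{\bom_T^1})$ factor. The boundary handling ($j_{T+1}=1$ and $\tilde\pi_0\equiv 1$) also matches the paper.
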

\begin{proof}
To evaluate the importance weights (with the goal of estimating $\cZ_\sigma$), we consider
\small 
\begin{align}
\frac{\tsmctgt}{\smcprop}\left(\{ s_t^k, \omega_t^k \}_{t=1,k=1}^{T,K} \right) &= \frac{{\left(\fr 1K\right)^{T}}~  
\tilde{\pi}_{T}\left(\s_{1:T}^{
\bom_{T}^{1}
}\right)
\prod_{t=1}^{T}\bh{\prod_{\substack{k=1\\
k\ne j_{t}
}
}^{K}
q\pv{s_{t}^{k}}{\bs_{1:t-1}^{\bom_{t-1}^{k}}}
\prod_{\substack{k=1\\
k\ne j_{t+1}
}
}^{K}q\pv{\om_{t}^{k}}{\s_{1:t}^{1:K}} } }{
\prod_{t=1}^{T}\bh{\prod_{k=1}^K
q\pv{s_{t}^{k}}{\bs_{1:t-1}^{\bom_{t-1}^{k}}}\prod_{k=1}^K
q\pv{\om_{t}^{k}}{\s_{1:t}^{1:K}} }
} \\
&\overset{(1)}{=} {\left(\fr 1K\right)^{T}}~  
\tilde{\pi}_{T}\left(\s_{1:T}^{
\bom_{T}^{1}
}\right)
\prod_{t=1}^{T}  \frac{1}{
{
q\pv{s_{t}^{j_t}}{\bs_{1:t-1}^{\bom_{t-1}^{j_t}}}
q\pv{\om_{t}^{j_{t+1}}}{\s_{1:t}^{1:K}} }
} \\
\intertext{\normalsize  where in $(1)$, note that terms in the denominator cancel except for the indices $\listl0, j_1, ... j_T\listr=\bom_{T}^{1}$.  
Recalling that $\omega_t^{j_{t+1}} = j_t$ from \cref{eq:omega_arithmetic}, 
we expand the resampling weights $q(j_t |\bs_{1:t}^{1:K})$ for the sequence indexed by $s_t^{j_t}$, $\bs_{1:t-1}^{\bom_{t-1}^{j_t}}$, and $\bs_{1:t-1}^{\barbom_{t}^{j_t}}$,
\small }
&\overset{(2)}{=} {\left(\fr 1K\right)^{T}}~  
\tilde{\pi}_{T}\left(\s_{1:T}^{
\bom_{T}^{1}
}\right)
\prod_{t=1}^{T}  \frac{\sum \limits_{k=1}^{K}~\fr{\tilde{\pi}_{t}\left(\bs_{1:t}^{\barbom_{t}^{k}}\right)}{\tilde{\pi}_{t-1}\left(\bs_{1:t-1}^{\bom_{t-1}^{k}}\right)q\pv{s_{t}^{k}}{\bs_{1:t-1}^{\bom_{t-1}^{k}} }}}{
{
\cancel{q\pv{s_{t}^{j_t}}{\bs_{1:t-1}^{\bom_{t-1}^{j_t}}}}
\fr{\tilde{\pi}_{t}\left(\bs_{1:t}^{\barbom_{t}^{j_t}}\right)}{\tilde{\pi}_{t-1}\left(\bs_{1:t-1}^{\bom_{t-1}^{j_t}}\right) \cancel{q\pv{s_{t}^{j_t}}{\bs_{1:t-1}^{\bom_{t-1}^{j_t}} }}}
}} %
\label{eq:final_step_smc_w}
\end{align}
\normalsize
Finally, we obtain a telescoping cancellation of $\tpi_t$ terms using the indexing identities in \cref{eq:indices_special}-(\ref{eq:omega_arithmetic}).   In particular, since 
$\barbom_{t}^{j_t}= \bom_t^{j_{t+1}}$ and $\barbom_{t-1}^{j_{t-1}}= \bom_{t-1}^{j_{t}}$ with $ \barbom_T^{j_T} = \bom_T^1$, we can simplify the terms in \cref{eq:final_step_smc_w} as
\scriptsize
\begin{align*}
\tpi_T\left(\bs_{1:T}^{\bom_T^1}\right) \prod_{t=1}^T \frac{\tpi_{t-1}\left(\bs_{1:t-1}^{\bom_{t-1}^{j_t}}\right)}{\tpi_t\left(\bs_{1:t}^{\barbom_{t}^{j_t}}\right)} = 
\tpi_T\left(\bs_{1:T}^{\barbom_T^{j_T}}\right)  \prod_{t=1}^T \frac{\tpi_{t-1}\left(\bs_{1:t-1}^{\barbom_{t-1}^{j_{t-1}}}\right)}{\tpi_t\left(\bs_{1:t}^{\barbom_{t}^{j_t}}\right)}  = 
\Cancel{ \tpi_T\left(\bs_{1:T}^{\barbom_T^{j_T}}\right)} \frac{\cancel{ \tpi_{T-1}\left(\bs_{1:T-1}^{\barbom_{T-1}^{j_{T-1}}}\right)} }{\Cancel{\tpi_T\left(\bs_{1:T}^{\barbom_T^{j_T}}\right)}}\frac{\ccancel{\tpi_{T-2}\left(\bs_{1:T-2}^{\barbom_{T-2}^{j_{T-2}}}\right) }}{\cancel{\tpi_{T-1}\left(\bs_{1:T-1}^{\barbom_{T-1}^{j_{T-1}}}\right)}}~...~\frac{1}{\Cancel{\tpi_{1}\left(\bs_{1:1}^{\barbom_{1}^{j_{1}}}\right) }} = 1 \nonumber
\end{align*}
\normalsize
using the assumption that $\tilde{\pi}_{0}(\cdot)=1$.
Simplifying from \cref{eq:final_step_smc_w}, the final unnormalized importance weights become
\small 
\begin{align}
\frac{\tsmctgt}{\smcprop}\left(\{ s_t^k, \omega_t^k \}_{t=1,k=1}^{T,K} \right) &= \prod_{t=1}^{T} \frac{1}{K}  \sum \limits_{k=1}^{K}~ \fr{\tilde{\pi}_{t}\left(\bs_{1:t}^{\barbom_{t}^{k}}\right)}{\tilde{\pi}_{t-1}\left(\bs_{1:t-1}^{\bom_{t-1}^{k}}\right)q\pv{s_{t}^{k}}{\bs_{1:t-1}^{\bom_{t-1}^{k}} }} 
= \prod_{t=1}^{T} \frac{1}{K} \sum_{k=1}^{K} w_t\left(\bs_{1:t}^{\barbom_{t}^{k}}\right)
\eqqcolon \prod_{t=1}^{T} \frac{1}{K} \sum_{k=1}^{K} w_t(\bs_{1:t}^{k}) \qquad \,\,
\end{align}
\normalsize
as desired, where we abbreviate the importance weights as $w_t(\bs_{1:t}^{k})$ for simplicity of notation.
Note that we also obtain an unbiased estimate of the partition function via
\small 
\[
\Z_{\si}=\EEE_{\smcprop(\boldsymbol{S})}\left[ \frac{\tsmctgt(\boldsymbol{S})}{\smcprop(\boldsymbol{S})}\right]  =  \EEE_{\smcprop(\boldsymbol{S})}\left[\prod_{t=1}^{T}\fr 1K\sum_{k=1}^{K}w_{t}\left(\s_{1:t}^{k}\right)\right]
\]
\normalsize
\end{proof}

\vheader
\bdsmc*
\begin{proof}
The proof follows directly from \citet{brekelmans2022improving} App. A, where it is shown that for 
$\sigma_{\text{ext}}(\boldsymbol{S}), \prop_{\text{ext}}(\boldsymbol{S})$ such that $\cZ_\sigma = \mathbb{E}_{\prop_{\text{ext}}(\boldsymbol{S})}[\frac{\tilde{\sigma}_{\text{ext}}(\boldsymbol{S})}{\prop_{\text{ext}}(\boldsymbol{S})}]$, we can construct lower and upper bounds on $\log \cZ_\sigma$ 
\begin{align}
\DKL\pV{\prop_{\text{ext}}(\boldsymbol{S})}{\sigma_{\text{ext}}(\boldsymbol{S})}+\EEE_{\prop_{\text{ext}}(\boldsymbol{S})}\left[ \log \frac{\tilde{\sigma}_{\text{ext}}(\boldsymbol{S})}{\prop_{\text{ext}}(\boldsymbol{S})} \right]= & \log\Z_{\si}=\EEE_{\sigma_{\text{ext}}(\boldsymbol{S})}\left[\log \frac{\tilde{\sigma}_{\text{ext}}(\boldsymbol{S})}{\prop_{\text{ext}}(\boldsymbol{S})}\right]-\DKL\pV{\sigma_{\text{ext}}(\boldsymbol{S})}{\prop_{\text{ext}}(\boldsymbol{S})} \\
\EEE_{\prop_{\text{ext}}(\boldsymbol{S})}\left[ \log \frac{\tilde{\sigma}_{\text{ext}}(\boldsymbol{S})}{\prop_{\text{ext}}(\boldsymbol{S})} \right]\leq & \log\Z_{\si} \leq \EEE_{\sigma_{\text{ext}}(\boldsymbol{S})}\left[\log \frac{\tilde{\sigma}_{\text{ext}}(\boldsymbol{S})}{\prop_{\text{ext}}(\boldsymbol{S})}\right] \label{eq:logz_ub_lb}
\end{align}
where the gap in the lower and upper bounds are $\DKL\pV{\prop_{\text{ext}}(\boldsymbol{S})}{\sigma_{\text{ext}}(\boldsymbol{S})}$ and $\DKL\pV{\sigma_{\text{ext}}(\boldsymbol{S})}{\prop_{\text{ext}}(\boldsymbol{S})}$, respectively.

Substituting our SMC probabilistic interpretation in \cref{eq:smc_prop} and \cref{eq:smc_ess_tgt}, along with the importance weights in \cref{lemma:smc_weights}, into \cref{eq:logz_ub_lb} yields the desired bounds in \cref{eq:smc_lb}.
\end{proof}

\vheader 
\paragraph{IWAE as a Special Case of our SMC Probabilistic Interpretation} Note that we recover \gls{IWAE} (or SIS over $K$ samples) from SMC with no intermediate resampling.   In particular, this corresponds to $\omega_t^k = k$ for all $t < T$, with importance weighting from resampling occurring at the final step $\prod_{k=1}^K q(\omega_T^k | \bs_{1:T}^{1:K})$.   This yields the $1/K$ average inside the log in the IWAE bounds (i.e., SMC with only one resampling step at $t=T$).  While the importance weights are crucial to construct the bound, note that `resampling' is not necessary at the final step and we may return all $K$ samples along with their weights.

Viewing \gls{IWAE} as a special case of our \gls{SMC} probabilistic interpretation is complementary to the interpretations in \citet{domke2018importance,brekelmans2022improving} and also provides upper bounds \citep{sobolev2019importance}.

\vheader
\section{Additional Experiment Details}
\vheader
\subsection{Common Details Across Experiments}
\label{sec:common_details}
\vheader

For all experiments, we use the Adam optimizer with $\beta_1, \beta_2 = \{0.9, 0.999\}$. 
We use custom implementations of SMC.
For PPO, we use the HuggingFace TRL PPO Trainer (\url{https://github.com/huggingface/trl/blob/main/trl/trainer/ppo_trainer.py}), modified slightly to accomodate our custom twist parameterizations, as described below. For other methods, we use Optax (Flax) and custom loss functions. We use HuggingFace models (\url{https://huggingface.co/models}) for the base $\base$ models and build custom layers on top of those. 

For the twist $\vartwist{t}(\bs_{1:t})$, we always parameterize $\log \vartwist{t}(\bs_{1:t})$ for numerical stability.
We choose random normal initializations centered at mean 0, with low variance,\footnote{We specifically use a form of Xavier initialization, taking the variance as ${\frac{2}{n_{\text{inputs}}  + n_{\text{outputs}}}}$.
} such that $\log \vartwist{t}(\bs_{1:t}) \approx 0, \vartwist{t}(\bs_{1:t}) \approx 1$ at the beginning of training, which means the initial sequences generated by the twist-induced proposal approximately come from the base model $\base$. All methods are initialized using the same random seeds, and thus start from the same parameter values. 
See \cref{sec:twist_parameterization} for additional discussion of choices for the twist parameterization.

For methods that directly learn a proposal (DPG and PPO), we could directly finetune a language model that outputs $q(\bs_{1:t})$. However, in order to ensure consistency in terms of model capacity and ease of learning compared to our twisted proposals, we instead have these proposal learning methods output a modifier $\log \vartwist{t}(\bs_{1:t})$ which is added to the base model log probability $\log \base(\bs_{1:t})$. Note that using random normal initializations centered at mean 0 with low variance, this scheme results in initial $q$ samples coming approximately from $\base$. %

For methods that can make use of exact posterior samples, when we have access to them (\cref{sec:infilling}, \cref{sec:exact_vs_appr}), we use them. This is straightforward for methods like DPG, SIXO, and our CTL (unless we have only a single sample, as we discuss for infilling in %
\cref{sec:details}
). For our RL twist learning, we found the best empirical performance training on a combination of $q$ and exact $\sigma$ samples when they were available (as opposed to just $q$ otherwise), and use those results. Similarly, for FUDGE, when exact $\sigma$ samples are available, we use them together with $\base$ samples.  

It is not straightforward to compare PPO versus other methods, because of the inner loop in PPO that repeats several clipped gradient steps on a given set of samples. This means that, for a constant number of samples, PPO makes more gradient updates than other methods, while for a constant number of gradient updates, PPO sees fewer samples. Ultimately we decided to normalize based on the number of samples seen; we consider each outer step (including a full PPO inner loop, in our experiments, 4 gradient steps) as a single ``gradient update.''  We make this choice  since sampling is the main bottleneck in terms of computational cost, and the number of inner PPO steps is a hyperparameter which we did not tune.

All of our experiments were run on a single GPU, usually on an NVIDIA A40 with 48G memory. All experiments took no longer than 9 wall-clock hours to run for a single learning method, with infilling (\cref{sec:infilling}) experiments taking longest; most other experiments took no longer than 4 hours.

\vheader 
\subsection{Choices of Twist Parameterization}
\label{sec:twist_parameterization}
\vheader 

The choice of parameterization for the twist $\log\vartwist{t}(\bs_{1:t})$ is a design decision, independent of our overall framework. While one could keep an entirely separate model for each $\log\vartwist{t}(\bs_{1:t})$, this 
is likely to be
memory-inefficient and 
learn slowly. Instead, we use a shared parameterization across $\bs_{1:t}$, in the same way that the base language model uses a single architecture to output probability distributions over tokens at each time step $t$. We lay out parameterization choices we considered below. 

\vheader 
\subsubsection{Linear Head}
\label{param:linear}
\vheader 

The simplest choice is to replace the linear head of the base language model with a new linear head, keep the base model fixed, and only train the linear head. This parameterization incurs very little additional computation cost compared to just using the base language model.   However, we found this to be 
capacity constrained in our experiments, achieving worse KL divergences than other parameterizations.

\vheader 
\subsubsection{MLP Head}
\label{param:NN}
\vheader 

Instead of a linear head, we consider a 3-layer fully connected neural network (MLP) with ReLU non-linearities as a head on top of the base language model. The base model is still kept fixed; only the MLP head is trained. This incurs more computational cost than a linear head (\cref{param:linear}), but the additional cost is still small 
relative to the cost of a forward pass through the base transformer model. We found this to generally perform well in our experiments, so we use it for the toxicity threshold experiment in \cref{sec:toxthresh} and sentiment in \cref{sec:sent}.

\vheader 
\subsubsection{Separate Transformer for the Twist}
\label{param:separate_linear}
\vheader 

We can also consider an entirely separate transformer that outputs only the twist value. That is, we copy the base model, and repurpose it to output a twist value $\log \vartwist{t}(\bs_{1:t})$ instead of logits for next-token probabilities. We then train the entire network end-to-end. This is significantly more computationally costly than the former approaches, and does not always do better than just an MLP head (\cref{param:NN}), so we generally do not recommend using this. Still, we found it to perform well in toxicity classification in \cref{sec:toxclass}, so we use it there.

\vheader 
\subsubsection{Separate Transformer for the Twist, with MLP Head}
\label{param:separate_NN}
\vheader 

This is similar to \cref{param:separate_linear}, except we also replace the final linear head with a MLP head as in \cref{param:NN}. The model
outputs $\log \vartwist{t}(\bs_{1:t})$ and is trained end-to-end. This is the most computationally costly approach outlined here, and is unnecessary for most of our settings. However, in infilling with 15 generated tokens (\cref{sec:infilling}) we found this parameterization to perform materially better than all others, particularly with DPG (\cref{app:dpg}), so we use it for all infilling experiments.

With both this parameterization and \cref{param:separate_linear}, we increase computation time by a factor of around 2 on the forward pass, and significantly increase memory and time usage on the backwards pass during training (though sampling is still the main time bottleneck). Whether this parameterization is worth the potential gain in performance depends on the desired use case. We emphasize that our overall framework is independent of the choice of parameterization. 

\vheader 
\subsection{Comments on Our Choices of Experiment Settings}
\label{app:expmt_choices}
\vheader 
Our settings and evaluation metrics in \cref{sec:experiments} are chosen to highlight our scientific findings.   
In particular, the toxicity threshold experiment in \cref{sec:toxthresh} demonstrates the improvement of SMC over SIS with the base model with \gls{CTL} learned twists.  In order to highlight this distinction, we have chosen a setting where it is \textit{extremely} difficult to draw samples satisfying the threshold using the base model $\base$ (see SIS/IWAE LB line in \cref{fig:toxthresh}).

However, twist-learning in the toxicity threshold setting presents challenges.   For approximate positive sampling and a thresholded target, all importance weights will be 0 if none of our $K$ samples meet the threshold.   As noted above, sampling from $\base$, 
or the SMC/twisted proposal for $\vartwist{t}(\bs_{1:t}) \approx 1$ at initialization, is extremely unlikely to draw samples meeting the threshold (i.e., within the support of the target) in the setting of \cref{sec:toxthresh}.    
As a result, initial iterations of twist learning receive no learning signal until a thresholded positive sample is drawn from the base model.

To avoid this difficulty for baselines comparisons in \cref{sec:experiment_kls}, we 
instead focused on settings with $\finaltwist$ given by probabilities.
Nevertheless, we note that there are no fundamental differences between the settings considered in \cref{sec:toxthresh} and \cref{sec:experiment_kls}.  Thus, we may also evaluate single-sample 
$\DKL\pV{\sigma}{q}$ and $\DKL\pV{q}{\sigma}$ in the setting of \cref{sec:toxthresh}, or plot $\log \cZ_{\sigma}$ bounds as a function of $K$ in for the settings in \cref{sec:experiment_kls}.

\subsection{Experiment-Specific Details}\label{sec:details}
\paragraph{Details for SIS and SMC Comparison (\cref{sec:toxthresh})}
We generate 10 output tokens, and train twists using \cref{sec:ebm_roger} with approximate positive sampling as discussed in \cref{sec:positive_ebm}.

Note that using $\sigma(\bs_{1:T}) \propto p_0(\bs_{1:T})  \mathbb{I}[\bs_{1:T} \in \cC]  $ where $\cC \coloneqq \{\bs_{1:T} ~ | r(\bs_{1:T}) \leq {\eta} \}$ directly runs into numerical issues for calculating 
$\log \zfinal$
when $\bs_{1:T} \notin \cC$ and $\mathbb{I}[\bs_{1:T} \in \cC] = 0$. 
We instead use $\epsilon + \mathbb{I}[\bs_{1:T} \in \cC]$ everywhere instead of $\mathbb{I}[\bs_{1:T} \in \cC]$, where $\epsilon = 10^{-16}$. In \cref{fig:toxthresh}, this yields a SIS/IWAE $\log \cZ_\sigma$ LB $\approx -36$ 
when no samples are drawn that fall in the set $\cC$.

We use an MLP head to parameterize the twist, as in \cref{param:NN}, with 768 hidden units per layer, matching the TinyStories model's embedding dimension. We use a batch size (number of SMC particles/samples) of 1000, with a learning rate of 0.0001, and train using CTL for a total of 5000 gradient updates. We did not tune hyperparameters because we found this setting to work well, and we are not comparing across different learning methods. 

For each point on each line on \cref{fig:toxthresh}, we run SIS or SMC 20 times, each with a different randomly selected true posterior sample for the upper bounds. The line shows the average value across these 20 runs, while the shaded area shows 95\% confidence intervals. 
See also \cref{sec:common_details} for details common across experiments.

\paragraph{Details for Toxicity (\cref{sec:toxclass})}
We generate 20 output tokens.
We parameterize the twist with a separate network as in \cref{param:separate_linear}. We use a batch size (number of SMC particles/samples) of 100, and train for a total of 2048 gradient updates. 
For each learning method, we used a coarse grid search over learning rates between 0.000001 and 0.001, using the best one found, which was usually 0.00003 or 0.0001.  
We run each learning method over 5 different random seeds, reporting the average KL divergence and 95\% confidence intervals over these 5 seeds.

For each KL divergence evaluation, we first get sandwich bounds on $\log \zfinal$ as laid out in \cref{sec:eval}, using the learned twists for the twisted proposal with $K=500$ samples. We find SIS/IWAE and SMC bounds to be similarly tight, so use SIS/IWAE for simplicity. We do this 4 times, providing 4 upper bound estimates and 4 lower bound estimates, and take the average midpoint as the $\log \zfinal$ estimate for each experiment. We then take the median (across all learning methods and seeds) of these estimates, and use that as our estimate of $\log \zfinal$. This is then used as a common value for the KL divergence across all methods and seeds, which controls for possible noise in $\log \zfinal$ bounds and ensures a fair comparison across methods. We generally have tight bounds (upper bound $\approx$ lower bound), which suggest our $\log \zfinal$ estimates are generally accurate, but note that any inaccuracies in estimating $\log \zfinal$ would only affect the absolute values of the KL divergences, not the relative differences among different learning methods.

We estimate expectations in \cref{eq:fwd_rev_KL_est} with 2000 samples from $\infevaldist$ and 2000 exact posterior samples for $\sigma$. With 2000 samples, our estimates have 95\% confidence intervals generally between 0.05 and 0.10, suggesting that our estimates of expectations are unlikely to be off by more than 0.10.  
The exact posterior samples were collected offline; such a large number of samples takes several hours to collect, and in practical settings, we would likely only be able to collect a much smaller number of samples. All our methods still apply with fewer exact posterior samples, but the variance in estimates will be higher.
See also \cref{sec:common_details} for details common across experiments.

\paragraph{Details for Sentiment (\cref{sec:sent})}
We generate 10 output tokens.
We parameterize the twist using an MLP head (\cref{param:NN}), with 1024 hidden units per layer, matching the GPT2Medium model's embedding dimension. Other details are the same as for toxicity above.
Collecting exact posterior samples is less time consuming in this case (less than an hour).
See \cref{sec:common_details} for common experimental details.

\paragraph{Details for Infilling (\cref{sec:infilling})}
We parameterize the twist using a separate transformer with an MLP head (\cref{param:separate_NN}), with 768 hidden units per layer (matching the TinyStories model's embedding dimension).
 We make the following adjustments to the forward pass of the language model for the conditional twist setting.
Instead of taking in only $\bs_{1:T}$, the model takes in both $\bs_{1:T}$ and $\bs_{T+1:T+c}$ and passes each separately through the body (everything except the head). Thus, $\bs_{T+1:T+c}$ can be seen as a second prompt. For $\bs_{T+1:T+c}$, we take the embeddings produced after the last conditioning token $s_{T+c}$ has been processed, broadcast it across time steps $1:T$, and pass that as additional input to the MLP head (concatenated with embeddings for $\bs_{1:T}$ at each $t \in 1...T$). This allows the MLP head to produce different output depending on the conditioning tokens.

Since we are in the conditional target distribution setting (\cref{sec:conditional_twist}), with  $o_T = \bs_{T+1:T+c}$, to compare across learning methods
using a single quantity,
we estimate
$\mathbb{E}_{o_T}\left[\DKL\pV{q_{o_T}}{\si_{o_T}}\right] := \E_{o_T} [\DKL\pV{\infevaldist(\bs_{1:T} | o_T)}{\sigma(\bs_{1:T} | o_T) 
}] 
$ and $\mathbb{E}_{o_T}\left[\DKL\pV{\si_{o_T}}{q_{o_T}}\right] := \E_{o_T} [\DKL\pV{\sigma(\bs_{1:T} | o_T)}{\infevaldist(\bs_{1:T} | o_T)}]
$
where $\E_{o_T}[\cdot] := \E_{\base(\bs_{T+1:T+c})}[\cdot] $ for %
infilling. 
Note that, %
\begin{align*}
\E_{o_T} [\DKL\pV{\infevaldist(\bs_{1:T} | o_T)}{\sigma(\bs_{1:T} | o_T) 
}] =\E_{o_T} \left[\EEE_{\infevaldist(\bs_{1:T} | o_T)}\left[\log\fr{\infevaldist(\bs_{1:T} | o_T)}{\base(\bs_{1:T})\phi(\bs_{1:T},o_T)}\right]\right] +{\E_{o_T} [\log\Z_{\si}(o_T)] }
\nonumber 
\\
\E_{o_T}[\DKL\pV{\sigma(\bs_{1:T}| o_T)
}{\infevaldist(\bs_{1:T}| o_T)}] =\E_{o_T}\left[\EEE_{\si(\bs_{1:T}| o_T)}\left[\log\fr{\base(\bs_{1:T} )\phi(\bs_{1:T}, o_T)}{\infevaldist(\bs_{1:T}| o_T)}\right]\right]-{\E_{o_T} [\log\Z_{\si}(o_T) ]}
\label{eq:aaa}
\end{align*} 
where for a fixed $o_T$, $\EEE_{\infevaldist(\bs_{1:T} | o_T)}\left[\log\fr{\infevaldist(\bs_{1:T} | o_T)}{\base(\bs_{1:T})\phi(\bs_{1:T},o_T)}\right]$
and $\EEE_{\si(\bs_{1:T}| o_T)}\left[\log\fr{\base(\bs_{1:T} )\phi(\bs_{1:T}, o_T)}{\infevaldist(\bs_{1:T}| o_T)}\right]$ may be evaluated as before, similar to the unconditional setting. In particular, for our experiments, we use 1-sample estimates of these expectations, as we have a single exact sample from $\si(\bs_{1:T}| o_T)$ by the BDMC trick (\cref{sec:conditional_twist}), and we choose to draw a single sample from the conditional proposal $\infevaldist(\bs_{1:T}| o_T)$. We average this over 2000 $o_T \sim \base(\bs_{T+1:T+c})$, approximating the outer expectation, giving us a 2000-sample estimate of 1-sample estimates for the first term in the right hand side of both equations above. With 2000 samples, our estimates have 95\% confidence intervals generally between 0.20 and 0.30.

Note that ${\E_{o_T} [\log\Z_{\si}(o_T) ]}$ is independent of the learning method or proposal $q$, unlike the first term we discussed above. Thus, in order to save computation and provide us with a more accurate estimate of ${\E_{o_T} [\log\Z_{\si}(o_T) ]}$, we estimate this term 
only once.
Specifically, we consider only the learning method with the lowest KL divergence (DPG), and use SIS/IWAE bounds. For each $o_T$, we estimate $\log\Z_{\si}(o_T) $ with $K=500$ samples, which gives us relatively tight sandwich bounds, again taking the midpoint as our estimate.
We average this over 1000 $o_T \sim \base(\bs_{T+1:T+c})$, giving us a 1000-sample estimate of ${\E_{o_T} [\log\Z_{\si}(o_T) ]}$, where each $\log\Z_{\si}(o_T)$ is itself estimated via 500 samples.

For negative sampling with contrastive twist learning (CTL) in this setting, we need 
at least 2 negative samples per set of conditioning tokens $o_T = \bs_{T+1:T+c}$ to perform \gls{SIS} reweighting; this is in contrast with other twist learning methods which can generate a single negative sample per $o_T$.
For the positive sample, we can use our single exact sample directly, or 
we can run the SMC upper bound sampling procedure (``Sampling from $\smctgt$ for SMC Upper Bounds'' section in \cref{sec:bi-SMC}) 
generate more approximate $\sigma$ samples using the given exact sample.   We find the latter to generally perform slightly better than the former, so adopt that for our infilling experiments.

We use a fixed batch size of 100 across all methods for training twists. To clarify the meaning of this batch size, for methods other than CTL, we have 100 draws of exact $\sigma$ samples, each for a different set of conditioning tokens $o_T = \bs_{T+1:T+c}$, so we train over 100 different $o_T$ at a time using 1 negative sample per $o_T$. For CTL, since we need at least 2 negative samples per $o_T$, 
we split the batch size of 100 across the number of different $o_T$ and the number of negative samples per $o_T$, as an additional hyperparameter. We use 25 $o_T$ with 4 negative samples per $o_T$ for the experiments in \cref{sec:infilling} and 10 $o_T$ with 10 negative samples per $o_T$ for the experiments in \cref{sec:infilling_2_1}. Controlling for batch size in this way is arguably disadvantageous for CTL compared to other learning methods, as it learns on a smaller number of $o_T$, but this controls for memory requirements, and we feel is more fair than controlling for the number of $o_T$ seen but allowing more negative samples for CTL relative to other methods.
We train for a total of 5500 gradient updates.
For each method, we used a coarse grid search over learning rates between 0.000001 and 0.001, using the best one found, which was usually 0.0001 or 0.00003. We run each learning method over 5 different random seeds, reporting the average KL divergence and 95\% confidence intervals over these 5 seeds.   
See also \cref{sec:common_details} for details common across experiments.

\vheader
\section{Additional Experimental Results}
\subsection{Qualitative Results}\label{app:qualitative}
\paragraph{Toxicity Controlled Generation when No Exact Posterior Samples are Available}
In \cref{sec:toxclass} we targeted $\sigma(\bs_{1:T}) \propto p_0(\bs_{1:T}) e^{\beta \log p(a|\bs_{1:T})}$ with $\beta = 1$. We can also target $\beta > 1$; higher $\beta$ produces a more peaked distribution of text that is more likely to be of class $a$. However, for $\beta \neq 1$ we can no longer generate exact posterior samples and thus cannot upper bound $\log \zfinal$. %
Our twist learning (\cref{sec:ebm_roger}) with approximate positive sampling (\cref{sec:positive_ebm}) can learn meaningful twists in this setting, which we illustrate with a qualitative example of a story (200 tokens upper limit) and $\beta = 10$:

\small
\textit{``Once upon a time, there was a little girl named Lily. She had a big thumb that she liked to suck on. One day, Lily went to the park to play with her friends. She was having so much fun until her thumb got stuck in her shoe. She tried to pull it out, but it hurt too much. \\
Lily started to cry and her friends tried to help her, but they couldn't get her thumb out either. She was scared and didn't know what to do. Her friends tried to help her, but they couldn't get it out either. Sadly, Lily had to go to the hospital and get a big bandage on her thumb. She couldn't play with her friends anymore. From that day on, Lily never went to the park again.''}
\normalsize

The story is coherent and follows the general style of the TinyStories base model, while having a high probability ($\approx$ 88\%) of being toxic according to the toxicity classifier, likely due to the presence of negative words such as `suck', `hurt', `cry', and `scared'. This supports the ability of our methods to control outputs based on the chosen posterior distribution.

\vheader 
\paragraph{Sentiment Controlled Generation when No Exact Posterior Samples are Available}
As above, 
we also consider $\sigma(\bs_{1:T}) \propto p_0(\bs_{1:T}) e^{\beta \log p(a|\bs_{1:T})}$, where $\beta > 1$, except now $p(a|\bs_{1:T})$ is based on the sentiment classifier in \cref{sec:sent}. 
In \cref{table:sentiment_beta100_results} we provide qualitative examples showing 20 tokens produced with twisted SMC with 500 particles, for $\beta = 100$, using twists trained with \cref{sec:ebm_roger}. 
These illustrate our framework's ability to learn reviews that embody each rating class.\footnote{The results are slightly incoherent; this is a result of the base GPT2-Medium model often being incoherent. Qualitatively, we find that these generations are more coherent than the uncontrolled ones from $\base$.}

\begin{table*}[t]
\vheader
\centering
\caption{\small Qualitative Results - Reviews Very Likely to be of a Particular Rating}
\vspace*{-.1cm}
\label{table:sentiment_beta100_results}
\resizebox{.8\textwidth}{!}{
\small 
\begin{tabular}{ll}
\toprule
  \textbf{\textbf{Class (Rating)}}   & \textbf{Most Text Generated Using Twisted SMC}  \\ \toprule
     1-star & ``I bought this sucker for my wife to use on her python that she sent me last year. It was terrible!''\\ \midrule
     2-star & ``I bought this throat raiser for combating dental caries. I didn't really like it. I didn't like''
     \\ \midrule
     3-star & ``I bought this a few months back, and I enjoyed it every time I held it. I'm giving 3 stars''
     \\ \midrule
     4-star & ``I bought this product a few months ago and have really enjoyed it. Only reason I gave it 4 stars is because''
     \\ \midrule
     5-star & ``I bought this phone recently, and I've been loving it! Gorgeous design, outstanding battery life, fantastic camera''
    \\ \bottomrule
\end{tabular}
}
\normalsize
\end{table*}

\begin{table*}[t]
\vheader
\caption{\small Qualitative Results - Infilling Examples }
\label{table:infilling_examples}
\vheader
\resizebox{\textwidth}{!}{
\small 
\begin{tabular}{llll}
\toprule
  \textbf{Proposal} & \textbf{Prompt ($\bs_{0}$)}   & \textbf{Generated Tokens ($\bs_{1:T}$)}  & \textbf{Conditioning Tokens ($\bs_{T+1:T+c}$)} \\ \toprule
     DPG & Once upon a time, there was a &  little girl named Mia. She had a big heart. Mia loved to help &  others and make them feel safe. Mia liked to
     \\ \midrule
     SIXO & Once upon a time, there was a & girl named Mia. Mia was very kind and compassionate. She always helped her & others and make them feel safe. Mia liked to 
     \\ \midrule
     CTL & Once upon a time, there was a & girl named Mia. She had a thin, pink dress. Mia liked to & others and make them feel safe. Mia liked to 
    \\ \bottomrule \\
\end{tabular}
}\vheader
\vspace*{-.2cm}
\normalsize
\end{table*}

\vheader 
\paragraph{Infilling}
In \cref{table:infilling_examples} we compare qualitative results on an example set of conditioning tokens for DPG, SIXO, and CTL (in that order, to reflect increasing KL divergence). The qualitative results correlate with the quantitative measures of KL divergence; the lowest KL divergence (DPG) corresponds to infilled tokens that respect grammar and the topic. SIXO, which has higher KL divergence, fails to respect grammar. CTL generates incorrect grammar and is less on-topic, corresponding to the highest KL divergence among these methods. 

\newcommand{\vfig}{\vspace*{-.5cm}}
\begin{table}[t]
\centering 
\caption{KL Divergences (averaged over conditioning tokens drawn from the base model) 
for Infilling Experiments (\cref{sec:infilling} ) with 2 Output Tokens and 1 Conditioning Token 
}
\vspace{-.3cm}
\label{table:res_plast2_1}
\resizebox{0.475\textwidth}{!}{
\small 
\begin{tabular}{cccc}
\toprule
Proposal $q_{o_T}$ & Twist Learning & $\mathbb{E}_{o_T}\left[\DKL\pV{q_{o_T}}{\si_{o_T}}\right]$ & $\mathbb{E}_{o_T}\left[\DKL\pV{\si_{o_T}}{q_{o_T}}\right]$ \tabularnewline
\midrule\midrule
Twisted & Contrastive & $0.47 \pm 0.10$ & $0.25 \pm 0.01$ \tabularnewline  \midrule
Twisted & RL & $0.42 \pm 0.10$ & $0.15 \pm 0.01$ \tabularnewline  \midrule
Twisted & SIXO & $0.47 \pm 0.11$ & $0.25 \pm 0.02$ \tabularnewline  \midrule
Twisted & FUDGE & $2.62 \pm 0.33$ & $0.90 \pm 0.02$ \tabularnewline  \midrule \midrule
DPG & -- & \boldmath{$0.16 \pm 0.07$} & \boldmath{$0.14 \pm 0.01$} \tabularnewline  \midrule
PPO & -- & $0.52 \pm 0.04$ & $1.09 \pm 0.34$ \tabularnewline
\bottomrule
\end{tabular}%
}
\normalsize
\end{table}

\vheader 
\subsection{Infilling with Fewer Tokens}
\label{sec:infilling_2_1}
\vheader 

We consider the same setting as \cref{sec:infilling} but only generating 2 tokens, conditioned on 1 token. 
We show KL divergence evaluations in \cref{table:res_plast2_1}. Our evaluation reveals interesting differences among learning methods, even in this easier setting where most methods achieve low KL divergence in both directions. 
DPG and RL learns best, while FUDGE learns notably slower. PPO suffers on $\DKL\pV{\si}q$, though this may be unsurprising since PPO does not make use of exact $\sigma$ samples. 

\vheader 
\subsection{Approximate vs. Exact Posterior Sampling}
\label{sec:exact_vs_appr}
\vheader
In our toxicity and sentiment experiments, we train using approximate $\sigma$ samples to reflect the more common real-world setting where 
the amount of exact samples needed for training are not available. 
However, here we run an additional ablation experiment for insight into 
the effect of positive versus approximate sampling. 
We use rejection sampling (\cref{sec:positive_ebm}) to generate exact posterior samples for training. This is much slower than generating approximate samples, so is not a practical strategy for training; we investigate this solely for understanding.

We provide a comparison of KL divergences (evaluated the same way as in the main paper) when training using exact versus approximate $\sigma$ samples for a selection of methods that performed well in our previous experiments and are able to make use of $\sigma$ samples. Toxicity (\cref{sec:toxclass}) results are in \cref{table:exact_vs_appr_toxc} and sentiment (\cref{sec:sent}) results are in \cref{table:exact_vs_appr_sent}. 
The first two columns of KL divergences are for exact $\sigma$ samples. The next two are for training on the same number of samples, but using approximate positive sampling (\cref{sec:positive_ebm}). 
Overall, for a constant number of samples, having exact $\sigma$ samples improves performance for most methods. 
Note however that there is an additional time cost required for rejection sampling to generate exact samples, so the exact $\sigma$ training requires significantly more wall-clock time for any given number of samples. 

We also plot the single-sample KL divergence in both directions as a function of training time for exact vs. approximate sampling, on toxicity and sentiment experiments, %
in \cref{fig:exact_vs_appr}.
The approximate sampling results match those
in the main paper (with different colors). The exact $\sigma$ sample results cut off earlier because the time cost required for rejection sampling reduces the number of gradient updates that can be made for a given amount of wall-clock time.

\begin{table}[ht]
\centering 
\caption{KL Div. for Toxicity Experiments (\cref{sec:toxclass}), comparing exact $\sigma$ samples versus approximate positive sampling. %
}
\vspace{-.2cm}
\label{table:exact_vs_appr_toxc}
\resizebox{.65\textwidth}{!}{
\small 
\begin{tabular}{cccccc}
\toprule
&  & \multicolumn{2}{c}{Exact $\sigma$ Samples} &  \multicolumn{2}{c}{Same \# of Approx. $\sigma$ Samples}
\tabularnewline
Proposal $q$ & Type of Twist Learning & $\DKL\pV q{\si}$ & $\DKL\pV{\si}q$ & $\DKL\pV q{\si}$ & $\DKL\pV{\si}q$ 
\tabularnewline
\midrule\midrule
Twisted & Contrastive & $2.54 \pm 0.02$ & $2.68 \pm 0.09$
 & $2.99 \pm 0.18$ & $3.22 \pm 0.09$ \tabularnewline  \midrule
Twisted & RL & $3.23 \pm 0.10$ & $3.24 \pm 0.04$
 & $3.48 \pm 0.15$ & $3.49 \pm 0.13$ \tabularnewline  \midrule
Twisted & SIXO & $2.37 \pm 0.06$ & $2.52 \pm 0.05$
 & $2.70 \pm 0.17$ & $3.05 \pm 0.22$ \tabularnewline  \midrule
DPG & -- & $1.51 \pm 0.01$ & $1.50 \pm 0.01$
 & $2.35 \pm 0.15$ & $2.48 \pm 0.10$ \tabularnewline
\bottomrule
\end{tabular}%
}
\normalsize
\end{table}

\vfig
\begin{table}[ht]
\centering
\caption{KL Div. for Sentiment Experiments (\cref{sec:sent}), comparing exact $\sigma$ samples versus approximate positive sampling.
}
\vspace{-.2cm}
\label{table:exact_vs_appr_sent}
\resizebox{0.7\textwidth}{!}{
\small 
\begin{tabular}{cccccc}
\toprule
&  & \multicolumn{2}{c}{Exact $\sigma$ Samples} &  \multicolumn{2}{c}{Same \# of Approx. $\sigma$ Samples}
\tabularnewline
Proposal $q\left(\s\right)$ & Type of Twist Learning & $\DKL\pV q{\si}$ & $\DKL\pV{\si}q$ & $\DKL\pV q{\si}$ & $\DKL\pV{\si}q$ 
\tabularnewline
\midrule\midrule
Twisted & Contrastive & $0.71 \pm 0.02$ & $0.64 \pm 0.02$
 & $0.70 \pm 0.02$ & $0.60 \pm 0.01$ \tabularnewline  \midrule
Twisted & RL & $1.28 \pm 0.05$ & $0.94 \pm 0.02$
 & $2.09 \pm 0.08$ & $1.76 \pm 0.07$ \tabularnewline  \midrule
Twisted & SIXO & $0.68 \pm 0.02$ & $0.60 \pm 0.01$
 & $0.86 \pm 0.02$ & $0.68 \pm 0.01$ \tabularnewline  \midrule
DPG & -- & $0.70 \pm 0.02$ & $0.58 \pm 0.01$
 & $0.89 \pm 0.03$ & $0.69 \pm 0.00$
 \tabularnewline
\bottomrule
\end{tabular}%
}
\normalsize
\end{table}
\vfig

\begin{figure}[t]
\begin{subfigure}{0.49\textwidth}
\centerline{\includegraphics
[width=\textwidth]
{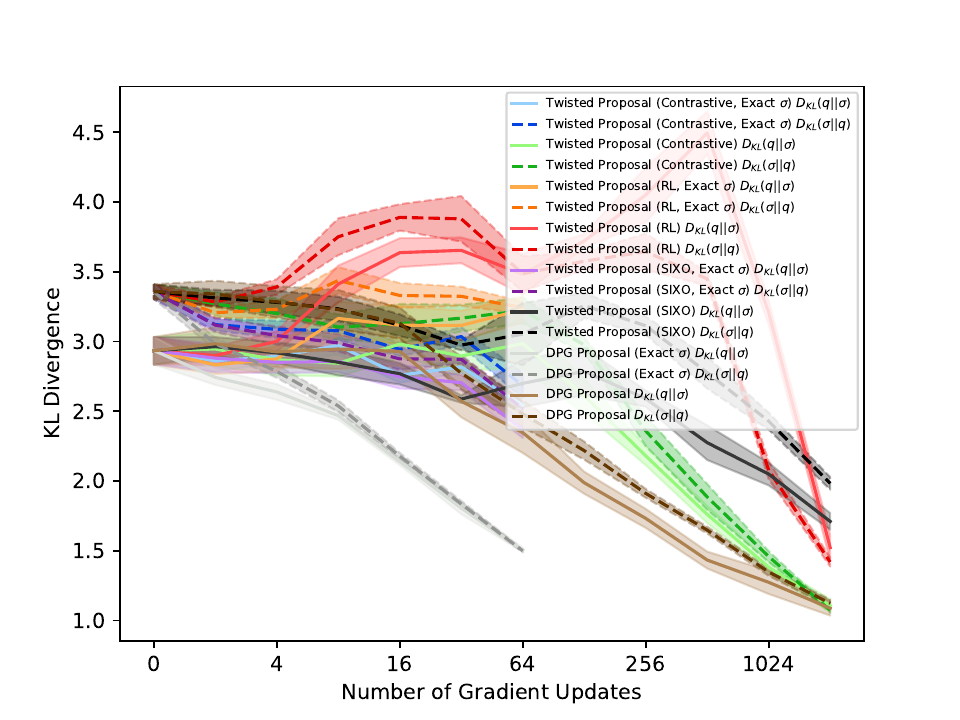}}
\caption{Toxicity (\cref{sec:toxclass})}
\end{subfigure}
\begin{subfigure}{0.49\textwidth}
    \centerline{\includegraphics
[width=\textwidth]
{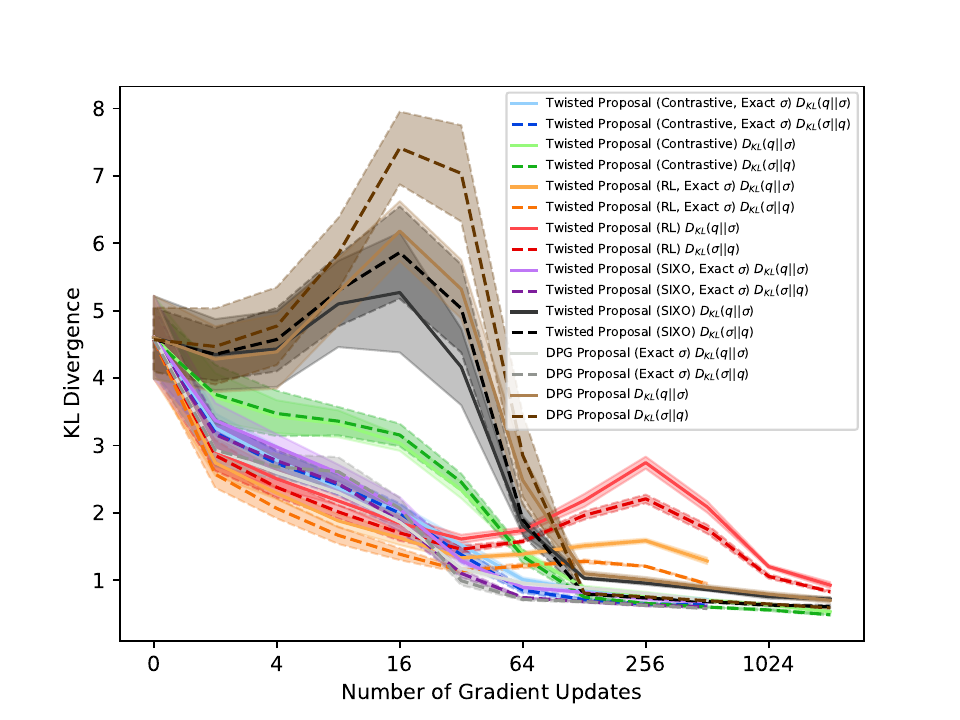}}
\caption{Sentiment (\cref{sec:sent})}
\end{subfigure}
\vspace*{-.2cm}
\caption{Training comparison for Exact versus Approximate $\sigma$ (positive) sampling, as described in \cref{sec:exact_vs_appr}. Having access to exact target samples makes learning lead to lower KL divergences in a more reliable manner.
}\label{fig:exact_vs_appr}
\end{figure}

\end{document}